\pgfplotsset{width=10cm,compat=1.9}
\DeclareMathOperator*{\argmin}{argmin}
\newtheorem{theorem}{Theorem} 
\newtheorem{lemma}[theorem]{Lemma}
\newtheorem{definition}[theorem]{Definition}
\newtheorem{proposition}[theorem]{Proposition}
{Assumption} 
\newtheorem{remark}[theorem]{Remark}
\newcommand{\ZZ}{\mathcal{Z}}
\newcommand{\eps}{\varepsilon}
\newcommand{\wt}{\widetilde}
\newcommand{\VV}{\mathcal{V}}
\newcommand{\ptv}{P_{\theta_v}}
\newcommand{\ptvprime}{P_{\theta_{v'}}}
\newcommand{\thetahat}{\widehat{\theta}}
\newcommand{\thetav}{\theta_v}
\newcommand{\PP}{\mathcal{P}}
\crefname{algorithm}{Algorithm}{Algorithms}
\crefname{assumption}{Assumption}{Assumptions}
\crefname{equation}{}{}
\crefname{figure}{Fig.}{Figs.}
\crefname{table}{Table}{Tables}
\crefname{section}{Section}{Sections}
\crefname{subsection}{Section}{Sections}
\crefname{theorem}{Theorem}{Theorems}
\crefname{lemma}{Lemma}{Lemmmas}
\crefname{proposition}{Proposition}{Propositions}
\crefname{definition}{Definition}{Definitions}
\crefname{corollary}{Corollary}{Corollaries}
\crefname{remark}{Remark}{Remarks}
\crefname{example}{Example}{Examples}
\crefname{appendix}{Appendix}{Appendices}
\newcommand{\XX}{\mathcal{X}}
\newcommand{\FFmu}{\mathcal{F}_{\mu, L, D}}
\newcommand{\FFmuz}{\mathcal{F}_{\mu = 0, L, D}}
\newcommand{\WW}{\mathcal{W}}
\newcommand{\ws}{w^{*}}
\newcommand{\pen}{\frac{d}{n \eps}}
\newcommand{\MMpr}{\mathcal{M}_{\text{priv}}}
\newcommand{\MMpu}{\mathcal{M}_{\text{pub}}}
\newcommand{\hf}{\widehat{F}_{X}}
\newcommand{\expec}{\mathbb{E}}
\newcommand{\bbrho}{\mathbb{B}_{\rho^*(\VV)}}
\newcommand{\psucc}{P_{\text{succ}}}
\newcommand{\dkl}{D_{\text{KL}}}
\newcommand{\wtq}{\widetilde{Q}}
\newcommand{\alg}{\mathcal{A}}
\newcommand{\ERlower}{\mathbb{E}\hf(\alg(X)) - \hf^*}
\newcommand{\xon}{x_{1:n}}
\newcommand{\xoni}{x_{1:n}^i}
\newcommand{\pu}{\text{PrivUnit}}
\newcommand{\sphere}{\mathbb{S}}
\newcommand{\uniform}{\text{Unif}}
\definecolor{Periwinkle}{rgb}{0.94823529, 0.31411765, 0.59411765}
\newcommand{\RR}{\mathcal{R}}
\newcommand{\Al}{\mathcal{A}}
\newcommand{\wpr}{w_{\text{priv}}}
\newcommand{\kpr}{K_{\text{priv}}}
\newcommand{\kpu}{K_{\text{pub}}}
\newcommand{\npr}{n_{\text{priv}}}
\newcommand{\npu}{n_{\text{pub}}}
\newcommand{\emprisk}{\mathcal{M}_{\text{emp}}(\epsilon, \npr, n, d)}
\newcommand{\popLDPrisk}{\mathcal{M}^{\text{loc}}_{\text{pop}}(\epsilon, \npr, n, d)}
\newcommand{\popDPrisk}{\mathcal{M}_{\text{pop}}(\epsilon, \delta, \npr, n, d)}
\newcommand{\popDPriskpure}{\mathcal{M}_{\text{pop}}(\epsilon, \delta = 0, \npr, n, d)}
\newcommand{\xpu}{X_{\text{pub}}}
\newcommand{\xpr}{X_{\text{priv}}}
\newcommand{\lap}{\text{Lap}}
\newcommand{\var}{\text{Var}}
\newcommand{\epsemi}{\eps\text{-semi-DP}}
\newcommand{\pr}{\mathbb{P}}
\newcommand{\BB}{\mathcal{B}}
\newcommand{\hfpu}{\widehat{F}_{pub}}
\newcommand{\hfpr}{\widehat{F}_{priv}}
\newcommand{\wpu}{w^*_{pub}}
\newcommand{\ERMrisk}{\mathcal{R}_{\text{ERM}}(\eps, \npr, n, d, L, D, \mu = 0)}
\newcommand{\ERMriskmu}{\mathcal{R}_{\text{ERM}}(\eps, \npr, n, d, L, D, \mu)}
\newcommand{\SCOriskmu}{\mathcal{R}_{\text{SCO}}(\eps, \delta, \npr, n, d, L, D, \mu)}
\newcommand{\SCOrisk}{\mathcal{R}_{\text{SCO}}(\eps, \delta, \npr, n, d, L, D, \mu = 0)}
\newcommand{\SCOriskdelz}{\mathcal{R}_{\text{SCO}}(\eps, \delta=0, \npr, n, d, L, D, \mu = 0)}
\newcommand{\SCOriskdelzmu}{\mathcal{R}_{\text{SCO}}(\eps, \delta=0, \npr, n, d, L, D, \mu)}
\newcommand{\Renyi}{R\'enyi }
\newcommand{\trunc}{\lfloor q - \theta \rceil_{\eta}}
\newcommand{\trunca}{\lfloor \alg(X) - \theta \rceil_{\eta}}
\newcommand{\pmone}{\{\pm1\}^d}
\newcommand{\pmones}{\{\pm1\}}
\newcommand{\ptheta}{P_\theta}
\newcommand{\attack}{\mathcal{I}}
\newcommand{\duchisample}{\mathcal{M}_{\text{Duchi}}}
\newcommand{\duchiset}{\widetilde{\mathcal{M}}_{\text{Duchi}}}
\newcommand{\semiduchi}{\widetilde{\alg}_{\text{semi-Duchi}}}
\icmltitlerunning{Optimal Differentially Private Model Training with Public Data}
\begin{document}

\twocolumn[
\icmltitle{Optimal Differentially Private Model Training with Public Data}

\icmlsetsymbol{equal}{*}

\begin{icmlauthorlist}
\icmlauthor{Andrew Lowy}{yyy}
\icmlauthor{Zeman Li}{xxx}
\icmlauthor{Tianjian Huang}{xxx}
\icmlauthor{Meisam Razaviyayn}{xxx}
\end{icmlauthorlist}

\icmlaffiliation{xxx}{Department of Industrial \& Systems Engineering, University of Southern California, Los Angeles, CA, USA}
\icmlaffiliation{yyy}{University of Wisconsin-Madison, Wisconsin Institute of Discovery, Madison, WI, USA}

\icmlcorrespondingauthor{Andrew Lowy}{alowy@wisc.edu}

\icmlkeywords{Machine Learning, ICML}

\vskip 0.3in
]

\printAffiliationsAndNotice{} %

\begin{abstract}
Differential privacy (DP) ensures that training a machine learning model does not leak private data. 
In practice, we may have access to auxiliary public data that is free of privacy concerns. 
In this work, we assume access to a given amount of public data and settle the following fundamental open questions: \textit{1. What is the optimal (worst-case) error of a DP model trained over a private data set while having access to side public data?
2. How can we harness public data to improve DP model training in practice?} We consider these questions in both the local and central models of pure and approximate DP. To answer the first question, we prove tight (up to log factors) lower and upper bounds that characterize the optimal
error rates of three fundamental problems: mean estimation, empirical risk minimization, and stochastic convex optimization.  
We show that the optimal error rates can be attained (up to log factors) by either discarding private data and training a public model, or treating public data like it is private  and using an optimal DP algorithm. To address the second question, we develop novel algorithms that are ``even more optimal'' (i.e. better constants) than the asymptotically optimal approaches described above. For local DP mean estimation, our algorithm is \ul{optimal including constants}. Empirically, our algorithms show benefits over the state-of-the-art.  %
\end{abstract}

\vspace{-0.4cm}
\section{Introduction}
\vspace{-0.2cm}
Training machine learning models on people's data can leak sensitive information, violating their privacy~\cite{inversionfred, shokri2017membership, carlini2021extracting}. \textit{Differential Privacy (DP)} prevents such leaks by providing a rigorous guarantee that no attacker can learn too much about any individual's data~\cite{dwork2006calibrating}. DP has been successfully deployed by various companies~\cite{apple-differential-privacy, thakurta2017learning,rappor14,ding2017collecting}, and by government agencies
~\cite{census-differential-privacy}. However, a major hindrance to more widespread adoption of DP  is that DP-trained models are less accurate than their non-private counterparts.

Leveraging \textit{public data}---that is free of privacy concerns---appears to be a promising and practically important avenue for closing the accuracy gap between DP and non-private models~\cite{papernotsemi, avent2017blender, feldman2018privacy, amid2022public}. For example, large language models (LLMs) are often pre-trained on public data and fine-tuned on private data~\cite{llmkerrigan2020differentially,llmli2021large,llmyu2021differentially}. 
Public data may be provided by people who volunteer (e.g. product developers or early testers)~\cite{church2005personal, feldman2018privacy} or sell their data. Data that is generated synthetically~\cite{torkzadehmahani2019dp, vietri2020new, boedihardjo2022covariance, he2023algorithmically} or released through a legal process~\cite{klimt2004enron} may serve as additional sources of public data.

The power and limitations of public data vary depending on the particular learning problem and loss function/hypothesis class. To calibrate the effectiveness of a public-data-assisted DP algorithm, we can compare it against two \textit{na\"ive baselines}: 1) ``throw away'' the private data and run an optimal non-private algorithm on the public data; 2) use an optimal DP algorithm on the full data set, treating the public data like it is private data. Some works have identified problems where significant improvements over the na\"ive approaches are possible. For example, \citet{alon2019limits} show that for agnostic PAC learning with a hypothesis class of finite VC-dimension, it is possible to achieve asymptotically smaller sample complexity than the na\"ive baselines. \citet{bassily2020privatequery} show a similar result for private query release with finite VC-dimension. 
On the other hand, for certain problems  it is impossible to do better than the na\"ive approaches: e.g., releasing binary decision stumps~\cite{bassily2020privatequery}.
Understanding what improvements, if any, over the na\"ive approaches are possible for other problems (e.g. optimization) and function classes is interesting.

This work considers DP \textit{model training} with side access to public data: given a loss function, %
find model parameters to approximately minimize the expected training or test loss. We consider \textit{empirical risk minimization} (ERM) and \textit{stochastic convex optimization} (SCO), which correspond to minimizing training loss and expected test loss, respectively. For population mean estimation and SCO, we assume access to in-distribution public data. For %
ERM, the public data may be out-of-distribution. We answer a fundamental question: 
\begin{center}
\vspace{-0.2cm}
\noindent\fbox{
    \parbox{0.95\linewidth}{
    \vspace{-0.cm}
\textbf{Question 1.} What is the 
optimal (minimax) error
of DP model training with side access to public data? Is it possible to achieve smaller error than the na\"ive baselines?
}}  \vspace{-0.2cm}
    \end{center}

\paragraph{Contribution 1: Limitations of Public Data for DP Model Training} To answer \textbf{Question 1}, we characterize the optimal minimax error (up to constants or logarithms)
of \textit{semi-DP}~\cite{beimel2013private,alon2019limits} algorithms---algorithms that are DP w.r.t. private data, but not necessarily DP w.r.t. public data (Definition~\ref{def: semiDP}). \textit{We provide tight 
lower and upper bounds for three fundamental  training problems}: mean estimation of bounded random variables\footnote{Our $\eps$-semi-DP analysis extends to unbounded/heavy-tailed distributions with bounded $k$-th order moment.}, ERM and SCO with Lipschitz convex functions.\footnote{Our results for ERM also cover non-convex loss functions.}  
We consider both the \textit{local}~\cite{whatcanwelearnprivately, duchi13} and \textit{central}~\cite{dwork2006calibrating} models of \textit{pure} ($\delta = 0$) and \textit{approximate} ($\delta > 0$) semi-DP. We prove nine sets of lower and upper bounds: see~\cref{table: summary of minimax rates approx,table: summary of minimax rates,app table: summary of pure rates}.
Our lower bounds imply that \textit{it is impossible to obtain asymptotic improvements over the na\"ive approaches for semi-DP model training in the worst case.}

\begin{figure*}[h!]
    \vspace{-0.1in}
    \begin{center}
        \includegraphics[width = 0.95\textwidth]{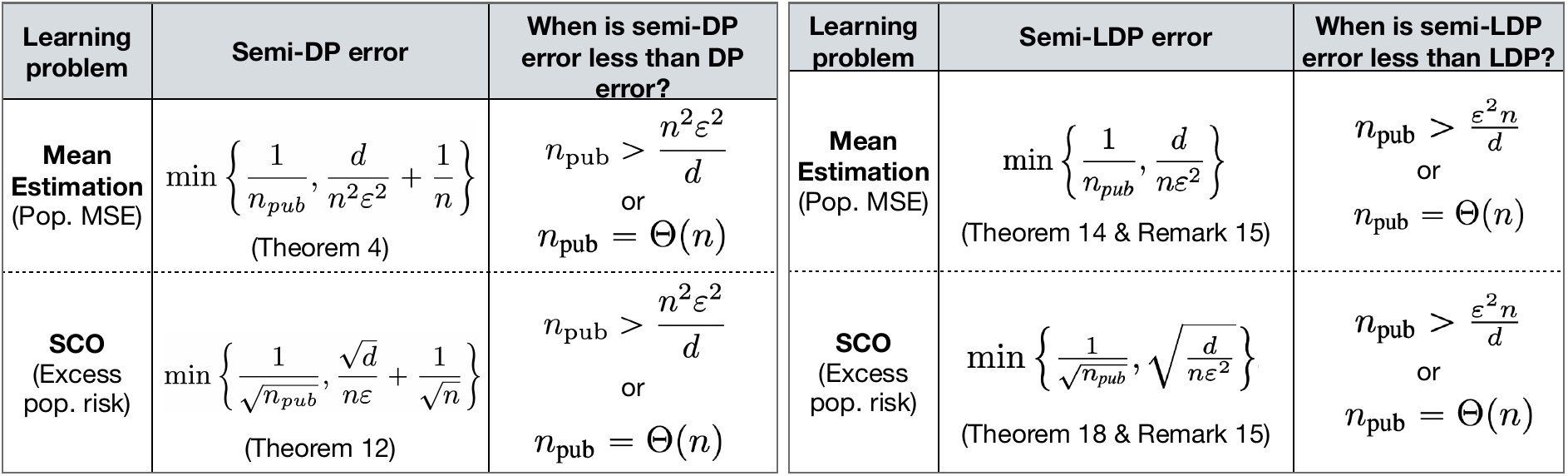}
         \end{center}
      \vspace{-0.15in}
     \caption{\footnotesize 
Minimax optimal error rates for central $(\eps, \delta)$-semi-DP (up to logs) and (local) $(\eps, \delta)$-semi-LDP. 
$n = \npr + \npu$, where $\npr$ ($\npu$) denotes the number of private (public) samples. 
Dependence on $\delta$, range and Lipschitz parameters, constraint set diameter omitted. 
See Appendix for strongly convex SCO results.
}\label{table: summary of minimax rates approx}
\end{figure*}

\begin{figure}
  \centering
  \includegraphics[width = 
  0.46
  \textwidth]{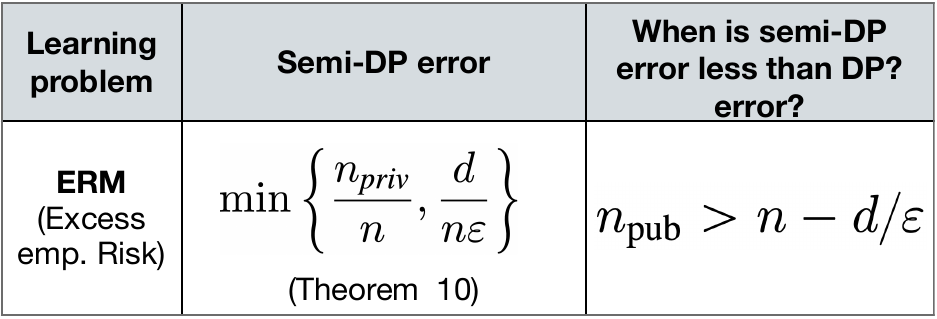}
  \vspace{-.4cm}
  \caption{ \footnotesize
Minimax optimal error rates for $\eps$-semi-DP ERM. See Table~\ref{app table: summary of pure rates} in Appendix for more $\eps$-semi-(L)DP results (e.g. SCO). 
}\label{table: summary of minimax rates}
 \vspace{-.18in}
  \end{figure}

In light of these negative results, it is natural to wonder whether/how one can harness public data for more effective DP model training. 
This leads us to \textbf{Question 2}:
\begin{center}
\vspace{-0.3cm}
\noindent\fbox{
    \parbox{0.95\linewidth}{
    \vspace{-0.cm}
\textbf{Question 2.} 
Can we provide improved performance (e.g. theoretically smaller error and/or superior empirical results) over the na\"ive baselines?
    }
    }
    \vspace{-0.3cm}
    \end{center}
Some prior works have tackled \textbf{Question 2} by imposing additional assumptions and/or shrinking the problem class. For example, \cite{amid2022public} shows that under certain distributional assumptions, public data permits benefits over DP-SGD in linear regression. Also, \cite{zhou2020bypassing, kairouz2021nearly} show that by imposing certain ``low-rank subspace'' assumptions on the gradients in DP-SGD, public data can help attain dimension-independent rates for DP ERM. In contrast, we consider \textbf{Question 2} \textit{without imposing any additional assumptions and without shrinking the loss function/data distribution class.}

\paragraph{Contribution 2: Power of Public Data for DP Model Training}
To address \textbf{Question 2}, we develop novel (central and local) semi-DP algorithms that
 add less noise than would be necessary to privatize the full data set (including public). By doing so, we can achieve \textit{optimal (worst-case) error bounds with significantly improved constants} over the asymptotically optimal na\"ive algorithms. 
Our local semi-DP mean estimation algorithm is \ul{optimal including constants}. 

We complement our theoretical analyses with extensive numerical experiments. Our experiments show that \textit{our algorithms outperform the na\"ive approaches, even when the optimal DP algorithm is pre-trained on the public data}. For example, our~\cref{alg: weighted dp-sgd} achieves a significant improvement in CIFAR-10 image classification tasks, reducing test error by $8-9\%$ for logistic regression and by at most $18.9\%$ for Wide-ResNet, compared with the na\"ive approaches. We also identify a linear regression problem in which \textit{DP-SGD diverges, but our algorithm converges} with small error using $\npu = 0.1n$ public samples: see Figure~\ref{fig:special_case}. Moreover, \textit{our algorithm consistently outperforms the state-of-the-art} public-data-assisted mirror-descent (PDA-MD) of \cite{amid2022public}. %

\subsection{Techniques and Challenges}
\paragraph{Lower bounds:}
We develop and utilize a variety of techniques to prove our lower bounds.

To prove our central $(\eps, \delta)$-semi-DP SCO lower bound in~\cref{thm: approx DP SCO}, we build upon the techniques of~\citet{dwork2015robust, bassily2020privatequery}. A key challenge is finding a distribution whose mean has small norm and proving that this distribution is still hard enough for any semi-DP algorithm to estimate in $\ell_2$-distance. To accomplish this, we make three main innovations: First, we modify the \textit{tracing attack} of \cite{dwork2015robust} by incorporating more aggressive truncation; this is used to infer membership of many individuals in the data set, even under weak $\ell_2$-accuracy guarantees. Second, we construct a novel distribution in which the prior (mean) is drawn from a shorter interval than the posterior (data). This innovation is crucial for obtaining our tight bounds, but also complicates the analysis. Thus, we provide a novel generalization of the \textit{fingerprinting lemma}~\cite{bun2014fingerprinting, bsu17, kms22} that permits analysis of our construction.
We provide more details on these proofs in~\cref{app: CDP}.

For our central pure $\eps$-semi-DP population lower bounds in \cref{thm: pure semi-DP mean estimation,thm: pure sdp sco}, we develop a \textit{Semi-DP Fano's method} (\cref{lem: semiDP Fano}). In combination with the reduction from estimation to testing, \cref{lem: semiDP Fano} generalizes DP Fano's method~\cite{acharya2021differentially} and strengthens classical Fano's method~\cite{yu1997assouad}. 
We build on the tools of~\citet{bd14} in our proof of~\cref{lem: semiDP Fano}.

Our ERM lower bound (\cref{thm: DP convex ERM}) uses a novel semi-DP \textit{packing argument}: we construct $2^{d/2}$ ``hard'' data sets with well-separated sample means, and use the group privacy property of DP to show that any semi-DP algorithm must make large error on at least one of these data sets. Such arguments have long been used to prove pure DP lower bounds~\cite{hardt2010geometry, bst14}. 
However, to the best of our knowledge, our proof is the first to extend packing arguments to the semi-DP setting. The main challenge is in carefully constructing the private and public data sets so as to force semi-DP $\alg$ to make large error, even when $\alg$ has access to public data. 

For our local semi-DP lower bounds (\cref{thm: LDP mean estimation,thm: LDP SCO convex}), we build on the sophisticated techniques of~\citet{duchi2019lower}. The main idea of our proofs is to combine \textit{Assouad's method}~\cite{duchi2021lecture} with bounds on the mutual information between the input and output of semi-DP algorithms. 

\vspace{-0.4cm}
\paragraph{Algorithms:}
We develop novel algorithms to obtain smaller error (including constants) than the asymptotically optimal na\"ive algorithms. Our algorithms are simple. For example, we propose a central $(\eps, \delta)$-semi-DP estimator that puts different weights on the private and public samples and adds (Gaussian) noise that is calibrated to the private weight. 
We choose the weights depending on the privacy level, dimension, and number of public samples to trade off smaller sensitivity (less noise) with larger variance on the public data. An optimal choice of weights minimizes the variance of our unbiased estimator, leading to smaller error than the optimal DP estimator. Our local semi-DP algorithm simply applies the optimal local randomizer of~\citet{bhowmick2018protection} \textit{only} to the private samples, and averages the noisy private data with the raw public data. 
For SCO, we develop semi-DP stochastic gradient methods that use our mean estimation algorithms to estimate the gradient of the loss function in each iteration of training.

\subsection{Preliminaries and Notation}
Let $\| \cdot \|$ be the $\ell_2$ norm and
$\XX$ denote a data universe. 
Function $g: \WW \to \mathbb{R}$ is \textit{$\mu$-strongly convex} if $g(\alpha w + (1- \alpha) w') \leq \alpha g(w) + (1 - \alpha) g(w') - \frac{\alpha (1-\alpha) \mu}{2}\|w - w'\|^2$ for all $\alpha \in [0,1]$ and all $w, w' \in \WW$. If $\mu = 0,$ we say $g$ is \textit{convex}. 
Function $f: \WW \times \XX \to \mathbb{R}$ is \textit{uniformly $L$-Lipschitz} in $w$ if $\sup_{x \in \XX} |f(w, x) - f(w',x)| \leq L\|w - w'\|$. $\mathbb{B} = \{x \in \mathbb{R}^d | \|x\| \leq 1\}$ denotes the unit $\ell_2$-ball in $\mathbb{R}^d$. 

\begin{definition}[Differential Privacy~\cite{dwork2006calibrating}]
\label{def: DP}
Let $\varepsilon \geq 0, ~\delta \in [0, 1).$ Randomized algorithm $\Al: \XX^n \to \mathcal{W}$ is \textit{$(\varepsilon, \delta)$-differentially private} (DP) if for all %
$X, X' \in \XX^n$ differing in one sample
and all measurable subsets $S \subseteq \WW$, we have
$\mathbb{P}(\alg(X) \in S) \leq e^\varepsilon \mathbb{P}(\alg(X') \in S) + \delta$. 
\end{definition}
Definition~\ref{def: DP} prevents attackers from learning much more about an individual's data than if the data had not been used for training.
If $\delta = 0$, we write $\eps$-DP and say ``pure'' DP.

\begin{definition}[Zero-Concentrated Differential Privacy (zCDP)~\cite{bun16}]
A randomized algorithm $\mathcal{A}: \XX^n \to \mathcal{W}$ satisfies $\rho$-zero-concentrated differential privacy ($\rho$-zCDP) if for all 
pairs of adjacent data sets $X, X' \in \XX^n$
and all $\alpha \in (1, \infty)$, we have $D_\alpha(\alg(X) || \alg(X')) \leq \rho \alpha$,
where $D_\alpha(\alg(X) || \alg(X'))$ is the $\alpha$-\Renyi divergence\footnote{For distributions $P$ and $Q$ with probability density/mass functions $p$ and $q$, $D_\alpha(P || Q) := \frac{1}{\alpha - 1}\ln \left(\int p(x)^{\alpha} q(x)^{1 - \alpha}dx\right)$~\cite{renyi}.} between the distributions of $\alg(X)$ and $\alg(X')$.
\end{definition}

zCDP is weaker than $\varepsilon$-DP, but stronger than $(\varepsilon, \delta)$-DP: see Proposition~\ref{prop:bun1.3} in Appendix~\ref{app: approx DP weaker than zcdp}. 

We now define \textit{semi-DP}~\cite{beimel2013private, alon2019limits}, a relaxation of DP that permits $\alg$ to violate the privacy of the public data: 
\begin{definition}[Semi-Differential Privacy~\cite{beimel2013private, alon2019limits}]
\label{def: semiDP}
Let $n = \npu + \npr$. 
Consider an algorithm $\Al: \XX^n \to \WW$ that takes data $X = (\xpr, \xpu) \in \XX^{\npr} \times \XX^{\npu}$
as input. $\alg$ is (centrally) \textit{$(\eps, \delta)$-semi-DP} if $\alg(\cdot, \xpu)$ is $(\eps, \delta)$-DP for all $\xpu \in \XX^{\npu}$.
\vspace{-0.3cm}
\end{definition}
We define \textit{$\rho$-semi-zCDP} analogously. We define \textit{semi-LDP} in~\cref{sec: LDP}, which is a similar relaxation of local DP (LDP)~\cite{whatcanwelearnprivately, duchi13}. 
To distinguish Definition~\ref{def: semiDP} from semi-LDP, we sometimes refer to algorithms that satisfy Definition~\ref{def: semiDP} as \textit{centrally} semi-DP.

\subsection{Roadmap}
We study the central model of semi-DP in~\cref{sec: CDP}. We give tight error bounds for mean estimation in~\cref{sec: CDP ME,app sec: pure semi-DP mean estimation}, and a novel algorithm with better constants than the asymptotically optimal algorithms in~\cref{sec: even more optimal CDP ME}. In~\cref{sec: DP ERM,sec: DP SCO}, we characterize the optimal excess risk of semi-DP ERM and SCO respectively. We give an improved semi-DP algorithm for SCO in~\cref{even better DP SCO}. 
In~\cref{sec: LDP}, we turn to the local model of semi-DP. We characterize the optimal error rates for semi-LDP mean estimation in~\cref{sec: LDP ME} and SCO in~\cref{sec: LDP SCO}. We give semi-LDP algorithms with improved error 
in~\cref{sec: even more optimal LDP estimation,sec: even more optimal LDP SCO}. We 
experimentally evaluate our algorithms in~\cref{sec: experiments,app: experiments}.
In~\cref{app: intro}, we discuss related works in more detail.
Due to the page limit, some results and all proofs are presented in the Appendix.

\section{Optimal Centrally Private Model Training with Public Data}
\label{sec: CDP}

\subsection{Optimal Semi-DP Mean Estimation}
\label{sec: CDP ME}

In this section, we determine the minimax optimal semi-DP error rates for estimating 
the mean of a bounded distribution.

Consider the following problem: given $\npr$ private samples~$\xpr \subseteq \mathbb{B}$ and $\npu$ public samples~$\xpu \subseteq \mathbb{B}$,
drawn i.i.d. from an unknown distribution $P$ on $\mathbb{B}$, estimate the population mean $\alg(X) \approx \expec_{x \sim P}[x]$ subject to the constraint that $\alg$ satisfies semi-DP. Defining $n = \npr + \npu$, we will characterize the minimax squared error of population mean estimation under $(\varepsilon, \delta)$-semi-DP: 
\begin{multline}
\mathcal{M}_{\text{pop}}(\varepsilon, \delta, \npr, n, d) := \inf_{\mathcal{A} \in \mathbb{A}_{\varepsilon, \delta}(\mathbb{B})} \sup_{P 
} \\
\expec_{\mathcal{A}, X \sim P^n 
}\left[\| \mathcal{A}(X) - \expec_{x \sim P}[x]\|^2\right],
\end{multline}
where $\mathbb{A}_{\varepsilon, \delta}(\mathbb{B})$ denotes the set of all $(\varepsilon, \delta)$-semi-DP estimators $\alg: \mathbb{B}^n \to \mathbb{B}$, and $|\xpr| = \npr$.

\begin{theorem}
\label{thm: CDP mean estimation}
    Let $\eps \lesssim 1/\log(nd)$, $\delta \ll 1/\npr$. Then, there is a constant $C > 0$ such that 
\begin{multline}
\label{eq: delta > 0}
\ell(d, n) \min\left\{\frac{1}{\npu}, \frac{d}{n^2 \eps^2} + \frac{1}{n}\right\} \leq \popDPrisk \\ 
\leq C \min\left\{\frac{1}{\npu}, \frac{d \ln(1/\delta)}{n^2 \eps^2} + \frac{1}{n}\right\},
\end{multline}
where $1/\ell(d,n)$ is logarithmic in $d$ and $n$.
\end{theorem}

\cref{app: CDP pop mean est} has the proof of~\cref{thm: CDP mean estimation} 
and the pure $\eps$-semi-DP result. 
\begin{remark}
\label{rem: symmetric assumption}
Technically, our lower bound proof requires us to assume that $\alg = (\alg^1, \ldots, \alg^d)$ is \textit{symmetric}, meaning $\alg^j = \alg^l$ for all $j, l \in [d]$. This is a very reasonable assumption: to our knowledge, every algorithm  
that has been proposed in the literature (for $\ell_2$ geometry) is symmetric. Further, the concurrent work of~\citet{ullah2024public} gives an alternative proof that eliminates this assumption.\footnote{The work of~\citet{ullah2024public} appeared on arXiv March 6, 2024. The first version of our paper appeared on arXiv on June 26, 2023, while v2 added the $d$-dimensional $(\eps, \delta)$-semi-DP lower bounds (\cref{thm: CDP mean estimation} and \cref{thm: approx DP SCO}) and appeared on February 14, 2024.}
\end{remark}
Na\"ive algorithms attain the optimal rates: the throw-away estimator $\alg(X) = \frac{1}{\npu} \sum_{x \in \xpu} x$ has MSE $O(1/\npu)$, and the DP (hence semi-DP) Gaussian mechanism has MSE $\widetilde{O}(d/(\eps n)^2 + 1/n)$. In the next subsection, we show that these two algorithms have suboptimal constants: we provide 
improved (smaller error)
estimators. 

\subsection{An ``Even More Optimal'' Semi-DP Algorithm for Mean Estimation}
\label{sec: even more optimal CDP ME}
Before presenting our improved semi-DP algorithms, we precisely describe the worst-case error of the optimal na\"ive algorithms discussed in the preceding subsection.  We will consider $\rho$-\textit{semi-zCDP},
which facilitates a sharp characterization of the privacy of the Gaussian mechanism. Note that the lower bound in~\cref{eq: delta > 0} also holds for semi-zCDP, since $\eps^2/\ln(1/\delta)$-zCDP implies $(O(\eps), \delta)$-DP, by Proposition~\ref{prop:bun1.3}.

\begin{definition}
\label{def: P(B,V)}
Let 
$\PP(B, V)$ be the collection of all distributions $P$ on $\mathbb{R}^d$ such that for any 
$x \sim P$, we have $\var(x) = V^2$and $\|x\| \leq B$, $P$-almost surely. 
\end{definition}

\begin{lemma}
\label{lem: zcdp naive}
The error of the $\rho$-semi-zCDP throw-away algorithm $\alg(X) = \frac{1}{\npu}\sum_{x \in \xpu} x$ is 
\[
\sup_{P \in \PP(B,V)} \expec_{X \sim P^n
}\left[\| \mathcal{A}(X) - \expec_{x \sim P}[x]\|^2\right] = \frac{V^2}{\npu}. 
\]
Further, let $\Bar{X}$ be the average of the public and private samples. The minimax error of the $\rho$-zCDP Gaussian mechanism $\mathcal{G}(X) = \Bar{X} + \mathcal{N}\left(0, \sigma^2\mathbf{I}_d \right)$ is 
\begin{multline*}
\inf_{\rho\text{-zCDP}~
\mathcal{G}} \sup_{P \in \PP(B,V)} \expec_{\mathcal{G}, X \sim P^n
}\left[
\| \mathcal{G}(X) - \expec_{x \sim P}[x]\|^2
\right]
\\= \frac{2 d B^2}{\rho n^2} + \frac{V^2}{n}. 
\end{multline*}
\end{lemma}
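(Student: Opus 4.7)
The lemma has two independent parts, each of which reduces to a bias–variance decomposition; the Gaussian mechanism half additionally requires a sensitivity calculation. I would treat them separately.

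\textbf{Throw-away estimator.} Since the public samples are i.i.d.~from $P$ and $\alg(X) = \frac{1}{\npu}\sum_{x \in \xpu} x$ uses only $\xpu$, it is unbiased: $\expec[\alg(X)] = \expec_P[x]$. So its MSE equals its variance, and by independence,
\begin{equation*}
\expec\|\alg(X) - \expec_P[x]\|^2 = \frac{1}{\npu^2}\sum_{x \in \xpu}\expec\|x - \expec_P[x]\|^2 = \frac{V^2}{\npu}.
\end{equation*}
This depends on $P$ only through $V^2$, which is fixed across $\PP(B,V)$, so the supremum is trivially $V^2/\npu$. The estimator is automatically $\rho$-semi-zCDP for any $\rho \geq 0$ since it never touches $\xpr$.

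\textbf{Gaussian mechanism.} I would proceed in three steps. (i) Compute the $\ell_2$ sensitivity of $\bar{X} = \frac{1}{n}\sum_{j=1}^n x_j$ under a change of a single private coordinate $x_i \in \xpr$: swapping $x_i \to x_i'$ changes $\bar{X}$ by $(x_i - x_i')/n$, so the sensitivity is at most $2B/n$, and antipodal boundary points $x_i = B e_1$, $x_i' = -B e_1$ attain equality. Hence $\Delta = 2B/n$ exactly. (ii) Apply the standard sharp zCDP analysis of the Gaussian mechanism: $\bar{X} + \mathcal{N}(0,\sigma^2 I_d)$ is $\rho$-semi-zCDP if and only if $\sigma^2 \geq \Delta^2/(2\rho) = 2B^2/(\rho n^2)$, so the infimum over $\rho$-zCDP Gaussian mechanisms is attained at exactly $\sigma^2 = 2B^2/(\rho n^2)$. (iii) Decompose the MSE using independence of $Z \sim \mathcal{N}(0,\sigma^2 I_d)$ and $X$, and the fact that $Z$ is zero-mean:
\begin{equation*}
\expec\|\mathcal{G}(X) - \expec_P[x]\|^2 = \expec\|\bar{X} - \expec_P[x]\|^2 + \expec\|Z\|^2 = \frac{V^2}{n} + d\sigma^2,
\end{equation*}
where the first term uses $\expec\|\bar{X} - \expec_P[x]\|^2 = \frac{1}{n^2}\sum_{j=1}^n \expec\|x_j - \expec_P[x]\|^2 = V^2/n$. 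This expression is constant over $P \in \PP(B,V)$, so the supremum equals $V^2/n + d\sigma^2$. Plugging in the optimal $\sigma^2$ yields $\frac{V^2}{n} + \frac{2dB^2}{\rho n^2}$.

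\textbf{Main obstacle.} There is no serious obstacle; the lemma is essentially a clean summary of the Gaussian mechanism MSE under semi-zCDP. The only point deserving a sentence of care is the claim that the infimum over $\rho$-zCDP Gaussian mechanisms is attained \emph{at} $\sigma^2 = 2B^2/(\rho n^2)$, rather than being strictly larger: this needs the sensitivity bound $\Delta \leq 2B/n$ to be tight, which is verified by the antipodal construction above, provided that $\{x : \|x\| \leq B\}$ lies in the data universe (which is implicit in the setting of $\PP(B,V)$).
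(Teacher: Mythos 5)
Your proposal is correct and follows essentially the same route as the paper's proof: compute the throw-away MSE as the variance of the public sample mean, then for the Gaussian mechanism combine (a) the exact $\ell_2$-sensitivity $2B/n$ with tightness witnessed by antipodal boundary points, (b) the sharp zCDP characterization $\sigma^2 \geq \Delta_2^2/(2\rho)$ to pin down the minimal admissible noise, and (c) the orthogonal bias–variance decomposition $\expec\|\mathcal{G}(X) - \expec_P[x]\|^2 = d\sigma^2 + V^2/n$. The one detail worth a sentence of care — that the data universe $\{\|x\|\leq B\}$ contains the antipodal pair so the sensitivity bound is actually attained — is exactly the point the paper also flags (it phrases it as choosing a $P$ whose support contains $(\pm B, 0, \ldots, 0)$), so you have not missed anything.
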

Intuitively, it seems like the na\"ive estimators in~Lemma~\ref{lem: zcdp naive} do not harness the public and private data in the most effective way possible, despite being optimal up to constants: Throw-away fails to utilize the private data at all, while the Gaussian mechanism gives equal weight to $\xpr$ and $\xpu$ (regardless of $\rho, d, \npr$), and provides unnecessary privacy for $\xpu$. We now present a $\rho$-semi-zCDP estimator that is ``even more optimal'' than the na\"ive estimators, 
meaning our estimator has smaller worst-case error (accounting for constants). We define the family of \textit{Weighted-Gaussian} estimators: 
\begin{multline}
\label{eq: weighted gauss}
\alg_r(X) := \sum_{x \in \xpr} rx + \sum_{x \in \xpu} \left(\frac{1 - \npr r}{\npu}\right)x \\
+ \mathcal{N}\left(0, \frac{2 B^2 r^2}{\rho} 
\mathbf{I}_d\right),
\end{multline}
for $r \in [0, 1/\npr]$. This estimator
can recover both the throw-away and standard Gaussian mechanisms by choosing $r = 0$ or $r = 1/n$. 
Intuitively, as $\rho/d$ shrinks, the accuracy cost of adding privacy noise grows, so we should choose smaller $r$ to reduce the sensitivity of $\alg_r$. On the other hand, smaller $r$ increases the variance of $\alg_r$ on $\xpu$. 
By choosing $r$ optimally (depending on $\rho, d, \npr, B, V$), $\alg_r$ achieves smaller MSE than both throw-away and the Gaussian mechanism:\footnote{We find the optimal choice of $r^*$ explicitly in the proof of Proposition~\ref{prop: even more optimal ME} in~\cref{app: even more optimal ME}.} 
\begin{proposition}
\label{prop: even more optimal ME}
$\alg_r$ is $\rho$-semi-zCDP, and $\exists r > 0$ such that
\begin{multline}
\label{eq: always better}
\sup_{P \in \PP(B, V)} \expec_{X \sim P^n}\left[\| \alg_r(X) - \expec_{x \sim P}[x] \|^2\right] \\
< \min\left(\frac{V^2}{\npu},  \frac{2 d B^2}{\rho n^2} + \frac{V^2}{n} \right).
\end{multline}

Further, if $\frac{V^2}{\npu} \leq \frac{2 d B^2}{\rho n^2}$, then the advantage of $\alg_r$ is 
\begin{multline}
\label{eq: case I}
\sup_{P \in \PP(B, V)} \expec_{X \sim P^n}\left[\| \alg_r(X) - \expec_{x \sim P}[x] \|^2\right] \\
\leq \left(\frac{q}{q + s^2} \right) \min\left(\frac{V^2}{\npu},  \frac{2 d B^2}{\rho n^2} + \frac{V^2}{n} \right),
\end{multline}
where $q = 2 + \frac{\npr \rho V^2}{d B^2}$ and $s = \frac{V \npr \sqrt{\rho}}{B \sqrt{d \npu}}$. 
\end{proposition}

When $\frac{V^2}{\npu} \leq \frac{2 d B^2}{\rho n^2}$, the throw-away estimator outperforms the DP Gaussian mechanism and our Weighted Gaussian estimator outperforms both of these estimators by a factor of at least $q/(q + s^2)$. Also, $q/(q + s^2) \in [1/2, 1]$ for allowable $s, q$. For example, if $n= 10,000$, $d = n/100$, $B = 25 V$, $\rho = 0.1$, and $\npu = 0.008n$, then the MSE of our Weighted Gaussian $\alg_r$ is smaller than the MSE of throw-away and standard Gaussian by a multiplicative factor of $\approx 1.98$.

Figures \ref{fig:mean-est-d-1000-n-500-appendix}-\ref{fig:mean-est-d-1000-n-2000-appendix} in \cref{app:semi-dp-mean-estimation} show that our estimator outperforms both na\"ive baselines for $d$-dimensional Bernoulli data with $\rho = 0.5$ (regardless of whether or not 
throw-away outperforms the Gaussian mechanism). 

For pure $\epsemi$, using Laplace noise instead of Gaussian noise in~\cref{eq: weighted gauss} yields an estimator with smaller error than the $\eps$-DP Laplace mechanism and throw-away.

\subsection{Optimal Semi-DP Empirical Risk Minimization}
\label{sec: DP ERM}
For a given (fixed) $X = (\xpr, \xpu) \in \XX^n
$ and parameter domain $\WW$, 
consider the ERM problem: \[
\min_{w \in \WW} \left(\widehat{F}_X(w) :=  \frac{1}{n}\sum_{j=1}^{n} f(w,x_j)\right),\]
where $f(\cdot, x)$ is a loss function and $\npr = |\xpr|$ samples are private. 
We discuss practical applications of semi-DP ERM beyond ML in~\cref{app: practical applications of ERM}. We measure the (in-sample) performance of a training algorithm $\alg: \XX^n \to \WW$ on the data set $X$ by its \textit{excess empirical risk} \[
\expec_{\alg}\hf(\alg(X)) - \hf^* = \expec_{\alg}\hf(\alg(X)) - \min_{w' \in \WW} \hf(w').
\]

\begin{definition}
Let $\FFmu$ be the set of all functions $f: \WW \times \XX \to \mathbb{R}$ that are uniformly $L$-Lipschitz and $\mu$-strongly convex ($\mu \geq 0$) in $w$ for some convex compact $\WW \subset \mathbb{R}^d$ with $\ell_2$-diameter bounded by $D > 0$ and some set $\XX$.
\end{definition}

Let $\mathbb{A}_\varepsilon$ contain all $\epsemi$ algorithms $\alg: \XX^n \to \WW$ for some $\XX, \WW$. Define the minimax excess empirical risk of $\varepsilon$-semi-DP (strongly) convex ERM as
\begin{align}
\label{eq:DP minimax excess empirical risk}
&\mathcal{R}_{\text{ERM}}(\varepsilon, \npr, n, d, L, D, \mu)  \\
&:=\inf_{\mathcal{A} \in \mathbb{A}_\varepsilon} \sup_{f \in \FFmu} \sup_{
\left\{
X\in \XX^{\npr} \times \XX^{\npu} 
\right\}
}
\expec_{\alg} \hf(\alg(X)) - \hf^*. \nonumber
\end{align}

\begin{theorem}
\label{thm: DP convex ERM}
There are absolute constants $0 < c \leq C$ s.t.
\begin{multline}
\label{eq: convex ERM excess risk}
c LD\min\left\{ \frac{\npr}{n}, \pen \right\} \leq \ERMrisk \\
\leq C LD  \min\left\{ \frac{\npr}{n}, \pen \right\}.
\end{multline}
\end{theorem}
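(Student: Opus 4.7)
The plan is to establish matching upper and lower bounds by exhibiting two natural $\eps$-semi-DP baseline algorithms for the upper bound, and reducing the lower bound to the classical Bassily--Smith--Thakurta $\eps$-DP ERM bound via a ``zero-loss public-data'' gadget.

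For the upper bound, since the problem parameters $\npr, n, d, \eps, L, D$ are known up front, one may data-independently pick whichever of two $\eps$-semi-DP algorithms has smaller worst-case guarantee. Algorithm (a), the throw-away estimator, returns $\hw_{\text{pub}} := \argmin_{w \in \WW} \widehat{F}_{\xpu}(w)$ where $\widehat{F}_{\xpu}(w) := \frac{1}{\npu}\sum_{x \in \xpu} f(w,x)$. It is trivially $\eps$-semi-DP since it never accesses $\xpr$. Using the decomposition $\hf(w) = \frac{\npu}{n}\widehat{F}_{\xpu}(w) + \frac{\npr}{n}\widehat{F}_{\xpr}(w)$, the optimality of $\hw_{\text{pub}}$ for $\widehat{F}_{\xpu}$ kills the public term, and the private term is controlled by $L$-Lipschitzness of $\widehat{F}_{\xpr}$ on the diameter-$D$ set $\WW$, yielding
\[
\hf(\hw_{\text{pub}}) - \hf^* \; \leq \; \frac{\npr}{n}\bigl[\widehat{F}_{\xpr}(\hw_{\text{pub}}) - \widehat{F}_{\xpr}(w^*)\bigr] \; \leq \; \frac{\npr}{n}\,LD.
\]
Algorithm (b) applies any asymptotically optimal $\eps$-DP convex ERM procedure (e.g., noisy SGD or objective perturbation of the type used in \cite{bft19}) to the full $n$-sample dataset, achieving $O(LDd/(n\eps))$ excess empirical risk; as $\eps$-DP implies $\eps$-semi-DP, this is admissible. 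Taking the minimum of (a) and (b) yields the stated upper bound $C\,LD\min\{\npr/n,\, d/(n\eps)\}$.

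For the lower bound, I reduce to the classical $\eps$-DP ERM lower bound by arranging the public data to carry no information. Fix $f(w,x) \equiv 0$ for every $w \in \WW$ and every $x \in \XXpu$; this function is $0$-Lipschitz and convex, hence lies in $\FF$, and per-sample the class $\FF$ still allows any $L$-Lipschitz convex $f(\cdot, x)$ on the private samples. Under this choice $\hf(w) = \frac{\npr}{n}\,\widehat{F}_{\xpr}(w)$, so $\hf(\alg(X)) - \hf^* = \frac{\npr}{n}\bigl[\widehat{F}_{\xpr}(\alg(X)) - \widehat{F}_{\xpr}^*\bigr]$. For any fixed $\xpu$ the induced map $\xpr \mapsto \alg(\xpr, \xpu)$ is an ordinary $\eps$-DP algorithm on $\npr$ samples. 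The Bassily--Smith--Thakurta lower bound for $\eps$-DP convex ERM then supplies a worst-case private loss inside $\FF$ against which
\[
\expec\bigl[\widehat{F}_{\xpr}(\alg(\xpr, \xpu)) - \widehat{F}_{\xpr}^*\bigr] \;\geq\; c'\,\min\bigl\{LD,\; LDd/(\npr \eps)\bigr\}.
\]
Multiplying through by $\npr/n$ matches the upper bound up to constants, giving $\Omega\bigl(LD\min\{\npr/n,\, d/(n\eps)\}\bigr)$.

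The only mildly delicate step is confirming that the ``public-zero'' gadget sits genuinely inside the worst-case class $\FF$ against which the infimum in \eqref{eq:DP minimax excess empirical risk} is taken; because $\FF$ is defined per-sample (each $f(\cdot, x)$ must individually be convex and $L$-Lipschitz), choosing $f(\cdot, x) \equiv 0$ on public $x$ and the hard BST14 instance on private $x$ is admissible, so this is not a real obstacle. Beyond that, the argument is essentially a clean composition of two known results---the throw-away Lipschitz bound and the standard $\eps$-DP convex ERM minimax rate---with no substantive technical difficulty.
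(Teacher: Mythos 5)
Your argument is correct. The upper bound matches the paper's: you take the minimum of the throw-away estimator---whose Lipschitz decomposition gives excess risk at most $LD\,\npr/n$, the same three-term computation the paper does---and an optimal pure $\eps$-DP convex ERM algorithm; the one slip is the DP algorithm you cite, since noisy SGD and objective perturbation as in~\cite{bft19} give $(\eps,\delta)$-DP or require special structure and do not attain $O(LD\,d/(n\eps))$ under pure $\eps$-DP for general convex Lipschitz losses, so this should be the $\eps$-DP exponential mechanism of~\cite{bst14} as the paper uses (a citation fix, not a structural gap). Your lower bound, however, takes a streamlined route relative to the paper's. You fix $\xpu$ at values of identically-zero loss, observe that $\xpr\mapsto\alg(\xpr,\xpu)$ is then an ordinary $\eps$-DP map on $\npr$ samples, invoke the $\eps$-DP convex ERM lower bound $\Omega(LD\min\{1,d/(\npr\eps)\})$ of~\cite{bst14}, and scale by $\npr/n$, which indeed gives $LD\min\{\npr/n, d/(n\eps)\}$. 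The paper instead first proves a standalone semi-DP \emph{empirical mean estimation} lower bound (\cref{thm: pure DP empirical mean estimation}) via a packing argument whose hard datasets are $(x_i,\ldots,x_i,\mathbf{0}_{\npu})$, and only then reduces ERM to mean estimation with $f(w,x)=-\langle w,x\rangle$ via the reduction from~\cite{bst14}. Unrolled, both routes use identical hard instances (public data all zero, loss vanishing there) and the same group-privacy packing step over the $\npr$ private coordinates; your version is shorter because it delegates the packing argument to the known $\eps$-DP result, while the paper's version has the advantage of isolating the semi-DP mean estimation bound as a reusable standalone theorem.
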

See~\cref{app: ERM} for the $\mu > 0$ result and proofs.

\begin{remark}
\label{rem: nonconvex ERM}
The same minimax risk bound~\cref{eq: convex ERM excess risk} holds up to a logarithmic factor if we replace $\FFmuz$ by set of all Lipschitz \textit{non-convex} loss functions in the definition~\cref{eq:DP minimax excess empirical risk}. However, the optimal semi-DP algorithms are inefficient for non-convex loss functions. See~\cref{app: ERM} for details.
\end{remark}

\subsection{Optimal Semi-DP Stochastic Convex Optimization}
\label{sec: DP SCO}
In stochastic convex optimization (SCO), we are given $n$ i.i.d. samples from an unknown distribution $X \sim P^n$ (with $\npr$ of them being private), and aim to  approximately minimize the expected \textit{population loss} $F(w) := \expec_{x \sim P}[f(w, x)]$. We measure the quality of a learner $\alg$ by its \textit{excess population risk} 
\begin{multline*}
\expec_{\alg, X \sim P^n} F(\alg(X)) - F^* \\
:= \expec_{\alg, X \sim P^n} \expec_{x \sim P}[f(\alg(X), x)] 
- \min_{w' \in \WW} \expec_{x \sim P} f(w', x).
\end{multline*}
Denote the minimax optimal semi-DP excess risk by 
\begin{multline}
\label{eq:DP minimax excess pop risk}
\mathcal{R}_{\text{SCO}}(\varepsilon, \delta, \npr, n, d, L, D, \mu) \\
:= \inf_{\mathcal{A} \in \mathbb{A}_{\varepsilon, \delta}} \sup_{f \in \FFmu} \sup_{P
} \expec_{\alg, X\sim P^n} F(\alg(X)) - F^*,
\end{multline}

where $\mathbb{A}_{\eps, \delta}$ contains all $(\eps, \delta)$-semi-DP algorithms $\alg: \XX^n \to \WW$ for some $\XX, \WW$, and $|\xpr| = \npr$. 

\begin{theorem}
\label{thm: approx DP SCO}
Let $\eps \lesssim 1/\log(nd)$ and $\delta \ll 1/n$. Then, there is a constant $C>0$ such that
{\small
\begin{multline*}
\ell(d,n) LD\min\left\{\frac{1}{\sqrt{\npu}}, \frac{\sqrt{d}}{n \eps} + \frac{1}{\sqrt{n}}\right\} \\
\leq \SCOrisk \\
\leq C LD\min\left\{ \frac{1}{\sqrt{\npu}}, \frac{\sqrt{d \ln(1/\delta)}}{n \eps} + \frac{1}{\sqrt{n}}\right\},
\end{multline*}
}
where $1/\ell(d,n)$ is logarithmic in $d$ and $n$. 
\normalsize
\end{theorem}

We provide the $\delta = 0$ and $\mu$-strongly convex results ($\mu>0$), and proofs in~\cref{app: DP SCO}. Remark~\ref{rem: symmetric assumption} also applies to~\cref{thm: approx DP SCO} 

Let us compare the semi-DP bound for SCO in~\cref{thm: approx DP SCO} with the ERM bound in~\cref{thm: DP convex ERM} when $d = 1 = L = D$. Depending on the values of $\eps$ and $\npr$, the minimax excess population risk (``test loss'') of SCO may either be larger \textit{or smaller} than the excess empirical risk (``training loss'') of ERM. For example, if $\eps \approx 1$, 
then the semi-DP excess empirical risk $\Theta(1/n)$ is smaller than the excess population risk $\Theta(1/\sqrt{n})$. On the other hand, suppose $\eps \approx 1/n$  and $\npr \approx n^{2/3}$: then \textit{the semi-DP excess empirical risk $\Theta(1/n^{1/3})$ is larger than the excess population risk $\Theta(1/\sqrt{n})$}. This is surprising: for both non-private learning 
and DP learning (with $\npu = 0$), the optimal error of ERM is never larger than that of SCO. While it may seem counter-intuitive that minimizing the training loss can be harder than minimizing test loss, there is a natural explanation: For SCO, a small amount of public data gives us free information about the private data, since $X \sim P^n$ is i.i.d. by assumption. By contrast, for ERM, the public data does not give us any information about the private data, since $X$ is not i.i.d. 

\subsection{Semi-DP SCO with an ``Even More Optimal'' Gradient Estimator}
\label{even better DP SCO}
Our~\cref{alg: weighted dp-sgd} is a noisy stochastic gradient method that uses the ``even more optimal'' Weighted-Gaussian estimator~\cref{eq: gauss} to estimate $\nabla F(w_t)$ in iteration $t$.\footnote{In~\cref{alg: weighted dp-sgd}, we re-parameterize by setting $r = \frac{\alpha}{\kpr}$ for $\alpha \in [0,1]$.} In~\cref{alg: weighted dp-sgd}, $\text{clip}_C(x) := \argmin_{y \in \mathbb{R}^d, \|y\|\leq C}\|x - y\|$ is the Euclidean projection onto the centered $\ell_2$-ball of radius $C$.

We give privacy and excess risk guarantees for~\cref{alg: weighted dp-sgd} and describe an accelerated variant of~\cref{alg: weighted dp-sgd} in~\cref{app: even better DP SCO}. The excess risk of our algorithm is smaller than the state-of-the-art excess risk for a linear-time DP algorithm whose privacy analysis does not require convexity~\cite{lr21fl}. We empirically evaluate our algorithm in~\cref{sec: experiments}.

\begin{algorithm}[H]
\caption{Semi-DP-SGD via Weighted-Gaussian Gradient Estimation}
\label{alg: weighted dp-sgd}
\begin{algorithmic}[1]
\STATE {\bfseries Input:} $T \in \mathbb{N},$ 
clip threshold $C > 0$, 
stepsizes $\{\eta_t\}_{t \in [T]}$, batch sizes $\kpr \in [\npr]$, $\kpu \in [\npu]$,  weight parameter $\alpha \in [0,1]$, 
noise parameter $\sigma^2 > 0$. 
 \STATE Initialize $w_0 \in \WW$.
 \FOR{$t \in \{0, 1, \cdots, T-1\}$} 
  \STATE Draw random batch of $\kpr$ private samples $B_t^{priv}$. 
  \STATE Draw random batch of $\kpu$ public samples $B_t^{pub}$. 
  \STATE Draw privacy noise $v_t \sim \mathcal{N}\left(0, \sigma^2\mathbf{I}_d\right)$.
  \STATE $\widetilde{g}_t \gets \alpha \left[\frac{1}{\kpr}\sum_{x \in B_t^{priv}} \text{clip}_C(\nabla f(w_t, x)) + v_t \right] + \frac{1 -  \alpha}{\kpu}\sum_{x \in B_t^{pub}} \nabla f(w_t, x)$. 
 \STATE Update $w_{t+1} := \Pi_{\WW}[w_t - \eta_t \widetilde{g}_t]$.
 \ENDFOR \\
\STATE {\bfseries Output:} $w_T$ or an average of the iterates $\{w_t\}_{t \in [T]}$.
\end{algorithmic}
\end{algorithm}

\section{Optimal Locally Private Model Training with Public Data}
\label{sec: LDP}

We now turn to a stronger privacy notion that we refer to as \textit{semi-local DP} (semi-LDP). Semi-LDP guarantees \textit{privacy for each private $x_i$, without requiring person $i$ to trust others} (e.g. central server). 
Semi-LDP
generalizes LDP~\cite{whatcanwelearnprivately,duchi13}, which has been deployed in industry
~\cite{apple-differential-privacy, rappor14, ding2017collecting}.

Following~\citet{duchi2019lower}, we permit algorithms to be \textit{fully interactive}: algorithms may adaptively query the same person $i$ multiple times over the course of $T$ communication rounds. For example, $n$ cell phone users send messages to a server over $T$ rounds, and message $Z_{i,t} \in \ZZ$ sent by user $i$ in round $t$ can depend on the previous messages $\{Z_{j,t}\}_{j \leq n, t' \leq t}$. 
Semi-LDP requires the messages $\{Z_{i,t}\}_{t\in [T]}$ 
to be semi-DP:

\begin{definition}[Semi-Local Differential Privacy]
\label{def: semiLDP}
The $T$-round interactive algorithm $\Al$ is
$(\eps, \delta)$-semi-LDP if the transcript 
$Z = \{Z_{i,t}\}_{i \leq n, t \leq T}$ is $(\eps, \delta)$-semi-DP: i.e. for all $\xpu \in \XX^{\npu}$, all 
adjacent 
$\xpr \sim \xpr'$ and all $S \subset \ZZ^{n T}$, 
$\pr(Z \in S | X = (\xpr, \xpu)) \leq 
\;\;\;\; e^\eps \pr(Z \in S | X = (\xpr', \xpu)) + \delta.  $
\end{definition}
Definition~\ref{def: semiLDP} is stronger than Definition~\ref{def: semiDP}, since the latter only requires the final output of $\alg$ to be semi-DP. 

\subsection{Optimal Semi-LDP Mean Estimation}
\label{sec: LDP ME}
We will characterize the minimax squared error of $\varepsilon$-semi-LDP $d$-dimensional mean estimation:
\begin{multline}
\label{eq:LDP minimax risk}
\mathcal{M}^{\text{loc}}_{\text{pop}}(\varepsilon, \npr, n, d)\\
:= \inf_{
\mathbb{A}^{\text{loc}}_\varepsilon(\mathbb{B})
} \sup_{P} \expec_{\mathcal{A}, X \sim P^n}\left[\| \mathcal{A}(X) - \expec_{x \sim P}[x]\|^2\right],
\end{multline}
where $\mathbb{A}^{\text{loc}}_\varepsilon(\mathbb{B})$ contains all (fully interactive) $\varepsilon$-semi-LDP estimators $\alg: \mathbb{B}^n \to \mathbb{B}$, 
and $|\xpr| = \npr$.

\begin{theorem}
\label{thm: LDP mean estimation}
Let $\eps \in (0, 1]$. There are absolute constants $0 < c \leq C$ s.t.
\begin{multline}
c\min\left\{\frac{1}{\npu}, \frac{d}{n \eps^2}\right\}\leq \mathcal{M}^{\text{loc}}_{\text{pop}}(\varepsilon, \npr, n, d) \\ \leq C \min\left\{\frac{1}{\npu}, \frac{d}{n \eps^2}\right\}.
\end{multline}
\end{theorem}

\begin{remark}[Approximate Semi-LDP]
\label{rem: delta > 0}
\cref{thm: LDP mean estimation} still holds if we replace $\mathbb{A}_{\eps}^{\text{loc}}$ in the definition of $\popLDPrisk$ by the set of all
$(\eps, \delta)$-semi-LDP estimators $\alg$ for which either $\delta < 1/2$ and $\alg$ is ``compositional''~\cite{duchi2019lower} (e.g. sequentially interactive~\cite{duchi13}) or $\delta < 1/2^d$. 
\end{remark}

The upper bound in~\cref{thm: LDP mean estimation} is the minimum of the error of the throw-away estimator and the optimal $\eps$-LDP 
estimator of~\citet{duchi13}. The LDP estimator of~\citet{duchi13} takes the form \[
\widetilde{\mathcal{M}}_{\text{Duchi}}(X) = \frac{1}{n}\sum_{i=1}^n \mathcal{M}_{\text{Duchi}}(x_i),
\]   
where $\mathcal{M}_{\text{Duchi}}(x_i)$ samples a vector uniformly from a carefully chosen subset of $\mathbb{B}$, depending on $x_i$. 

\subsection{An ``Even More Optimal'' Semi-LDP Estimator}
\label{sec: even more optimal LDP estimation}
By applying $\mathcal{M}_{\text{Duchi}}$ only to the private samples, we obtain an $\eps$-semi-LDP algorithm
with smaller error than the asymptotically optimal $\widetilde{\mathcal{M}}_{\text{Duchi}}$. 
Define the \textit{Semi-LDP $\alg_{\text{Semi-Duchi}}$}:
\begin{equation}
\label{eq: semiduchi}
\alg_{\text{Semi-Duchi}}(X) = \frac{1}{n} \left[\sum_{x \in \xpr} \mathcal{M}_{\text{Duchi}}(x) + \sum_{x' \in \xpu} x' \right].
\end{equation}
Let $P$ be a distribution on $\mathbb{B}$ with $V^2 := \expec\|x - \expec_{x \sim P}[x]\|^2$.  

\begin{lemma}
\label{lem: privunit error}
Let $c > 0$ such that $\expec_{x \sim P}\| \duchisample(x) - \expec_{x \sim P}[x]\|^2 = \frac{c d}{n \eps^2}$, so that $\expec_{X \sim P^n}\| \duchiset(X) - \expec_{x \sim P}[x]\|^2 = \frac{c d}{n \eps^2} + \frac{V^2}{n}$. Then, 
\begin{multline*}
\expec_{X \sim P^n}\left[\left\|\alg_{\text{Semi-Duchi}}(X) - \expec_{x \sim P}[x] \right\|^2\right] \\ =  \frac{\npr}{n} \cdot \frac{c d}{n \eps^2} + \frac{\npu}{n} \cdot \frac{V^2}{n}.
\end{multline*}
\end{lemma}
The constant $c$ in Lemma~\ref{lem: privunit error} that bounds the error of $\duchisample$ may depend on the distribution $P$. Lemma~\ref{lem: privunit error} shows that given any $P$, the error of our semi-LDP estimator is smaller than the error of $\duchiset$. Quantitatively, \textit{the MSE of our estimator is smaller than the MSE of $\duchiset$ by a factor of $\npr/n$} if the privacy noise error term is dominant (e.g. if $d \gg \eps^2)$.

\subsection{A Semi-LDP Estimator with Optimal Constants}
In this subsection, we consider the task of estimating the average of data $X$ on the unit sphere $\mathcal{S}^{d-1} \subset \mathbb{R}^d$. We give a semi-LDP estimator that is \textit{truly optimal}---i.e. \textit{our estimator has the smallest MSE, including constants}---among a large class of unbiased semi-LDP estimators of $\Bar{X}$. Our semi-LDP estimator, $\alg_{\text{semi-PrivU}}$ takes a similar shape to $\semiduchi$, but uses \textit{PrivUnit}~\cite{bhowmick2018protection} instead of $\duchisample$ as the LDP randomizer in~\eqref{eq: semiduchi}. We recall PrivUnit in~\cref{alg:pu} in Appendix~\ref{app: LDP}.

\begin{proposition}
\label{prop: semi-LDP PrivUnit is truly optimal}
Let $\RR: \mathbb{S}^{d-1} \to \ZZ$ be an $\eps$-LDP randomizer, $\MMpr$ and $\MMpu$ be aggregation protocols, and $\alg(X) = \frac{1}{n} \left[\MMpr(\RR(x_1), \ldots, \RR(x_{\npr})) + \MMpu(\xpu) \right]$. Assume $\expec\left[\MMpr(\RR(x_1), \cdots, \RR(x_{\npr})) | \xpr \right] = \sum_{x \in \xpr} x$ and $\expec\left[\MMpu(\xpu) | \xpr \right] = \sum_{x \in \xpu} x$~$\forall X = (\xpr, \xpu) \in \left(\mathbb{S}^{d-1}\right)^n$ Then, 
\begin{multline*}
\sup_{X \in \left(\mathbb{S}^{d-1}\right)^n} \expec_{\alg_{\text{semi-PrivU}}}\left[\left\|\alg_{\text{semi-PrivU}}(X) - \Bar{X}\right\|^2\right] \\\leq \sup_{X \in \left(\mathbb{S}^{d-1}\right)^n} \expec_{\alg}\|\alg(X) - \Bar{X}\|^2.
\end{multline*}
\end{proposition}

Proposition~\ref{prop: semi-LDP PrivUnit is truly optimal} is proved by extending the analysis of~\citet{asilocal} to the semi-LDP setting.

\subsection{Optimal Semi-LDP Stochastic Convex Optimization}
\label{sec: LDP SCO}
We will characterize the minimax optimal excess population risk of semi-LDP SCO
\begin{multline}
\label{eq:LDP minimax excess pop risk}
\mathcal{R}_{\text{SCO}}^{\text{loc}}(\varepsilon, \npr, n, d, L, D, \mu) \\ := \inf_{\mathcal{A} \in \mathbb{A}_{\varepsilon}^{\text{loc}}} \sup_{f \in \FFmu} \sup_{P
} \expec_{\alg, X \sim P^n} F(\alg(X)) - F^*,
\end{multline}
where $\mathbb{A}^{\text{loc}}_\varepsilon$ denotes the set of all algorithms $\alg: \XX^n \to \WW$ that are $\varepsilon$-semi-LDP for some $\XX$ and $\WW$,
and exactly $\npr$ samples in $X$ are private. 

\begin{theorem}
\label{thm: LDP SCO convex}
Let $\eps \in (0, 1]$ and $h(\varepsilon, \npr, n, d, L, D) := LD \min\left\{\frac{1}{\sqrt{\npu}}, \sqrt{\frac{d}{n \eps^2}}\right\}$. 
There are absolute constants~$c$ and $C$ with $0 < c \leq C$, such that 
\begin{multline*}
c \ h(\varepsilon, \npr, n, d, L, D) \leq \mathcal{R}_{\text{SCO}}^{\text{loc}}(\varepsilon, \npr, n, d, L, D,  \mu = 0)  \\ \leq C \ h(\varepsilon, \npr, n, d, L, D). 
\end{multline*}
\end{theorem}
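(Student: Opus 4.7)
\textbf{Plan for~\cref{thm: LDP SCO convex}.}

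For the upper bound, I would exhibit two $\varepsilon$-semi-LDP algorithms, each matching one term of $h$, and then return whichever closed-form rate is smaller (a decision made in advance from $\npu, n, d, \eps$). The first is the \emph{throw-away} algorithm, which runs standard (non-private) projected stochastic gradient descent on $\xpu$ only. Since its output is a deterministic function of $\xpu$ and never queries any private $x_i$, it is trivially $\varepsilon$-semi-LDP, and the classical Nemirovski--Yudin analysis yields expected excess population risk $O(LD/\sqrt{\npu})$. The second algorithm treats every sample as private and runs an optimal fully-interactive LDP SCO procedure, e.g.\ minibatch LDP-SGD with PrivUnit-privatized stochastic gradients applied to all $n$ samples. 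Since $\eps$-LDP implies $\eps$-semi-LDP, this is in our admissible class, and its excess risk is $O(LD(\sqrt{d/(n\min(\eps,\eps^2))} + 1/\sqrt{n}))$~\cite{duchi2019lower}. Selecting the better rate gives the $O(h)$ upper bound.

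For the lower bound, the plan is to reduce mean estimation to convex SCO and invoke~\cref{thm: LDP mean estimation}. Concretely, take $\XX = \mathbb{B}$, $\WW = D\mathbb{B}$, and the family of convex $L$-Lipschitz linear losses $f(w, x) = -L \langle w, x\rangle$. For any distribution $P$ on $\mathbb{B}$ with mean $\mu$, one has $F(w) = -L\langle w, \mu\rangle$, so $w^* = D\mu/\|\mu\|$ and $F(\hat w) - F^* = L(D\|\mu\| - \langle \hat w, \mu\rangle)$. I would restrict to a packing $\{P_v\}_{v \in \VV}$ of hard distributions (a Bernoulli-product or scaled spherical construction) whose means have magnitude $\tau$ and pairwise separation $\Omega(\tau)$, designed so that any $\hat w$ with $F(\hat w) - F^* \leq cLD\tau$ uniquely determines $v$. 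Applying~\cref{thm: LDP mean estimation}---equivalently, the semi-LDP mutual-information bound used to prove it---to the induced identification task forces $\tau^2 \gtrsim \min\{1/\npu,\ d/(n\min(\eps,\eps^2)) + 1/n\}$. Using $\sqrt{a+b} \asymp \sqrt{a} + \sqrt{b}$, the contrapositive yields excess risk $\gtrsim LD\tau \gtrsim LD \min\{1/\sqrt{\npu},\ \sqrt{d/(n\min(\eps,\eps^2))} + 1/\sqrt{n}\}$.

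The hard part will be the reduction step: since linear losses do not directly bound $\|\hat w - w^*\|$, the packing must be arranged so that an $\eta$-accurate $\hat w$ forces the alignment condition $\langle \hat w/D,\ \mu_v/\|\mu_v\|\rangle \geq 1 - \eta/(LD\tau)$, which determines $v$ when $\eta \ll LD\tau$. The natural choice---distributions supported near $\{\pm\tau e_j\}$ along coordinate axes (or on a $\tau$-scaled sphere packing)---reduces the alignment condition to coordinate-wise sign identification, so the semi-LDP Bernoulli-product lower bound of~\cref{sec: LDP ME} (mutual information between $\xpr$ and its LDP transcript, plus unconstrained leakage from $\xpu$) applies verbatim. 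The trichotomy between the $1/\sqrt{n}$ (non-private), $\sqrt{d/(n\min(\eps,\eps^2))}$ (LDP), and $1/\sqrt{\npu}$ (throw-away) regimes is then inherited from~\cref{thm: LDP mean estimation}, so no new privacy-analytic ingredients are needed beyond the reduction itself. Extensions to $\delta > 0$ under the compositionality assumption of~\cref{rem: delta > 0} follow by plugging in the corresponding approximate-semi-LDP mean estimation bound.
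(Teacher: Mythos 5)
Your proposal is correct and takes essentially the same approach as the paper: throw-away SGD on $\xpu$ plus PrivUnit-based LDP-SGD for the upper bound, and for the lower bound the linear loss $f(w,x)=-\langle w,x\rangle$ with a packing/scaling argument reducing to the semi-LDP mean-estimation lower bound of \cref{thm: LDP mean estimation} (the paper phrases the reduction via the inequality $F_\gamma(w)-F_\gamma^*\geq\tfrac{1}{2\gamma}\|\gamma w-\theta\|^2$ rather than your ``alignment condition,'' but these are the same idea). One small note: rather than invoking the finished statement of \cref{thm: LDP mean estimation}, the paper reuses the intermediate bound $\expec\|\wpr'-\theta\|^2\geq\tfrac{\gamma^2}{4}[1-\sqrt{\,\cdots\,}]$ with $\wpr'=\gamma\wpr$ and then combines with the non-private $\Omega(1/\sqrt{n})$ term, which is what your $\sqrt{a+b}\asymp\sqrt{a}+\sqrt{b}$ step implicitly does.
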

See~\cref{app: LDP SCO} for the $\mu > 0$ result and proofs. 
The first term in the upper bound is achieved by throwing away $\xpr$ and running SGD on $\xpu$~\cite{ny}. The second term in the upper bound is achieved by the one-pass \textit{LDP-SGD} of~\citet{duchi13}.
Remark~\ref{rem: delta > 0} also applies to~\cref{thm: LDP SCO convex}.

\subsection{``Even More Optimal'' Semi-LDP SCO Algorithm}
\label{sec: even more optimal LDP SCO}
We give a semi-LDP algorithm, called \textit{Semi-LDP-SGD}, with smaller excess risk than the optimal LDP-SGD of~\citet{duchi13}. Essentially, Semi-LDP-SGD runs as follows: In each iteration $t \in [n]$, we draw a random sample $x_t \in X$ without replacement. If $x_t \in \xpr$, update $w_{t+1} = \Pi_{\WW}\left[w_t - \eta \duchisample\left(\nabla f(w_t, x_t)\right)\right]$; if $x_t \in \xpu$, instead update $w_{t+1} = \Pi_{\WW}\left[w_t - \eta \nabla f(w_t, x_t)\right]$. 
See \cref{alg: semi-Ldp-sgd} in~\cref{app: more optimal LDP SCO} for pseudocode. 

\begin{proposition}
\label{prop: more optimal semi-LDP SCO}
Let $f \in \FFmuz$, let $P$ be any distribution and $\eps \leq d$. \cref{alg: semi-Ldp-sgd} is $\eps$-semi-LDP. Further, 
there is an absolute constant $c$ such that 
the output $\alg(X) = \Bar{w}_n$ of~\cref{alg: semi-Ldp-sgd} satisfies 
\begin{multline}
\expec_{\alg, X \sim P^n} [F(\Bar{w}_n) - F^*] \\ \leq c \frac{LD}{\sqrt{n}} \max\left\{
\sqrt{\frac{d}{\eps^2}} \sqrt{\frac{\npr}{n}}, \sqrt{\frac{\npu}{n}}\right\}.
\end{multline}

\end{proposition}

Thus, \cref{alg: semi-Ldp-sgd} has smaller excess risk than LDP-SGD, roughly by a factor of $\sqrt{\npr/n}$.

\section{Numerical Experiments}
\label{sec: experiments}
In this section, we 
empirically evaluate the performance of 
four different semi-DP algorithms: 1. \textit{Throw-away} %
(i.e. minimize the public loss).  
2. \textit{DP-SGD}~\cite{abadi16,de2022unlocking}.  
3. \textit{PDA-MD}~\cite{amid2022public}, which is the state-of-the-art semi-DP algorithm for training convex models.
4. \textit{Our~\cref{alg: weighted dp-sgd}}. 
Unless otherwise noted, we evaluate all algorithms with \textit{``warm start,''} which means finding a minimizer $w_{\text{pub}}$ of the public loss and initializing training at $w_{\text{pub}}$. 
The hyperparameters of each algorithm were carefully tuned. See~\cref{app: experiments} for details on the experimental setups and additional results.

\begin{figure}[!h]
    \begin{minipage}{.4\textwidth}
    \centering
    \includegraphics[width=2.5in]{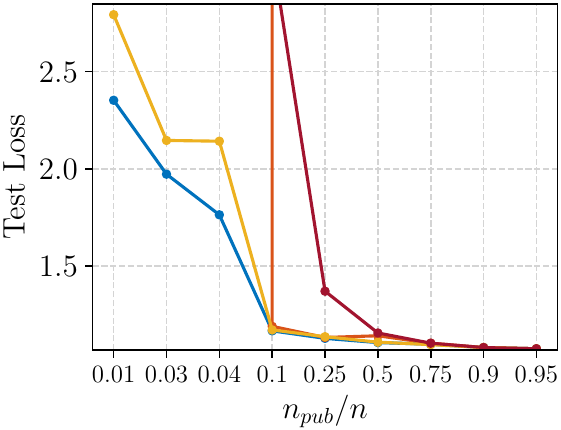}
    \llap{\shortstack{%
        \includegraphics[scale=0.6]{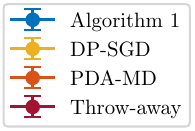}\\
        \rule{0ex}{1.2in}%
      } \rule{0.05in}{0ex}}
    \vspace{-0.15in}
        \caption{\footnotesize Test loss vs. $\npu/n$. $\eps=2$.}
        \label{fig:acc_vs_ratio_eps_2}
    \end{minipage}
\end{figure}

\begin{figure}[!h]
    \begin{minipage}{.4\textwidth}
    \centering
    \includegraphics[width=2.5in]{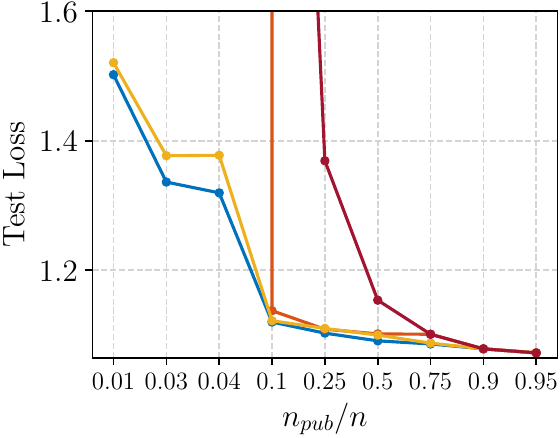}
    \llap{\shortstack{%
        \includegraphics[scale=0.6]{figures/legend_cropped.pdf}\\
        \rule{0ex}{1.2in}%
      } \rule{0.05in}{0ex}}
    \vspace{-0.15in}
    \caption{\footnotesize Test loss vs. $\npu/n$. $\eps=4$.}
    \label{fig:acc_vs_ratio_eps_4}
    \end{minipage}
\end{figure}

\begin{figure}[!h]
    \hspace{0.15in}
    \begin{minipage}{.4\textwidth}
        \includegraphics[width=2.4in]{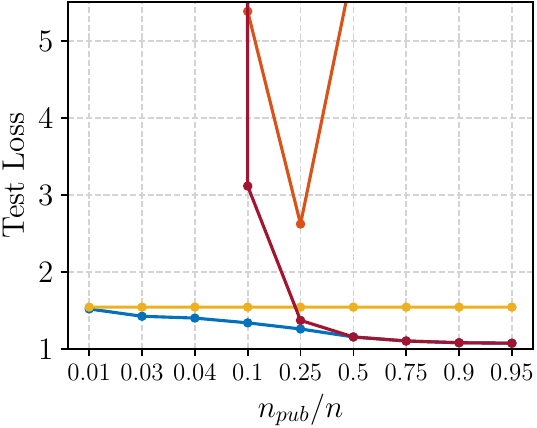}
        \llap{\shortstack{%
        \includegraphics[scale=0.6]{figures/legend_cropped.pdf}\\
        \rule{0ex}{1.2in}%
      } \rule{0.05in}{0ex}}
        \vspace{-.15in}
        \caption{\footnotesize Test loss vs. $\npu/n$. $\eps=4$, without warm-start.}
        \label{fig:acc_vs_ratio_eps_4_non_warm}
    \end{minipage}
\end{figure}

\begin{figure}[!h]
    \vspace{0.1in}
    \begin{minipage}{.4\textwidth}
        \centering
        \includegraphics[width=2.5in]{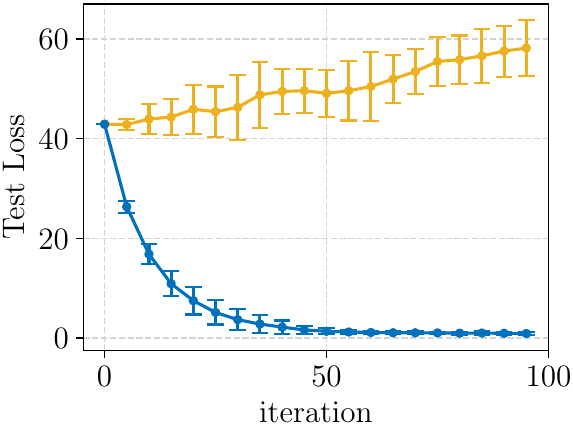}
        \llap{\shortstack{%
        \includegraphics[scale=0.3]{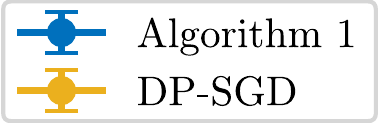}\\
        \rule{0ex}{0.9in}%
      } \rule{0.14in}{0ex}}
        \vspace{-.15in}
        \caption{\footnotesize Test loss vs. iterations. $\frac{n_\text{pub}}{n}=0.1$}
        \label{fig:special_case}
    \end{minipage}
\end{figure}

\textit{\textbf{Our~\cref{alg: weighted dp-sgd} achieves the smallest test loss}} among the semi-DP baselines across different levels of $\eps$ (privacy) and $\npu$: Figures~\ref{fig:acc_vs_ratio_eps_2}-\ref{fig:acc_vs_ratio_eps_4} show results for $(\eps, \delta = 10^{-5})$-semi-DP linear regression with synthetic Gaussian data. 
In the Appendix, we evaluate the algorithms in several other tasks: e.g., logistic regression and Wide-ResNet: see Figures \ref{fig:cifar_acc_vs_ratio_1}-\ref{fig:cifar_acc_vs_eps_2} and \ref{fig:loss_vs_ratio_cifar_10_non_cvx_1}-\ref{fig:loss_vs_ratio_cifar_10_non_cvx_2}. Our results indicate that \textit{\cref{alg: weighted dp-sgd} consistently outperforms all baselines}.

\label{app:linear-regression-special}

\textit{\textbf{Our~\cref{alg: weighted dp-sgd} can converge even when DP-SGD diverges:}} Figure~\ref{fig:special_case} gives an example in which DP-SGD diverges but \cref{alg: weighted dp-sgd} converges. We used the following parameters: $d=50$, $n=1000$, $\varepsilon=0.01$, and $\frac{n_\text{pub}}{n}=0.1$. 

\textit{\textbf{\cref{alg: weighted dp-sgd} performs well even without ``warm start''}}, whereas PDA-MD performs poorly: e.g., Figures~\ref{fig:acc_vs_ratio_eps_4_non_warm} and (in Appendix) \ref{fig:acc_vs_eps_ratio_0_1_non_warm} show that PDA-MD even performs worse than throw-away. By contrast, \cref{alg: weighted dp-sgd} is resilient to the ``cold start'' condition and outperforms all baselines.
In certain practical applications, such as advertising and healthcare, samples are often obtained in an online/streaming fashion, precluding the possibility of ``warm start''. 

\textit{\textbf{More public data always improves the performance of \cref{alg: weighted dp-sgd}}}, but does not always benefit PDA-MD. The main reason for this is that our algorithm more effectively handles the increasing privacy noise that is needed to maintain semi-DP with increasing $\npu/n$. We give details and numerical evidence of this explanation in~\cref{Appendix: effect of different ratio}.

Appendix~\ref{app: experiments} contains other findings, too. For example, Figure~\ref{fig:hessian_reg} shows that PDA-MD is sensitive to Hessian regularization parameter, which requires extra tuning on complicated tasks. By contrast, \cref{alg: weighted dp-sgd} does not use any Hessian information. 

\section{Conclusion}
We considered training DP models with side access to public data. Theoretically, we characterized the optimal error bounds (up to constants) for three fundamental problems:  mean estimation, empirical risk minimization, and stochastic convex optimization. We show that it is impossible to improve over the na\"ive semi-DP algorithms asymptotically, in the worst case. 
Algorithmically, we developed new optimal methods for semi-DP learning that have smaller error than the asymptotically optimal algorithms. Empirically, we showed that our algorithms are effective in training semi-DP models. Our work raises interesting open questions. For instance, why do certain learning/optimization problems benefit more from public data than others? Is there some general underlying property that (don't) permit asymptotic benefits over the na\"ive baselines? Also, what can be said about semi-DP learning with \textit{out-of-distribution} public data? Lastly, it would be useful to have an extensive empirical study that evaluates the efficacy of combining our semi-DP algorithms with various other techniques, such as dimensionality reduction~\cite{yu2021not,pillar24}. 

\section*{Impact Statement}
Our work provides algorithms for protecting the privacy of individuals who contribute training data. 
Privacy is commonly regarded in a positive light and is even enshrined as a fundamental right in various legal systems. However, there is a risk that corporations or governments could exploit our algorithms for nefarious purposes, such as unauthorized collection of personal data. Furthermore, the use of semi-privately trained models may result in decreased accuracy compared to non-private models, which can have adverse consequences. For instance, if a semi-DP model is utilized to forecast the effects of pollution, but yields less precise and overly optimistic outcomes, it could provide pretext for a government to unjustly dismantle environmental safeguards. Nonetheless, we firmly believe that the dissemination of privacy-preserving machine learning algorithms, coupled with enhanced understanding of these algorithms, ultimately offers a net benefit to society.

Another potential misuse of our work would be using the accuracy benefits of public data to argue for less stringent data privacy policies, laws, or regulations. However, we want to emphasize that even though public data can enhance the accuracy of models, we firmly believe that privacy laws and corporate policies should not be weakened. Differentially private synthetic data generation~\cite{torkzadehmahani2019dp, vietri2020new, boedihardjo2022covariance, he2023algorithmically} is one possible avenue for generating public data in an ethical, privacy-preserving manner. 

\section*{Acknowledgements}
The authors would like to thank Gautam Kamath, Adam Smith, Thomas Steinke, and Jonathan Ullman for very helpful pointers and explanations related to existing lower bound proof techniques. We thank Arnold Pereira for discussions and providing feedback at various stages of this project. We thank Michael Menart for helpful feedback on an earlier version of this manuscript. We thank the TPDP and ICML reviewers for their helpful feedback. This work was supported in part by a gift from Meta and the USC-Meta Center for Research and Education in AI \& Learning. AL's work was supported in part by NSF award DMS-2023239 and AFOSR award FA9550-21-1-0084.

\bibliography{icml}
\bibliographystyle{icml2024}

\newpage
\appendix
\onecolumn

\section*{Appendix}
\section{Related Work}
\label{app: intro}
Many works have considered a variety of semi-DP learning problems, empirically and theoretically~\cite{bassily2018model, feldman2018privacy, bassily2020privatequery, kairouz2021nearly, wang2020differentially, zhou2020bypassing, li2021large, liu2021leveraging, papernotsemi, papernot2018scalable, yu2021not, alon2019limits, bie2022private, ferrando2021combining, amid2022public}. Here we discuss the works that are most closely related to our own. 

\paragraph{Sample complexity bounds for semi-DP learning and estimation:} The works of~\citet{alon2019limits, bassily2020privatequery} give sample complexity bounds for semi-DP PAC learning and query release. Their upper bounds show that for hypothesis classes with finite VC-dimension, asymptotic improvements over the na\"ive approaches are possible. These results stand in contrast with our lower bounds for model training/optimization with Lipschitz loss functions. Both of these works also provide lower bounds. Below, we compare their results with our own lower bounds.

The negative result of Theorem 4.2 in~\cite{alon2019limits} is similar in spirit to our lower bounds: no semi-DP algorithm can achieve error better than the minimum of the optimal DP error and a term that depends on $\npu$. That being said, there are some significant and consequential differences between our lower bounds and \citep[Theorem 4.2]{alon2019limits}:
\begin{enumerate}
    \item \textit{Different learning problems}: \cite{alon2019limits} considers agnostic PAC learning/binary classification, whereas we consider model training (mean estimation and optimization). We are not aware of any way to obtain our lower bounds from the results of \cite{alon2019limits}. 
    \item \textit{Quantitative differences in the lower bounds}: The lower bound implied by \citep[Theorem 4.2]{alon2019limits} is of the form $\Omega\left(\min\left\{1/\npu, \text{optimal DP error}\right\} \right)$. Our lower bounds do not always take this form. For example, consider \cref{thm: DP convex ERM}: the first term in our lower bound is $\npr/n$, which can imply a much larger error than the bound in \cite{alon2019limits} (e.g., if $d = \eps n$ and $\npr = n/2$). This illustrates how different learning problems can benefit more from public data than others.
    \item \textit{Central vs. Local Semi-DP}: \cite{alon2019limits} only covers central semi-DP, whereas we cover both central and local semi-DP.
    \item \textit{Pure vs. Approximate Semi-DP}: \cite{alon2019limits}’s proof technique cannot handle approximate semi-DP because their Lemma 2.6 is limited to pure DP. By contrast, we give lower bounds for both pure and approximate semi-DP.
    \item \textit{New Techniques}: Our techniques differ substantially from those in \cite{alon2019limits}. For example, we develop a novel semi-DP Fano’s inequality and a novel semi-DP packing argument. For approximate semi-DP, we utilize fingerprinting proofs. Also, our semi-LDP lower bound techniques are completely different from \cite{alon2019limits}’s techniques. We hope that our novel techniques to find applications beyond those in our paper.
\end{enumerate}

The result of~\citet[Theorem 13]{bassily2020privatequery} showed (up to logarithmic factors) that no improvement over the na\"ive approaches is possible for \textit{approximate} $(1, \delta)$-semi-DP releasing decision stumps. This implies a lower bound for $(1, \delta)$-semi-DP mean estimation in the $\ell_\infty$ norm, but does not imply the tight lower bound for the $\ell_2$ setting that we provide in~\cref{thm: CDP mean estimation}. 
Moreover, \cite{bassily2018model}'s results and techniques do not lead to tight lower bounds for \textit{pure} $\eps$-semi-DP decision stumps or mean estimation. Thus, we develop novel techniques (e.g. semi-DP Fano and semi-DP packing arguments) for pure semi-DP estimation and model training.

For semi-DP $d$-dimensional Gaussian mean estimation, \cite{bie2022private} gave sample complexity bounds that do not depend on the range parameters of the distribution if $\npu \geq d + 1$; this is known to be impossible without public data. The concurrent and independent work of~\citet{ben2023private} established a lower bound for this problem. \cite{ben2023private} also explored the connection between semi-DP distribution learning of a class and the existence of a sample compression scheme for that class.  

\paragraph{DP model training (ERM and SCO) with public data:} The works of~\citet{kairouz2021nearly, zhou2020bypassing} considered DP ERM with public data and additional assumptions on the gradients lying in a certain low-dimensional subspace. Under these additional assumptions, \cite{kairouz2021nearly, zhou2020bypassing} show that nearly dimension-independent excess empirical risk bounds are possible (e.g. by using the public data to estimate the low-dimensional subspace and projecting noisy gradients onto this subspace). Our lower bounds show that these additional assumptions are strictly necessary: in general, polynomial dependence on the dimension is necessary for semi-DP ERM and SCO. 
The work of~\citet{wang2020differentially} used public data to adjust the parameters of DP-SGD. Empirically, pre-training on public data sets and privately fine-tuning the model~\cite{li2021large, kerrigan2020differentially, mehta2022large} has shown great promise for large-scale ML.

The work of~\citet{amid2022public} developed a public data-assisted DP mirror descent (PDA-MD) algorithm that sometimes outperforms DP-SGD empirically in training ML models, and theoretically in terms of excess risk for linear regression under certain distributional assumptions. 
We use the PDA-MD of~\citet{amid2022public} as a baseline in our experiments. \cite{amid2022public} also gave an ``efficient approximation'' of their PDA-MD in~\citep[Equation 1]{amid2022public}, which they used for training non-convex models. While finalizing this manuscript, we became aware that this ``efficient approximation'' is nearly equivalent to our~\cref{alg: weighted dp-sgd}. However, there are differences in the implementation of our algorithm. For example, we use a constant weight parameter $\alpha$, whereas \cite{amid2022public} uses decaying weights $\{\alpha_t\}_{t=1}^T$. Also, we clip both the private \textit{and public} gradients in our implementation of~\cref{alg: weighted dp-sgd}, which empirically improves performance: see~\cref{app: linreg setup} for further discussion. Our~\cref{alg: weighted dp-sgd} was derived in a different fashion from the algorithm in~\citep[Equation 1]{amid2022public}: we derived our algorithm as an application of our ``even more optimal'' mean estimator, while theirs was derived as an approximation of their mirror descent method. Moreover, no theoretical analysis was provided for the ``efficient approximation'' in~\cite{amid2022public}. 
In the linear regression setting, the PDA-MD algorithm can be viewed as Newton's algorithm where the Hessian is estimated via public data. Then the Hessian is inverted and multiplied by privatized gradients at each iteration. In other words, the update rule of the algorithm is given by $w^{t+1} = w^t  - \alpha_t (X_{pub}^T X_{pub})^{-1} (g_t + n_t)$ where $X_{pub}$ is the public data, $g_t$ is the sampled gradient from private data, and $n_t$ is the added noise. When the number of public data samples is small, the estimate of the Hessian becomes low rank (or inaccurate) and the (pseudo)inverse of it may introduce additional error (even after proper regularization). On the other hand, when the number of public data samples is large, but the number of private data is small, the PDA-MD algorithm can still suffer if it is not warm-started. This is because, although the Hessian $X^TX$ is estimated accurately in this case, but there is not enough private data to generate enough gradients to converge to optimal solution. This poor performance of PDA-MD when it is not warm started is also observed in our experiments.

The work of~\citet{nasr23a} used public data to train a generative model for data augmentation and to estimate the center of the clipping balls in DP-SGD. They did not provide any code for their experiments and thus we do not compare against them as a baseline in our experiments. However, combining our algorithm with the tricks used in~\cite{nasr23a} could be a promising avenue for future empirical work. 

\paragraph{Personalized DP:} A related line of work is that of \textit{personalized DP} (PDP)~\cite{jorgensen2015conservative, golatkar2022mixed, muhl2022personalized, alireza}, a generalization of DP in which each person may have different privacy parameters $(\eps_i, \delta_i)$. By letting $\eps_i = \infty$ for some person $i$, PDP also generalizes semi-DP. We leverage this connection to borrow techniques from the work of~\citet{alireza}, which considers pure ($\delta_i = 0$) PDP estimation in one dimension ($d=1$). 
We also note that the 1-dimensional pure (central) PDP mean estimation bound of~\citet{alireza} extends easily to a 1-dimensional $\epsemi$ bound. However, our $d$-dimensional semi-DP lower bounds require a different set of techniques.
Additionally, the personalized LDP bound of~\citet{alireza} relies on the assumption $\eps_i \leq 1$ and does not seem to extend to the $\eps$-semi-LDP setting. 

 \paragraph{Concurrent and Subsequent Work:} The work of~\citet{ullah2024public} first appeared on arXiv a few weeks after v2 of our paper appeared.\footnote{The first version of our paper appeared on arXiv more than eight months before~\cite{ullah2024public}. However, v1 of our paper did not contain our tight high-dimensional approximate semi-DP population mean estimation and SCO lower bounds.} \cite{ullah2024public} proves results that are very similar to~\cref{thm: CDP mean estimation,thm: approx DP SCO}. However, their lower bound proofs do not require the (mild) symmetry assumption that our proof requires: see~Remark~\ref{rem: symmetric assumption}. Moreover, in a restricted parameter regime, \citet{ullah2024public} also provide lower bounds that are tighter by a $\log(1/\delta)$ factor. \citet{ullah2024public} complement these lower bounds by giving
 novel algorithms for leveraging unlabeled public data in training private generalized linear models (GLM) with dimension-independent rates. 

The work of~\citet{liu2023coupling}, which appeared on arXiv 3 months after v1 of our paper, couples private and public gradients in non-convex optimization via a similar weighting scheme to our own. They show benefits of their algorithm over standard DP approaches both theoretically and empirically. 

\citet{tang2023differentially}, which appeared on arXiv 18 days before v1 of our paper,  explores how to improve the privacy-utility tradeoff of DP-SGD by learning priors from images generated by random processes and transferring these priors to private data.  

\paragraph{Other Related Works:} The work of~\citet{gu2023choosing} gave an algorithm for selecting an appropriate public dataset that can be used to enhance private optimization by projecting gradients onto a subspace prescribed by the this public datasaet.

\citet{ganesh2023public} provided an explanation for the empirically reported benefits of pre-training on public data, arguing that non-convex optimization algorithms must go through two phases: (i) selecting a good ``basin'' in the loss landscape; (ii) solving an easy optimization problem within that basin. They hypothesize that public pre-training can be helpful in selecting a good basin. They also demonstrated a separation between pretrained and non-pretrained models by constructing a non-convex optimization problem for which public pretraining is necessary to achieve non-trivial error.

 \section{Zero-Concentrated DP vs. Pure and Approximate DP}
 \label{app: approx DP weaker than zcdp}

 \begin{proposition}\cite{bun16}
\label{prop:bun1.3}
If $\Al$ is $\rho$-zCDP, then $\Al$ is  $(\rho + 2\sqrt{\rho \log(1/\delta)}, \delta)$-DP $~\forall \delta > 0$. Moreover, if $\alg$ is $\eps$-DP, then $\alg$ is $\eps^2/2$-zCDP. 
\end{proposition}

\section{Summary table of pure semi-DP and semi-LDP results}
\begin{figure*}[h!]
    \begin{center}
        \includegraphics[width = 0.99\textwidth]{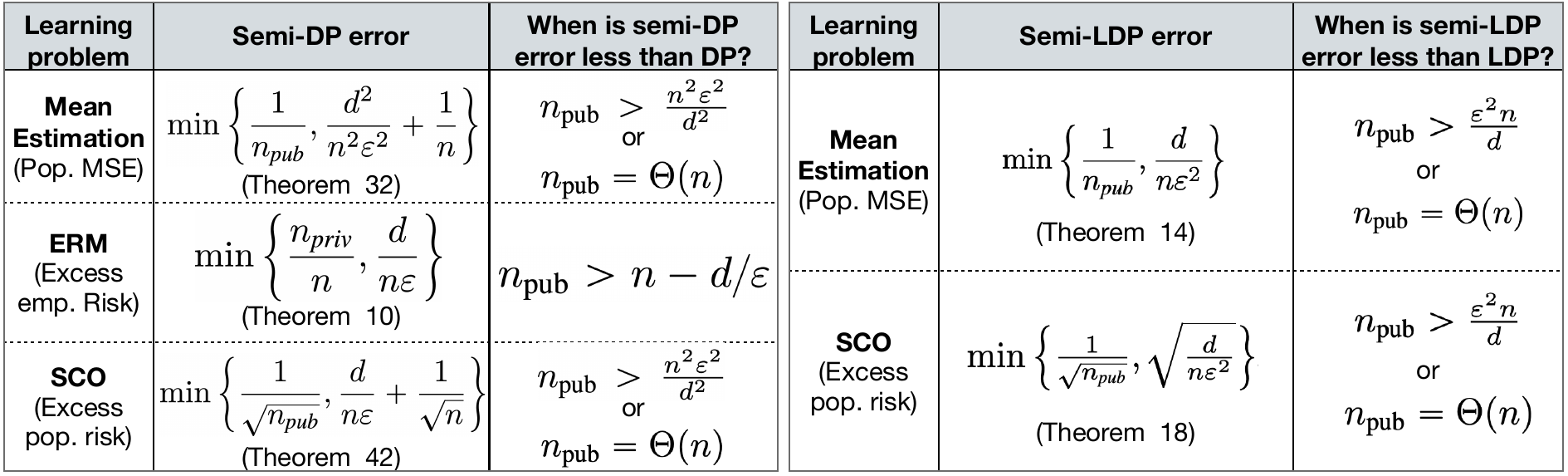}
         \end{center}
      \vspace{-0.15in}
     \caption{\footnotesize 
Minimax optimal error rates for central $\eps$-semi-DP and (local) $\eps$-semi-LDP SCO and mean estimation results. 
Dependence on range and 
Lipschitz parameters, and constraint set diameter omitted. 
Mean estimation and SCO lower bounds are only tight if $\npu = O(n \eps/d)$ or $d = O(1)$. 
Strongly convex ERM and SCO results are included later in this Appendix, but excluded from this table. 
}\label{app table: summary of pure rates}
\vspace{-.1in}
\end{figure*}

\section{Notation}
We recall notation from the main body and include some additional basic definitions below for convenience. 

Let $\| \cdot \|$ be the $\ell_2$ norm. $\WW$ denotes a convex, compact subset of $\mathbb{R}^d$ with $\ell_2$ diameter~$D$. $\XX$ denotes a data universe. Function $g: \WW \to \mathbb{R}$ is \textit{$\mu$-strongly convex} if $g(\alpha w + (1- \alpha) w') \leq \alpha g(w) + (1 - \alpha) g(w') - \frac{\alpha (1-\alpha) \mu}{2}\|w - w'\|^2$ for all $\alpha \in [0,1]$ and all $w, w' \in \WW$. If $\mu = 0,$ we say $g$ is \textit{convex}. For convex $f(\cdot, x)$, denote any \textit{subgradient} of $f(w,x)$ w.r.t. $w$ by $\nabla f(w,x) \in \partial_w f(w,x)$: i.e. $f(w', x) \geq f(w, x) + \langle \nabla f(w,x), w' - w \rangle$ for all $w' \in \WW$. Function $f: \WW \times \XX \to \mathbb{R}$ is \textit{uniformly $L$-Lipschitz} in $w$ if $\sup_{x \in \XX} |f(w, x) - f(w',x)| \leq L\|w - w'\|$. Let $\mathbb{B} = \{x \in \mathbb{R}^d | \|x\| \leq 1\}$ denote the unit $\ell_2$-ball. For functions $a = a(\theta)$ and $b = b(\phi)$ of input parameter vectors $\theta$ and $\phi$, we write $a \lesssim b$ or $a = O(b)$ if there is an absolute constant $C > 0$ such that $a \leq C b$ for all values of input parameter vectors $\theta$ and $\phi$. 
 
\section{Optimal Centrally Private Model Training with Public Data}
\label{app: CDP}
\subsection{Optimal Semi-DP Mean Estimation} 
We begin in~\cref{app: central dp empirical ME} with empirical mean estimation. This subsection was omitted from the main body due to space constraints. \cref{thm: pure DP empirical mean estimation} will be useful for proving our ERM bounds (\cref{thm: DP convex ERM}). Then, in~\cref{app: CDP pop mean est}, we turn to population mean estimation (i.e. the proof of~\cref{thm: CDP mean estimation}). \cref{app: even more optimal ME} contains proofs for~\cref{sec: even more optimal CDP ME}.

\subsubsection{Estimating the Empirical Mean}
\label{app: central dp empirical ME}
For a given data set $X \subset \mathbb{B} := \{x \in \mathbb{R}^d : \|x\| \leq 1\}$, consider the problem of estimating $\Bar{X} = \frac{1}{n}\sum_{i=1}^n x_i$ subject to the constraint that the estimator satisfies semi-DP. We will characterize the minimax squared error of $d$-dimensional empirical mean estimation under $\varepsilon$-semi-DP: 
\begin{equation}
\mathcal{M}_{\text{emp}}(\varepsilon, \npr, n, d) := \inf_{\mathcal{A} \in \mathbb{A}_\varepsilon(\mathbb{B})} \sup_{X \in \mathbb{B}^n, |X_{\text{priv}}| = \npr} \expec_{\mathcal{A}}\left[\| \mathcal{A}(X) - \Bar{X}\|^2\right],
\end{equation}
where $\mathbb{A}_\varepsilon(\mathbb{B})$ denotes the set of all $\varepsilon$-semi-DP estimators $\alg: \mathbb{B}^n \to \mathbb{B}$.

\begin{theorem}
\label{thm: pure DP empirical mean estimation}
Let $\varepsilon > 0$, $n, d \in \mathbb{N}$, $\npr \in [n]$. There exist absolute constants $c$ and $C$ with
$0 < c \leq C$ such that
\[
c\min\left\{ \frac{\npr}{n}, \pen \right\}^2 \leq \emprisk \leq C \min\left\{ \frac{\npr}{n}, \pen \right\}^2.
\]
\end{theorem}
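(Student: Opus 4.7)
The plan is to prove the upper and lower bounds separately; both will be short reductions to well-known one-line facts.

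For the upper bound, I would exhibit two $\varepsilon$-semi-DP estimators and take whichever is better as a function of $(\npr, n, d, \varepsilon)$. The first is the trivial \emph{zero-pad} estimator $\alg_{\mathrm{zp}}(X) = \frac{1}{n}\sum_{x \in \xpu} x$, which discards the private data entirely and is therefore $\varepsilon$-semi-DP for every $\varepsilon \ge 0$ (in fact it is a post-processing of $\xpu$ alone). Since $\|x\|\le 1$, its squared error against $\bar X$ is bounded by $\bigl\|\tfrac{1}{n}\sum_{x\in\xpr} x\bigr\|^2 \le (\npr/n)^2$. The second is the standard $\varepsilon$-DP Laplace mechanism applied to $\bar X$ with $\ell_2$-sensitivity $2/n$ (equivalently, Laplace noise of scale $2\sqrt{d}/(n\varepsilon)$ per coordinate using the bound $\|x-x'\|_1\le\sqrt d\|x-x'\|_2$); this is $\varepsilon$-DP, hence $\varepsilon$-semi-DP, with expected squared error $O\bigl((d/(n\varepsilon))^2\bigr)$. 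Combining the two gives the desired $C\min\{\npr/n,\ d/(n\varepsilon)\}^2$.

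For the lower bound, I would reduce to the known $\varepsilon$-DP minimax lower bound for empirical mean estimation on $\npr$ samples. Given any $\varepsilon$-semi-DP estimator $\alg$ with worst-case squared error $\alpha^2$, consider only data sets $X=(\xpr,\xpu)$ whose public part is the all-zeros data set. Then $\bar X = (\npr/n)\,\bar x_{\mathrm{priv}}$ where $\bar x_{\mathrm{priv}} = \frac{1}{\npr}\sum_{x\in\xpr} x$, so the rescaled estimator $\alg'(\xpr) := \tfrac{n}{\npr}\,\alg(\xpr,\mathbf{0})$ is a well-defined map $\mathbb{B}^{\npr}\to\mathbb{R}^d$ whose worst-case squared error for $\bar x_{\mathrm{priv}}$ is $(n/\npr)^2 \alpha^2$. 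Since public points are fixed, the only way to change a sample in the input of $\alg'$ is to change a private point in the input of $\alg$, so $\alg'$ inherits (pure) $\varepsilon$-DP from the semi-DP guarantee of $\alg$. Invoking the standard Hardt--Talwar/Steinke--Ullman lower bound $\Omega(\min\{1, d/(\npr\varepsilon)\}^2)$ for $\varepsilon$-DP mean estimation of a bounded distribution on $\npr$ points gives $(n/\npr)^2\alpha^2 \gtrsim \min\{1, d/(\npr\varepsilon)\}^2$, i.e.\ $\alpha^2 \gtrsim \min\{(\npr/n)^2,\ (d/(n\varepsilon))^2\}$, as claimed.

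The main obstacle is simply to verify that the reduction preserves pure $\varepsilon$-DP on private data, which is where the semi-DP hypothesis is used cleanly: adjacency of two $\npr$-sample inputs of $\alg'$ corresponds exactly to $x_i$-adjacency (for $x_i\in\xpr$) of the corresponding padded inputs of $\alg$. Everything else is bookkeeping: the $\npr/n$ rescaling in the objective and the appeal to the known $d$-dimensional $\varepsilon$-DP lower bound (which can be invoked as a black box, or reproved via the standard packing of $\{-1,+1\}^d$ with group privacy of order $d$ if a self-contained argument is preferred). Note that no Fano-type machinery is needed here because we are working with worst-case $X$ rather than an i.i.d.\ population, so all the difficulty of~\cref{thm: CDP mean estimation} is absent — the public/private structure collapses to a rescaling of an ordinary $\varepsilon$-DP mean estimation problem.
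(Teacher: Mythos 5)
Your proof is correct, and the upper bound is exactly the paper's: throw-away on the public samples plus the $\eps$-DP Laplace mechanism on $\Bar{X}$. The lower bound has the same mathematical content as the paper's---pad the public slots with zeros, rescale, then a $2^{d/2}$-point packing plus group privacy---but you package it more modularly. The paper carries out the packing argument directly in the semi-DP setting, with a case split on whether $d \gtrsim \eps\npr$, introducing a rescaled map $\widehat{\alg}(\cdot) = \tfrac{n}{n^*}\alg(\cdot, \mathbf{0})$ only in the second case and running two further sub-cases depending on how $n^* := nd/(3\eps\npr)$ compares with $\npr$. You instead do the zero-pad and $n/\npr$ rescaling once up front, observe that the resulting $\alg'$ is a genuine $\eps$-DP estimator of the $\npr$-sample private empirical mean, and invoke the known $\eps$-DP lower bound $\Omega(\min\{1,\,d/(\npr\eps)\}^2)$ as a black box; the paper's case split is exactly the $\min\{1,\cdot\}$ inside that bound. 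What this buys is a shorter proof that makes fully explicit the one place semi-DP is used---adjacency for $\alg'$ corresponds bijectively to $x_i$-adjacency for private $x_i$ in $\alg$, so $\alg'$ inherits full (not merely semi-) $\eps$-DP---which the paper compresses into ``$\widehat{\alg}$ is $\eps$-semi-DP by post-processing.'' What it costs is the need to cite or reprove the plain $\eps$-DP packing lower bound rather than inlining it, which is what the paper chooses to do; the two arguments are otherwise the same computation.
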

\begin{proof}
\textbf{Lower bound:}
We use a packing argument to prove our lower bound. 
Denote $\XX = \left[\frac{-1}{\sqrt{d}}, \frac{1}{\sqrt{d}}\right]^d$. Let $\alg: \mathbb{B}^n \to \mathbb{B}$ be $\epsemi$. Choose $K = 2^{d/2}$ private data points $\{x_i\}_{i=1}^K \subset \left\{\pm \frac{1}{\sqrt{d}}\right\}^d$ such that $\|x_i - x_j\| \geq 1/8$ for all $i \neq j$. The existence of such a set of points is well-known (see e.g. the Gilbert-Varshamov construction). Let $n^* = \frac{nd}{3 \varepsilon \npr}$. 

\noindent \ul{Case 1:} $n \leq n^*$ (i.e. $d \geq 3 \varepsilon \npr$). 
In this case, we'll show that $\expec\|\alg(X) - \Bar{X}\|^2 \gtrsim \left(\frac{\npr}{n}\right)^2$ for some $X \in \XX^n$. For $i \in [K]$, let $X_i = (x_i, \cdots, x_i, \mathbf{0}_{n - \npr})$  consist of $\npr$ copies of $x_i$ followed by $n - \npr = \npu$ copies of $0 \in \mathbb{R}^d$. Suppose for the sake of contradiction that for every $i \in [K]$, with probability $ \geq 1/3$ we have $\|\alg(X) - \Bar{X}\| < \frac{1}{32} \frac{\npr}{n}$. That is, we are supposing $\pr(\alg(X_i) \in \mathcal{B}_i) \geq 1/3$ for all $i$, where $\mathcal{B}_i = \{x \in \mathbb{B} : \|x - \Bar{X}_i\| \leq \frac{1}{32} \frac{\npr}{n}\}$. Note that the sets $\{\mathcal{B}_i\}_{i=1}^K$ are disjoint by construction. Since $\alg$ is $\epsemi$, group privacy implies that $\mathbb{P}(\alg(X_1) \in \mathcal{B}_i) \geq \frac{1}{3} e^{-\eps \npr}$ for all $i \in [K]$. Thus, \begin{align*}
    K \frac{1}{3} e^{-\eps \npr} \leq \sum_{i=1}^K \pr(\alg(X_1) \in \BB_i) \leq 1,
\end{align*}
where the last inequality follows from disjointness of the balls $\BB_i$. Thus, we obtain $\ln(K/3) \leq \eps \npr$. 
Assume for now that $d \geq 8$. (A $1$-dimensional lower bound  that is tight up to constant factors can be shown easily by following the proof of the $d$-dimensional case but choosing $K = 16$ instead of $K = 2^{d/2}$.) Then $d/2 \leq d - 4 \leq \eps \npr$ implies $d \leq 2 \eps \npr$, contradicting the assumption made in \ul{Case 1}. Thus, we conclude that there exists a data set $X_i$ such that with probability at least $2/3$, $\|\alg(X_i) - \Bar{X}_i\| > \frac{1}{32} \frac{\npr}{n}$. Squaring both sides of this inequality and applying Markov's inequality yields the desired lower bound. 

\noindent \ul{Case 2:} $n > n^*$. 

Additionally, suppose for now that $n^* \leq \npr$. In this case, we'll show that $\expec\|\alg(X) - \Bar{X}\|^2 \gtrsim \left(\frac{d}{n\eps}\right)^2$ for some $X \in \XX^n$. Let $\widetilde{X}_i = (x_i, \cdots, x_i, \mathbf{0}_{n - n^*})$ consist of $n^*$ copies of $x_i$ followed by $n - n^*$ copies of $0 \in \mathbb{R}^d$.\footnote{We assume without loss of generality that $n^* \in \mathbb{N}$. If $n^*$ is not an integer, then choosing $\lceil n^* \rceil$ instead yields the same bound up to constant factors.} Denoting the mean of a dataset $X$ by $q(X) := \frac{1}{n} \sum_{x \in X} x$ for convenience, we see that $q(\wt{X}_i) = \frac{n^*}{n} x_i = \frac{d}{3\npr \eps} x_i$. Define the algorithm $\hat{\alg}: \mathbb{B}^{n^*} \to \mathbb{B}$ by $\hat{\alg}(X) = \frac{n}{n^*} \alg(X, \mathbf{0}_{n - n^*})$. Since $\alg$ is $\epsemi$, we see that $\hat{\alg}$ is $\epsemi$ by post-processing. Also, the domain of $\hat{\alg}$ is $\XX^{n^{*}}$ and $n^* \leq n^*$, so the argument in \ul{Case 1} applies to $\hat{\alg}$. Thus, by applying the result in \ul{Case 1}, there exists $i \in [K]$ such that with probability at least $2/3$, $\|\hat{\alg}(x_i, \cdots, x_i) - q(x_i, \cdots, x_i)\| \geq \frac{1}{32} \frac{\npr}{n}$. (Here $(x_i, \cdots, x_i) \in \XX^{n*}$.) But this implies $\|\alg(\widetilde{X}_i) - q(\wt{X}_i)\| \geq \frac{1}{32} \frac{\npr}{n} \frac{n^*}{n} = \frac{1}{96} \frac{d}{n \eps}$ with probability at least $2/3$. Again, squaring both sides and applying Markov yields the desired lower bound. 

Next, consider the complementary subcase where $n^* > \npr$. Define the algorithm $\hat{\alg}: \mathbb{B}^{n^*} \to \mathbb{B}$ by $\hat{\alg}(X) = \alg(X, \mathbf{0}_{n - n^*})$. Since $\alg$ is $\epsemi$, we see that $\hat{\alg}$ is $\epsemi$ by post-processing. Also, the domain of $\hat{\alg}$ is $\XX^{n^{*}}$ and $n^* \leq n^*$, so the argument in \ul{Case 1} applies to $\hat{\alg}$. Thus, by applying the result in \ul{Case 1}, there exists $i \in [K]$ such that with probability at least $2/3$, $\|\hat{\alg}(x_i, \cdots, x_i) - q(x_i, \cdots, x_i)\| \geq \frac{1}{32} \frac{\npr}{n}$. (Here $(x_i, \cdots, x_i) \in \XX^{n*}$.) But this implies that $\|\alg(X_i) - \Bar{X}_i\| \geq \frac{1}{32} \frac{\npr}{n}$ with probability at least $2/3$. Again, squaring both sides and applying Markov yields the desired lower bound. Thus, the lower bound holds in all cases.

\noindent \textbf{Upper bound:} For the first term in the minimum, consider the algorithm which throws away the private data and outputs $\alg(X) = \frac{1}{n}\sum_{x \in X_{\text{pub}}} x$. Clearly, $\Al$ is semi-DP. Moreover, \[
\| \alg(X) - \Bar{X}\|^2 = \frac{1}{n^2}\left\|\sum_{x \in \xpr} x\right\|^2 \leq \frac{\npr}{n^2} \sum_{x \in \xpr} \|x\|^2 \leq \left(\frac{\npr}{n}\right)^2. 
\]
For the second term, consider the Laplace mechanism $\alg(X) = \Bar{X} + (L_1, \cdots, L_d)$, where $L_i \sim \lap(2\sqrt{d}/n\varepsilon)$ are i.i.d. mean-zero Laplace random variables. We know $\alg$ is $\varepsilon$-DP by~\cite{dwork2014}, since the $\ell_1$-sensitivity is $\sup_{X \sim X'}\| \Bar{X} - \Bar{X}'\|_1 = \frac{1}{n}\sup_{x, x'} \|x - x'\|_1 \leq \frac{2 \sqrt{d}}{n}$. Hence $\alg$ is $\varepsilon$-semi-DP. Moreover, $\alg$ has error \[
\expec\|\alg(X) - \Bar{X}\|^2 = d ~\var(\lap(2\sqrt{d}/n\varepsilon)) = \frac{8d^2}{n^2 \varepsilon^2}.
\]
Combining the two upper bounds completes the proof. 
\end{proof}

\subsubsection{Estimating the Population Mean}
\paragraph{Approximate $(\eps, \delta)$-Semi-DP Mean Estimation}
\label{app: CDP pop mean est}
\begin{theorem}[Formal statement of~\cref{thm: CDP mean estimation}]
\label{thm: CDP mean estimation app}
Let $\eps \lesssim 1/\log(nd)$, $\delta \ll 1/\npr$. Then, there is an absolute constant $C> 0$ such that
\begin{equation}
\label{eq: delta > 0 app}
\ell(d, n) \min\left\{\frac{1}{\npu}, \frac{d}{n^2 \eps^2} + \frac{1}{n}\right\} \leq \popDPrisk \leq C \min\left\{\frac{1}{\npu}, \frac{d \ln(1/\delta)}{n^2 \eps^2} + \frac{1}{n}\right\},
\end{equation}
where $1/\ell(d,n)$ is logarithmic in $d$ and $n$. The lower bound holds for symmetric algorithms $\alg = (\alg^1, \ldots, \alg^d)$ such that $\alg^j = \alg^l$ for all $j, l \in [d]$. 
\end{theorem}

For the lower bound, we will first prove a stronger result in~\cref{thm: approx DP mean estimation with small theta}, in which we construct a hard distribution whose mean is small---scaling with the accuracy lower bound that we aim to prove. This ``small mean'' property will be needed for our semi-DP SCO lower bound (\cref{thm: approx DP SCO}), even though it is not necessary for the proof of~\cref{thm: CDP mean estimation app}. 
\begin{theorem}
    \label{thm: approx DP mean estimation with small theta}
Let $\XX = \left\{\pm \frac{1}{\sqrt{d}}\right\}^d$, $\eps \lesssim 1/\log(nd)$, and $\delta \ll 1/\npr$. Then, for any symmetric $(\eps, \delta)$-semi-DP $\alg$, there exists a product distribution $P$ on $\XX$ with $\|\expec_{x \sim P}[x]\| \leq \min\left(\frac{1}{\sqrt{\npu}}, \frac{\sqrt{d}}{\eps \npr}\right)$ such that \[
\expec\left[\|\alg(X) - \expec_{x \sim P}[x]\|^2\right] = \widetilde{\Omega}\left(\min\left\{\frac{1}{\npu}, \frac{d}{\eps^2 \npr^2} + \frac{1}{n}\right\} \right). 
\]
\end{theorem}

For any symmetric $\alg$ and any distribution $P$ with $X \sim P^n$, we have \begin{align}
\label{eq: bias}
    \expec \|\alg(X) - \expec_{x \sim P}[x]\|^2 &= \expec \|\alg(X) - \expec \alg(X)\|^2 + \|\expec \alg(X) - \expec_{x \sim P}[x]\|^2 \nonumber \\
    &\geq \|\expec \alg(X) - \expec_{x \sim P}[x]\|^2 \nonumber \\
    & = d |\expec \alg^1(X) - \expec[x^1]|,
\end{align}
where the last equality used assumption that $\alg$ is symmetric. For $a \in \mathbb{R}$, scalar random variable $x^j \sim P_a$ with mean $\expec[x^j] = a$ and $X^j \sim P_a^n$ denote 
\[
\text{Bias}_a(\alg) := |\expec \alg^j(X^j) - a|.
\]
\begin{definition}[Low bias algorithms]
We say symmetric $\alg$ is \textit{low bias} if for every $a \in \mathbb{R}$, \[
\text{Bias}_a(\alg)^2 \leq \frac{1}{d}\min\left(\frac{1}{\npu}, \frac{1}{n} + \frac{d}{n^2 \eps^2}\right).
\] 
\end{definition}

We can assume without loss of generality that $\alg$ is low bias when we are proving the lower bound in~\cref{thm: CDP mean estimation app}: if $\alg$ is not low bias, then inequality~\eqref{eq: bias} implies that the worst-case MSE of $\alg$ is lower bounded as in \eqref{eq: delta > 0 app}.

To prove \cref{thm: approx DP mean estimation with small theta} for low bias and symmetric $\alg$, we will use~\cref{thm: DSS15 l2 version}.\footnote{For convenience, we work with data drawn from $\{\pm 1\}^d$ and then re-scale to obtain the final mean estimation lower bound.} This result shows that any sufficiently accurate $\alg$ is vulnerable to an attack that traces many individuals in the data set with high probability. 

\begin{theorem}
\label{thm: DSS15 l2 version}
Let $a \in (0, 1]$. Consider the product distribution on $\pmone$ defined in the following way: for $j \in [d]$, independently draw $\theta^j \sim \textbf{Unif}\left([-a, a]\right)$ and $x_i^j \sim P_\theta$ such that $x_i^j \in \pmones$ with mean $\expec_{x_i^j \sim P_{\theta^j}}[x_i^j] = \theta^j$ for $i \in [n]$. Denote $X = (x_1, \ldots, x_n) \sim P_\theta^n$ where $\theta = (\theta^1, \ldots, \theta^d)$ and $\ptheta = \Pi_{j=1}^d P_{\theta^j}$. Let $\alg: \pmone \to [-a, a]^d$ satisfy $\expec_{X \sim \ptheta^n}[\alg(X)] = \theta$ and $\expec\|\alg(X) - \theta\|^2 \leq \alpha^2$ for $\sqrt{\frac{d}{n}} \leq \alpha \leq \frac{a \sqrt{d}}{100}$. Assume $d > 400 \alpha n \sqrt{\ln(2/\delta)}$. Moreover, assume that $\alg(X) = (\alg^1(X), \ldots, \alg^d(X))$ with $\alg^j = \alg^l$ for all $j, l \in [d]$. Then, the attack $\mathcal{I}: \pmone \times [-a, a]^d \to \{IN, OUT\}$ described in~\cref{alg: attack} satisfies the following properties: a) if $y \sim \ptheta$ independently of $X$, then $P(\attack(y, \widetilde{\alg}(X)) = IN) \leq \delta$; and b) $P(|\{i \in [n]: \attack(x_i, \widetilde{\alg}(X)) = IN \}| \geq \frac{d}{10^6 \alpha^2}) \geq 1 - \delta$,
where 
$\widetilde{\alg}$ 
is $(\widetilde{O}(\eps), \widetilde{O}(\zeta))$-semi-DP if and only if $\alg$ is $(\eps, \zeta)$-semi-DP, and $\expec_{X \sim \ptheta^n}[\widetilde{\alg}(X)] = \theta$.
\end{theorem}

The proof of~\cref{thm: DSS15 l2 version} uses a convenient reduction in~Lemma~\ref{lem: claim}: for any low bias and symmetric $\alg$ that has small expected mean squared error (in $\ell_2$-norm), there exists another low bias mechanism $\widetilde{\alg}$ that has even smaller $\ell_\infty$ accuracy with high probability. Moreover, $\alg$ is $(\eps, \delta)$-semi-DP if and only if $\widetilde{\alg}$ is $(\widetilde{O}(\eps), \widetilde{O}(\delta))$-semi-DP. This lemma allows us to: construct a new product distribution whose mean is small, modify the attack of~\citet{dwork2015robust}, and derive a generalized fingerprinting lemma (Lemma~\ref{lem: fingerprint}) in order to prove~\cref{thm: DSS15 l2 version}. 

With~\cref{thm: DSS15 l2 version} in hand, we leverage the proof technique of~\citet{bassily2020privatequery} to show that any sufficiently accurate $\widetilde{\alg}$ must leak the data of more than $\npu$ individuals and thus cannot be $(\widetilde{O}(\eps), \widetilde{O}(\delta))$-semi-DP. But by Lemma~\ref{lem: claim}, this means that $\alg$ cannot be $(\eps, \delta)$-semi-DP. 
Below, we discuss the attack that we use and then fill in the details of the proof.

The attack in~\cref{alg: attack} is
a modification of the robust tracing attack of~\citet{dwork2015robust}. The attacker in~\cref{alg: attack} receives as input the output $q \sim \alg(X)$ of a mechanism, a target point $y$, and the mean of the data distribution $P$, $\theta = \expec_{x \sim P}[x]$. (An estimate of the true mean or access to sufficiently many independent draws from $P$ would also suffice in lieu of $\theta$.) The target $y$ is either a data point used by $\alg(X)$ (i.e. $y \in X$) or an independent draw from the distribution $P$ that $X$ was drawn from. The attacker aims to infer whether or not $y$ was in $X$. If the attacker outputs IN when y is in $X$ and OUT when $y$ is not in $X$ (i.e. the attack succeeds) with high probability, then the algorithm $\alg$ is not private. The truncation parameter is $\eta = 2 \alpha/\sqrt{d}$, where $\alpha$ is the expected $\ell_2$-error of the mechanism $\alg$. The parameter $\delta$ can be chosen by the attacker. 

\begin{algorithm}
\caption{Tracing Attack Against $\ell_2$-Accurate Mechanisms}
\label{alg: attack}
\begin{algorithmic}[1]
\STATE {\bfseries Input:} Target $y \in \{\pm 1\}^d$, mean $\theta \in [-a,a]^d$, output of mechanism $q \sim \alg(X)$. 
 \STATE Let $\eta := 2\alpha/\sqrt{d}$.
  \STATE Let $\trunc \in [-\eta, \eta]^d$ denote the entrywise truncation of $q - \theta$. 
  \STATE Compute $\langle y - \theta, \trunc \rangle = \sum_{j=1}^d (y^j - \theta^j)\lfloor q - \theta \rceil_{\eta}^j$.
  \IF{$\langle y - \theta, \trunc \rangle > \tau := 2\eta \sqrt{d \ln(1/\delta)}$}
		    \STATE {\bfseries Output:} IN.
		\ELSE
		     \STATE {\bfseries Output:} OUT.
		\ENDIF
\end{algorithmic}
\end{algorithm}

Compared to~\cite{dwork2015robust}, we choose a smaller truncation parameter to handle the $\ell_2$ case ($\eta = 2\alpha/\sqrt{d}$ instead of $2\alpha$). We also scale the ``IN/OUT'' decision threshold $\tau$ accordingly. Moreover, since we are not concerned with the size of the reference sample, we assume that the attacker knows the mean of the data distribution. Thus, we use $\theta$ in place of the average of reference samples to give a simpler attack than the one in~\cite{dwork2015robust}. This is not necessary for our attack to work: the attacker just needs a sufficiently close approximation to the true mean (which can be attained with access to enough reference samples) for our proof to go through.

\cref{thm: DSS15 l2 version} builds on~\citep[Theorem 17]{dwork2015robust}, which gave a similar result for $\ell_\infty$-accurate mechanisms and $\theta$ drawn from a so-called \textit{strong distribution} (see \citep[Definition 5]{dwork2015robust}). By contrast, \cref{thm: DSS15 l2 version} only requires a weaker $\ell_2$-accuracy guarantee and uses a distribution which is not ``strong'' in the sense required by~\citep[Theorem 17]{dwork2015robust}. Our generalization of the \textit{fingerprinting lemma}~\cite{bsu17}, given below, allows us to handle such a distribution, and is the first step towards proving~\cref{thm: DSS15 l2 version}: 

\begin{lemma}[Generalized Fingerprinting Lemma]
\label{lem: fingerprint}
Let $a \in (0, 1]$. Draw $\theta \sim \textbf{Unif}\left([-a, a]\right)$ and $x_i \sim P_\theta$ be such that $x_i \in \pmones$ with mean $\expec_{x_i \sim P_{\theta}}[x_i] = \theta$ for $i = 1, \ldots, n$. Denote $X = (x_1, \ldots, x_n) \sim P_\theta^n$. Then, for any
$f:\pmones^n \to \mathbb{R}$ that does not depend directly on $\theta$, we have 
\begin{align*}
\expec\left[ (f(X) - \theta) \sum_{i=1}^n (x_i - \theta) \right] &\geq \frac{a^2}{3} - \expec \left[(f(X) - \theta)^2 \right] \\
&\;\;\; + \frac{1 - a^2}{2a}\left[\expec_{X \sim P_a^n}[f(X)] - \expec_{X \sim P_{-a}^n}[f(X)]\right],
\end{align*}
where all expectations are over the random draw of $\theta$ and $X \sim P_\theta^n$ unless otherwise indicated. 
\end{lemma}
\begin{proof}
Define $g: [-a, a] \to \mathbb{R}$ by $g(\theta) := \expec_{X \sim P_\theta^n}[f(X)]$. Our first claim is \begin{equation}
\label{eq: A.1}
    \expec_{X \sim \ptheta^n}[(f(X) - \theta) \sum_{i=1}^n (x_i - \theta)] = g'(\theta) (1 - \theta^2). 
\end{equation}
To prove~\cref{eq: A.1}, write \[
g(\theta) = \sum_{X \in \pmones^n} f(X) \prod_{i=1}^n \frac{1 + (\theta/x_i)}{2}
\]
and \begin{align*}
    g'(\theta) &= \sum_{X \in \pmones^n} f(X) \frac{d}{d\theta} \left[\prod_{i=1}^n \left(\frac{1 + (\theta/x_i)}{2} \right) \right] \\
    &= \sum_{X \in \pmones^n} f(X) \sum_{i=1}^n \frac{d}{d\theta} \left( \frac{1 + (\theta/x_i)}{2}\right) \prod_{k \in [n] \setminus \{i\}} \left( \frac{1 + (\theta/x_k)}{2}\right) \\
    &=  \sum_{X \in \pmones^n} f(X) \sum_{i=1}^n \frac{1}{x_i + \theta} \prod_{i \in [n]} \left( \frac{1 + (\theta/x_i)}{2}\right) \\
    &= \expec_{X \sim \ptheta^n}\left[ f(X) \sum_{i=1}^n  \frac{1}{x_i + \theta} \right] \\
    &= \expec_{X \sim \ptheta^n}\left[ f(X) \sum_{i=1}^n  \frac{x_i - \theta}{1 - \theta^2} \right]. 
\end{align*}
Since $\expec_{X \sim \ptheta^n}\left[\theta \sum_{i=1}^n (x_i - \theta) \right] = 0$, \cref{eq: A.1} is proved. 

Next, we claim: for any differentiable function $g:[-a, a] \to \mathbb{R}$, we have \begin{equation}
    \label{eq: A.2}
    \expec_{\theta \sim \textbf{Unif}[-a, a]}[g'(\theta) (1 - \theta^2)] = 2 \expec_{\theta \sim \textbf{Unif}[-a, a]}[g(\theta) \theta] + \frac{1 - a^2}{2a}[g(a) - g(-a)].
\end{equation}
To prove~\cref{eq: A.2}, let $u(\theta) = 1 - \theta^2$ and use the product rule and fundamental theorem of calculus to write \begin{align*}
\expec_{\theta \sim \textbf{Unif}[-a, a]}[g'(\theta) (1 - \theta^2)] &= \frac{1}{2a} \int_{-a}^a g'(\theta) u(\theta) d\theta \\
&= \frac{1}{2a} \int_{-a}^a  \left[\frac{d}{d\theta} \left( g(\theta) u(\theta) \right) - g(\theta) u'(\theta) \right] d\theta \\
&= \frac{1}{2a} \left[ g(a) u(a) - g(-a) u(-a) \right] - \frac{1}{2a} \int_{-a}^a g(\theta) \cdot (-2\theta) d\theta \\
&= \frac{1 - a^2}{2a}[g(a) - g(-a)] + 2\expec_{\theta \sim \textbf{Unif}[-a, a]}[g(\theta) \theta].
\end{align*}

Now we will apply~\cref{eq: A.1} and~\cref{eq: A.2} with the differentiable function $g: [-a, a] \to \mathbb{R}$, $g(\theta) := \expec_{X \sim P_\theta^n}[f(X)]$ to obtain \begin{align*}
\expec\left[(f(X) - \theta) \sum_{i=1}^n (x_i - \theta) \right] &= \expec_{\theta \sim \textbf{Unif}[-a, a]}[g'(\theta) (1 - \theta^2)] \\
&= 2 \expec_{\theta \sim \textbf{Unif}[-a, a]}[\theta g(\theta)] + \frac{1 - a^2}{2a}[g(a) - g(-a)].
\end{align*}
Moreover, \begin{align*}
\expec[(f(X) - \theta)^2] &\geq -2\expec[g(\theta) \theta] + \expec[\theta^2] \\
&= -2\expec[g(\theta) \theta] + \frac{a^2}{3}. 
\end{align*}
Combining the above pieces completes the proof.
\end{proof}

We state a lemma that will allow us to conveniently assume without loss of generality that the given mechanism $\alg$ is $\ell_\infty$-accurate: 
\begin{lemma}
\label{lem: claim}
Consider $\theta \sim \textbf{Unif}([-a,a]^d)$ and $X \sim \ptheta^n$ be distributed as described above. 
Suppose $\expec\|\alg(X) - \theta\|^2 \leq \alpha^2$, where $\alg: \pmone \to [-a,a]^d$, $\alg(X) = (\alg^1(X) \ldots, \alg^d(X))$. Assume that $\alg(X)$ is a low bias estimator of $\theta$ and that $\alg^j = \alg^l$ for all $j, l$. Then, for any $0 < \beta \leq \min\left(\alpha^2/4d, 1/(n^2 d^2)\right)$, there exists a low bias $\widetilde{\alg}$ such that $P(|\widetilde{\alg}^j(X) - \theta^j| > \frac{2 \alpha}{\sqrt{d}}) \leq \beta$, $\widetilde{\alg}^j = \widetilde{\alg}^l$ for all $j, l \in [d]$, and $\expec\|\widetilde{\alg}(X) - \theta\|^2 \leq 5\alpha^2$. Also, $\alg$ is $(\eps, \delta)$-semi-DP if and only if $\widetilde{\alg}$ is $(\widetilde{O}(\eps), \widetilde{O}(\delta))$-semi-DP.
\end{lemma}
\begin{proof}
By the assumption that $\alg^j = \alg^l$ for all $j,l$ and the fact that $X$ is drawn from a product distribution, we have \[
\expec\|\alg(X) - \theta\|^2 = d \expec|\alg^1(X) - \theta^1|^2.
\]
Thus, $\expec\|M^j(X) - \theta^j|^2 \leq \frac{\alpha^2}{d}$ for all $j$.
(We are also assuming without loss of generality here that $\alg^j$ depends only on $X^j$: if this is not the case, then it's easy to see by the choice of distribution that there exists another algorithm with smaller error than $\alg$ satisfying this assumption.) 
Also, \begin{align}
    P\left(|\alg^j(X) - \theta^j| > \frac{2\alpha}{\sqrt{d}}\right) &\leq \frac{\expec|\alg^j(X) - \theta^j|^2}{(2\alpha/\sqrt{d})^2} \leq \frac{1}{4},
\end{align}
by Chebyshev's inequality. 
To construct $\widetilde{\alg}^j$ we will use the well-known ``median trick'': run each $\alg^j$ for $m = O(\log(1/\beta))$ times, each time using a batch $X_i$ of $n/m$ independent samples in $X$. Then take the median of the $m = O(\log(1/\beta))$ outputs: $\widetilde{\alg}^j(X) = \text{median}(\alg_1^j(X_1), \ldots, \alg_m^j(X_m))$. Then, $P(|\widetilde{\alg}^j(X) - \theta^j| > \frac{2\alpha}{\sqrt{d}}) \leq \beta$ by a Chernoff/Hoeffding bound. Applying the law of total expectation with $\beta \leq \min\left(\alpha^2/4d, 1/(n^2 d^2)\right)$ establishes the expected $\ell_2$-accuracy claim for $\widetilde{\alg}$. Moreover, 
$\widetilde{\alg}$ is low bias since for all $j \in [d]$, we have $\widetilde{\alg}^j(X) \in \alg_i^j(X_i)$ for some $i \in [m]$ with probability $1$ and $\alg$ is low bias. 

We now prove the privacy claim. First note that $\alg$ is $(\eps, \delta)$-semi-DP if and only if $\alg^j$ is $(\widetilde{O}(\eps/\sqrt{d}), O(\delta/d))$-semi-DP by the advanced composition theorem~\cite{dwork2014}, since $\alg^j = \alg^l$ for all $j, l \in [d]$. Also, $\alg^j$ is $(\eps', \delta')$-semi-DP if and only if $\widetilde{\alg}^j$ is $(\widetilde{O}(\eps'), \widetilde{O}(\delta'))$-semi-DP by parallel composition of (semi) DP~\cite{mcsherry2009privacy} and the fact that $m = \widetilde{O}(1)$. Since $\widetilde{\alg}^j = \widetilde{\alg}^l$ for all $j, l$ by construction, we can conclude that $\alg$ is $(\eps, \delta)$-semi-DP if and only if $\widetilde{\alg}$ is $(\widetilde{O}(\eps), O(\delta))$-semi-DP. 
\end{proof}

By Lemma~\ref{lem: claim} and the preceding discussion, \textit{it suffices to assume that the given mechanism $\alg$ is low bias, $2\alpha/\sqrt{d}$-$\ell_\infty$-accurate with probability at least $1-\beta$ for any $\beta > 0$, $\alg^j = \alg^l$ for all $j, l$, and that $\alg$ has expected $\ell_2$-mean-squared-error bounded by $\alpha^2$, 
for the remainder of the proof} of~\cref{thm: DSS15 l2 version}. We also assume in what follows that $\theta$ is drawn uniformly at random from $[-a,a]^d$ and that conditional on $\theta$, the data $X$ is drawn i.i.d. from the product distribution $\ptheta$, as described in the statement of~\cref{thm: DSS15 l2 version}. 

Now, we will use Lemma~\ref{lem: fingerprint} to lower bound the sum of expected scores $\sum_{i=1}^n \expec \langle x_i - \theta,  \trunc \rangle$:
\begin{lemma}
\label{lem: lemma19}
Let $1 \geq a > 100 \alpha/\sqrt{d}$. Suppose $\expec\|\alg(X) - \theta\|^2 \leq \alpha^2$, $\alg^j = \alg^l$ is low bias for all $j, l \in [d]$,
and $P(|\alg^j(X) - \theta^j| > \eta) \leq 1/48n$. Then, \[
\expec\left[\sum_{i=1}^n (x_i^j - \theta^j) \trunca^j \right] \geq \frac{1}{24}
\]
for all $j \in [d]$.
\end{lemma}
\begin{proof}
Note that $\expec[(\alg^j(X^j) - \theta^j)^2] \leq \frac{\alpha^2}{d}$ because $\alg^j = \alg^l$ for all $j,l$ and $\expec\|\alg(X) - \theta\|^2 \leq \alpha^2$. 
By Lemma~\ref{lem: fingerprint}, we have \begin{align*}
    \expec\left[\sum_{i=1}^n (x_i^j - \theta^j) (\alg^j(X^j) - \theta^j) \right] &\geq \frac{a^2}{3} - \expec[(\alg^j(X^j) - \theta^j)^2] \\
    &\;\;\; + \frac{1 - a^2}{2a} \left(\expec_{X^j \sim P_a^n}[\alg^j(X^j) - a] - \expec_{X^j \sim P_{-a}^n}[\alg^j(X^j) + a]\right) + 1 - a^2 \\
    &\geq 1 -\frac{2}{3}a^2 - \frac{\alpha^2}{d} 
    \\
    &\geq 1/6,
\end{align*}
since $\alg^j$ is low bias and using the assumptions on the values of $a$ and $\alpha$. 

Also, by the law of total expectation, we have \begin{align*}
\expec\left[\left((\alg(X)^j - \theta^j) - \trunca^j\right)\sum_{i=1}^n (x_i^j - \theta^j) \right] &\leq 2n \expec\left[\left((\alg(X)^j - \theta^j) - \trunca^j\right) \Big | \normalsize |\alg(X)^j - \theta^j| > \eta \right] \frac{1}{48n} \\
&\leq \frac{1}{12}.
\end{align*}
Combining the above inequalities completes the proof. 
\end{proof}
As mentioned earlier, $P(|\alg^j(X) - \theta^j| > \eta) \leq 1/48n$ (and the other assumptions in the lemma) can be ensured by Lemma~\ref{lem: claim}.

Below we obtain a high probability lower bound on the sum of the scores:
\begin{proposition}
\label{prop20}
Let $1 \geq a > 100 \alpha/\sqrt{d}$. Suppose $\expec\|\alg(X) - \theta\|^2 \leq \alpha^2$, $\alg^j = \alg^l$ is low bias for all $j, l \in [d]$,
and $P(|\alg^j(X) - \theta^j| > \eta) \leq 1/48n$. If $d \geq 200 \alpha n \sqrt{\ln(1/\delta)}$, then \[
P\left[\sum_{i=1}^n \langle x_i - \theta, \trunca \rangle \geq \frac{d}{48}\right] \geq 1 - \delta. 
\]
\end{proposition}
\begin{proof}
We use the concentration inequality of~\citet[Theorem 36]{dwork2015robust} and~Lemma~\ref{lem: lemma19}. Specifically, Lemma~\ref{lem: lemma19} implies that the assumptions of~\citet[Theorem 36]{dwork2015robust} are satisfied with $X_{i,j} = x_i^j - \theta^j$, $c = 1/2$, $Y_j = \trunca^j$, $\gamma_j = 1/24$, $\alpha \to \eta$, and $\mathbf{a} = \mathbbm{1}_n$ as the vector of all $1$'s. Thus, for any $\lambda > 0$, we have \begin{align*}
P\left[\sum_{i=1}^n \langle x_i - \theta, \trunca \rangle < \frac{d}{24} - \lambda \right] &\leq \exp\left(- \frac{\lambda^2}{8 d \eta^2 n^2} \right) \\
&= \exp\left(- \frac{\lambda^2}{8 \alpha^2 n^2} \right).  
\end{align*}
Choosing $\lambda = \alpha n \sqrt{8 \ln(1/\delta)} \leq d/48$ completes the proof. 
\end{proof}

Next, we provide a high probability upper bound on the sum of the squared scores: 
\begin{proposition}{\cite{dwork2015robust}}
    \label{prop23}
Fix $\theta \in [-1, 1]^d$ and let $X \sim \ptheta^n$. Assume $d \geq 64(n + \sqrt{\ln(1/\delta)})$. Then, \[
P\left[\sum_{i=1}^n \langle x_i - \theta, \trunca \rangle^2 > 4\eta^2 d^2\right] \leq \delta. 
\]
\end{proposition}
\begin{proof}
    This is immediate from \citet[Lemma 22 and Proposition 23]{dwork2015robust} and the proofs of these results. 
\end{proof}

The next elementary lemma is taken verbatim from~\citep[Lemma 24]{dwork2015robust}:
\begin{lemma}{\cite{dwork2015robust}}
\label{lem24}
Let $\sigma \in \mathbb{R}^n$ satisfy $\sum_{i=1}^n \sigma_i \geq A$ and $\sum_{i=1}^n \sigma_i^2 \leq B^2$. Then, \[
\left|\left\{i \in [n] : \sigma_i \geq \frac{A}{2n} \right\} \right| \geq \left(\frac{A}{2B}\right)^2. 
\]
\end{lemma}

We are now prepared to prove~\cref{thm: DSS15 l2 version}:
\begin{proof}[Proof of~\cref{thm: DSS15 l2 version}]
For notational convenience, we will assume that $\widetilde{\alg} = \alg$ is low bias, symmetric ($\alg^j = \alg^l$), and accurate in both expected $\ell_2$ norm and high probability $\ell_\infty$ norm. This is without loss of generality, by~Lemma and Lemma~\ref{lem: claim}. We first prove a): Assume $y$ is independent of $X$ (drawn from the same distribution). By Hoeffding's inequality, 
\begin{align*}
P\left(\mathcal{I}(y, \alg(X)) = IN\right) &= P\left(\langle y - \theta, \trunca \rangle > \tau = 2\eta \sqrt{d \ln(1/\delta)} \right) \\
&\leq \exp\left(- \frac{2 \tau^2}{d (4\eta)^2}\right) \leq \delta. 
\end{align*}

Next, we prove b): Note that the assumptions on $d$ and $\alpha$ imply that $d \geq 64(n + \sqrt{\ln(1/\delta)})$. Proposition~\ref{prop20} implies that \[
\sum_{i=1}^n \langle x_i - \theta, \trunca \rangle \geq \frac{d}{48} =: A
\]
with probability at least $1 - \delta$. Proposition~\ref{prop23} implies that \[
\sum_{i=1}^n \langle x_i - \theta, \trunca \rangle^2 \leq 4\eta^2 =: B^2 
\]
with probability at least $1 - \delta$. By a union bound, both of the above events occur with probability at least $1 - 2\delta$. Then, Lemma~\ref{lem24} implies that \[
\left|\left\{i \in [n] : \langle x_i - \theta, \trunca \rangle \geq \frac{d}{96n} \right\} \right| \geq \left(\frac{A}{2B}\right)^2 \geq \frac{d}{10^6 \alpha^2}.
\]
Moreover, $\frac{d}{96n} \geq \tau = 4\alpha \sqrt{\ln(1/\delta)}$ by the assumption $d > 400 \alpha n \sqrt{\ln(1/\delta)}$. This completes the proof. 
\end{proof}

Now we are ready to prove~\cref{thm: approx DP mean estimation with small theta}: 
\begin{proof}[Proof of~\cref{thm: approx DP mean estimation with small theta}]
We will first prove a lower bound that is larger than the one stated in~\cref{thm: approx DP mean estimation with small theta} by a factor of $d$ for distributions on $\XX' := \{\pm 1\}^d$ and $\|\expec_{x \sim P}[x]\| \leq \sqrt{d} a$, where $a := \min\left(\frac{1}{\sqrt{\npu}}, \frac{\sqrt{d}}{\eps \npr}\right)$. We will re-scale at the end to obtain~\cref{thm: approx DP mean estimation with small theta}.

By a standard reduction (see e.g.~\citep[Lemma 2.5]{bun2014fingerprinting}), it suffices to prove the lower bound for $\eps = 1$.  Let $\ptheta$ be the product distribution described in \cref{thm: DSS15 l2 version} with $\theta^j \sim \textbf{Unif}([-a,a])$ for $j \in [d]$ and $a = \min\left(\frac{1}{\sqrt{\npu}}, \frac{\sqrt{d}}{\eps \npr}\right)$. Let $\alg = (\alg^1, \ldots, \alg^d)$ be $(1, o(1/\npr))$-semi-DP with $\expec\|\alg(X) - \theta\|^2 \leq \alpha^2 := \sup_{P}\expec_{X \sim P^n}\| \alg(X) - \expec_{x \sim P}[x]\|^2$, where the supremum is taken over all distributions on $\XX'$. By Lemma~\ref{lem: claim} and our earlier discussions, we assume without loss of generality that $\alg$ is low bias, $\alg^j = \alg^l$ for all $j, l \in [d]$, and that $\|\alg(X) - \expec_{x \sim \ptheta}[x]\|_\infty \leq 2\alpha/\sqrt{d}$ with arbitrarily high probability. 

Note that $\alpha^2 \gtrsim \frac{d}{n}$ always holds by the non-private lower bound. Moreover, if $\alpha^2 > da^2/10000$, then we are done. Thus, we may assume $\alpha^2 \leq da^2/10000$. We will derive a contradiction under this assumption. 

Let $\xpr = (x_1, \ldots, x_{\npr})$ denote the private samples in $X$ and $\xpu = (w_1, \ldots, w_{\npu})$ denote the public samples in $X = (\xpr, \xpu) = (z_1, \ldots, z_n) \sim \ptheta^n$. Let $r = \frac{d}{400 \alpha \sqrt{\ln(2/\gamma)}}$, $t = \frac{d}{10^6 \alpha^2}$, and $\gamma \leq \frac{1}{(r+t)^2}$. We will show that if $\alpha \ll \frac{d}{\npr}$ and $\alpha \ll \sqrt{\frac{d}{\npu}}$, then $\alg$ is not $(1, o(1/\npr))$-semi-DP. Assume $\npr < r$, $\npu < t/2 - 1$, and $t < n < r + (t/2 - 1)$. Thus, the assumptions in~\cref{thm: DSS15 l2 version} hold. We will show that the attack given in~\cref{alg: attack} succeeds at identifying at least $\npu + 1$ people in $X$ with high probability: By part b) of~\cref{thm: DSS15 l2 version} with $\delta = \gamma$, we have \begin{align*}
    P(|\{i \in [n]: \attack(z_i, \alg(X)) = IN \}| \geq \npu + 1) &\geq P(|\{i \in [n]: \attack(z_i, \alg(X)) = IN \}| \geq t/2) \\
    &\geq 1 - \frac{1}{(r+t)^2} \\
    &\geq 1 - \frac{1}{\npr^2}. 
\end{align*}
That is, $\attack$ identifies at least $\npu + 1$ individuals in the data set with high probability $\geq 1 - 1/\npr^2$. Let $v_i = \mathbbm{1}_{\{\attack(z_i, \alg(X)) = IN \}}$ be the indicator of the event $\attack(z_i, \alg(X)) = IN$. By Markov's inequality, we have \[
\expec \left[\sum_{i=1}^n v_i\right] = \sum_{i=1}^{\npr} P(\attack(x_i, \alg(X)) = IN) + \sum_{i=1}^{\npu} P(\attack(w_i, \alg(X)) = IN) \geq (\npu + 1)(1 - 1/n^2). 
\]
Now, $\sum_{i=1}^{\npu} P(\attack(w_i, \alg(X)) = IN) \leq \npu$, which implies that \begin{align*}
\sum_{i=1}^n P(\attack(x_i, \alg(X)) = IN) &\geq (\npu + 1)(1 - 1/n^2) - \npu \\
&\geq 1 - 1/\npr\\
&\geq 1/2.
\end{align*}
Thus, there exists a private sample $x_{i^*} \in \xpr$ such that \[
P(\attack(x_{i^*}, \alg(X)) = IN) \geq 1/2\npr.
\]
Now consider the adjacent data set $X'$ obtained by replacing $x_{i^*}$ with an independent sample $y \sim \ptheta$. Then, part a) of \cref{thm: DSS15 l2 version} implies \[
P(\attack(x_{i^*}, \alg(X')) = IN) \leq \frac{1}{(r+t)^2} \leq \frac{1}{n^2} \leq \frac{1}{\npr^2},
\]
where the probability is taken over the random independent draws of all the data points (including $y$) and the mechanism $\alg$. Thus, $\alg$ cannot satisfy $(\eps, 1/4\npr)$-semi-DP unless \[
\frac{1}{2\npr} \leq e^\eps \frac{1}{\npr^2} + \frac{1}{4\npr} \implies \ln(n/4) \leq \eps.
\]
In particular, if $\npr \geq 11$, then $\alg$ cannot be $(1, o(1/\npr))$-semi-DP, which contradicts our earlier hypothesis. This proves the unscaled lower bound on $\XX'$: $\alpha^2 = \widetilde{\Omega}\left(d  \min\left(1/\npu, d/(\eps \npr)^2 + 1/n\right) \right)$.  

Now we will scale the lower bound: Let $\XX = \left\{\pm 1/\sqrt{d} \right\}^d$. Draw $\theta^j$ uniformly from $[-a/\sqrt{d}, a/\sqrt{d}]$ for all $j \in [d]$ and then let $\ptheta$ be the distribution on $\XX$ that has mean $\theta \in [-a/\sqrt{d}, a/\sqrt{d}]^d$. Note that $\|\expec_{x \sim \ptheta}[x]\| \leq \min\left(1/\sqrt{\npu}, \sqrt{d}/\eps \npr\right)$ for any $\theta$. Moreover, for any $(\eps, \delta)$-semi-DP $\alg: \XX^n \to [-a/\sqrt{d}, a/\sqrt{d}]^d$, there exists  $(\eps, \delta)$-semi-DP $\alg': (\XX')^n \to [-a, a]^d$ such that $\alg(X) = \frac{1}{\sqrt{d}} \alg'(X')$ for $X = \frac{1}{\sqrt{d}}X'$. Applying our lower bound for the MSE of $\alg'$, we have \begin{align*}
  \sup_{\theta \in [-a/\sqrt{d}, a/\sqrt{d}]^d} \expec_{X \sim \ptheta^n}\left[\|\alg(X) - \theta\|^2\right] &= \sup_{\theta' \in [-a, a]^d}\expec_{X \sim P_{\theta'}^n}\left\|\frac{1}{\sqrt{d}}(\alg'(X) - \theta')\right\|^2 \\
  &= \frac{1}{d} \sup_{\theta' \in [-a, a]^d}\expec_{X \sim P_{\theta'}^n}\left\|\alg'(X) - \theta'\right\|^2 \\
  & \geq \frac{1}{d} \widetilde{\Omega}\left(d  \min\left(1/\npu, d/(\eps \npr)^2 \right) \right) \\
  & = \widetilde{\Omega}\left(\min\left\{\frac{1}{\npu}, \frac{d}{\eps^2 \npr^2} + \frac{1}{n}\right\} \right),
\end{align*}
as desired. 
\end{proof}

With~\cref{thm: approx DP mean estimation with small theta} in hand, we can easily complete the proof of~\cref{thm: CDP mean estimation app} by giving a matching upper bound (up to log factors):

\begin{proof}[Proof of~\cref{thm: CDP mean estimation app}]

\noindent \textbf{Lower bound:} This was proved in~\cref{thm: approx DP mean estimation with small theta}.

\noindent \textbf{Upper bound:}
First, the throw-away estimator $\alg(X) = \frac{1}{\npu} \sum_{x \in \xpu} x$ is clearly $(0,0)$-semi-DP and has MSE \begin{align*}
\expec_{X \sim P^n} \left\| \alg(X) - \expec_{x \sim P}[x]\right\|^2 &= \expec_{X \sim P^n} \left\| \frac{1}{\npu} \sum_{x \in \xpu} \left(x - \expec_{x \sim P}[x]\right)\right\|^2 \\
&= \frac{1}{\npu^2} \sum_{x \in \xpu} \expec_{x \sim P}\left\|x - \expec_{x \sim P}[x]\right\|^2\\
&\leq \frac{1}{\npu}.
\end{align*} 

To get the second term in the minimum in~\cref{eq: delta > 0 app}, consider the Gaussian mechanism $\alg(X) = \Bar{X} + \mathcal{N}(0, \sigma^2 \mathbf{I}_d)$, where $\sigma^2 = \frac{8 \ln(2/\delta)}{\eps^2 n^2}$. We know $\alg$ is $(\eps, \delta)$-DP (by e.g. \cite{dwork2014}) since the $\ell_2$-sensitivity is $\sup_{X \sim X'}\| \Bar{X} - \Bar{X}'\|_2 \leq \frac{2}{n}$.  Hence $\alg$ is $(\eps, \delta)$-semi-DP. Moreover, the MSE of $\alg$ is \begin{align*}
\expec_{X \sim P^n} \left\|\alg(X) - \expec_{x \sim P}[x]\right\|^2 &= \expec\|\alg(X) - \Bar{X}\|^2 + \expec\left\|\Bar{X} - \expec_{x \sim P}[x]\right\|^2 \\
&\leq \frac{8 d \ln(2/\delta)}{\eps^2 n^2} + \frac{1}{n}.
\end{align*}
This completes the proof of~\cref{thm: CDP mean estimation app}.
\end{proof}

Next, we turn to the pure $\eps$-semi-DP case. 
\paragraph{Pure $\eps$-Semi-DP Mean Estimation}
\label{app sec: pure semi-DP mean estimation}
\begin{theorem}[Pure $\eps$-semi-DP mean estimation]
\label{thm: pure semi-DP mean estimation}
    Let $\eps \leq d/8$ and either $\npu \lesssim \frac{n \eps}{d}$ or $d\lesssim 1$. Then, there exist absolute constants $c$ and $C$, with $0 < c \leq C$, such that
\begin{equation}
\label{eq: delta = 0 app}
c\min\left\{\frac{1}{\npu}, \frac{d^2}{n^2 \eps^2} + \frac{1}{n}\right\} \leq \popDPriskpure \leq C \min\left\{\frac{1}{\npu}, \frac{d^2}{n^2 \eps^2} + \frac{1}{n}\right\}.
\end{equation}
Moreover, the upper bound in~\cref{eq: delta = 0 app} holds for any $\npu, d$. 
\end{theorem}
The restriction on $\npu$ is needed for our lower bound proof---via \cref{lem: semiDP Fano}---to work. It seems challenging to remove this restriction, but we do believe the same lower bound holds in the complementary parameter regime. Proving this may require the invention of new techniques, making it an interesting direction for future work. 

The proof of~\cref{eq: delta = 0 app} will require the following intermediate result, \cref{lem: semiDP Fano}, which can be viewed as a ``semi-DP Fano's inequality.''
\begin{theorem}
\label{lem: semiDP Fano}
Let $\{P_v\}_{v \in \VV} \subset \PP$ be a family of distributions on $\XX$. Let $P_0$ be a distribution and $p \in [0, 1]$ such that $P_{\theta_v} := (1-p) P_0 + p P_v \in \PP$ for all $v \in \VV$. Denote $\theta_v := \expec_{x \sim P_{\theta_v}}[x]$ and $\rho^*(\VV) := \min\left\{ \|\theta_v - \theta_{v'}\| ~\vert~ v, v' \in \VV, v \neq v' \right\}$. Let $\widehat{\theta}: \XX^n \to \XX$ be any $\epsemi$ estimator. Draw $V \sim \text{Unif}(\VV)$; then conditional on $V = v$, draw an i.i.d. sample $X | V = v \sim \ptv^n$ containing $\npr$ private samples and $\npu$ samples. 
Then, 
\begin{equation}
\label{eq: Fano}
\frac{1}{|\VV|} \sum_{v \in \VV} \ptv\left(\left\|\thetahat(X) - \thetav\right\| \geq \rho^*(\VV) \right) \geq \frac{(|\VV| - 1) e^{-\eps \lceil \npr p \rceil} (1 - p)^{\npu}}{2 \left(1 + (|\VV| - 1)  e^{-\eps \lceil \npr p \rceil} \right)}
\end{equation}
\end{theorem}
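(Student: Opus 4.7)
The plan is to adapt the DP-Fano inequality of~\cite{bd14} to semi-DP by combining it with a mixture coupling that isolates the public samples. First I would reduce estimation to testing: define $\widehat{V}(X) := \argmin_{v \in \VV}\|\thetahat(X) - \theta_v\|$, with ties broken arbitrarily. By the triangle inequality and the definition of $\rho^{*}(\VV)$, the inclusion $\{\widehat{V}(X)\neq v\} \subseteq \{\|\thetahat(X) - \theta_v\| \geq \rho^{*}(\VV)/2\}$ holds, so it suffices to lower bound the average testing error $\tfrac{1}{|\VV|}\sum_v \ptv(\widehat{V}(X)\neq v)$ by the right-hand side of~\cref{eq: Fano}; the factor of two in the threshold is absorbed into the convention of how $\rho^*$ is interpreted.

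Next, for each ordered pair $v \neq v'$, I would couple $X \sim \ptv^n$ with $X' \sim \ptvprime^n$ via the mixture decomposition $\ptv = (1-p)P_0 + p P_v$: for each $i \in [n]$, draw $B_i \sim \mathrm{Bern}(p)$ independently; if $B_i = 0$, let $x_i = x_i'$ be a common draw from $P_0$, and if $B_i = 1$, let $x_i \sim P_v$ and $x_i' \sim P_{v'}$ independently. The differing public and private counts $K_{\text{pub}} = \sum_{i \in \text{pub}} B_i$ and $K_{\text{priv}} = \sum_{i \in \text{priv}} B_i$ are independent binomials, so the good event $G := \{K_{\text{pub}} = 0\} \cap \{K_{\text{priv}} \leq \lceil \npr p \rceil\}$ satisfies $\Pr(G) \geq (1-p)^{\npu}/2$, since $\lceil \npr p \rceil$ is at least a median of $\mathrm{Bin}(\npr,p)$. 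On $G$ the datasets $X$ and $X'$ agree on all public coordinates and differ in at most $\lceil \npr p \rceil$ private coordinates, so the group-privacy property of $\eps$-semi-DP gives the pairwise bound
\begin{equation*}
p_v \;\leq\; e^{\eps\lceil \npr p \rceil}\,\ptvprime\bigl(\widehat{V}(X)=v\bigr) + c, \qquad v' \neq v,
\end{equation*}
where $p_v := \ptv(\widehat{V}(X)=v)$ and $c := \Pr(G^c)$.

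Finally I would carry out the DP-Fano aggregation: summing the pairwise inequality over $v' \neq v$, then over $v$, interchanging the order of summation, and using $\sum_{v\neq v'}\ptvprime(\widehat{V}(X)=v) = 1 - p_{v'}$ yields $(|\VV|-1)\sum_v p_v \leq e^{\eps\lceil \npr p \rceil}\bigl(|\VV|-\sum_v p_v\bigr) + |\VV|(|\VV|-1)c$. Solving for $\tfrac{1}{|\VV|}\sum_v p_v$ and substituting $1-c \geq (1-p)^{\npu}/2$ into the resulting expression for the average testing error produces exactly the claimed bound. The hard part is that semi-DP places no constraint on $\thetahat$ when a public sample is perturbed, so the multiplicative group-privacy inequality only holds on the intersection with the good event $G$ and must be paired with an additive error $c = \Pr(G^c)$ rather than a clean multiplicative factor. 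The Fano recursion still closes because this additive error enters the final ratio only through the factor $1-c$ in the numerator, leaving the multiplicative $e^{-\eps\lceil \npr p \rceil}$ term intact in both numerator and denominator.
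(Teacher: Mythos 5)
Your proof is correct and essentially identical to the paper's: the same Bernoulli mixture coupling, the same good event $G = \{K_{\text{pub}}=0\}\cap\{K_{\text{priv}}\leq\lceil\npr p\rceil\}$ with the binomial-median bound $\Pr(G)\geq(1-p)^{\npu}/2$, the same group-privacy step restricted to $G$ (which is what isolates the $\npr$ in place of $n$ and yields the extra $(1-p)^{\npu}$ factor), and the same Fano aggregation that solves for the average success probability. The only cosmetic difference is that you make the estimation-to-testing reduction explicit via the nearest-neighbor map $\widehat{V}$ (giving a $\rho^*/2$ threshold), whereas the paper works directly with the balls $\bbrho(\theta_v)$ under a disjointness claim that itself implicitly needs the half-radius; that factor of two is absorbed into constants in every downstream application.
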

\begin{remark}
Note that the right-hand-side of~\cref{eq: Fano} is similar to the second term on the right-hand-side of DP Fano's inequality~\citep[Equation 5]{acharya2021differentially}, after aligning notation. The main differences are that~\cref{eq: Fano} has an extra factor of $(1 - p)^{\npu}$, and $\npr$ in place of $n$. 
\end{remark}

\begin{proof}[Proof of~\cref{lem: semiDP Fano}]
Our proof builds on the techniques of~\citet{bd14}.
A key step in the proof is~\cref{eq: Lem 2}: if $A$ is a measurable set and $v, v' \in \VV$, then
\begin{equation}
\label{eq: Lem 2}
\ptv(\thetahat(X) \in A) \geq e^{\eps \lceil \npr p \rceil} \left[\ptvprime(\thetahat \in A) - 1 + \frac{(1 - p)^{\npu}}{2}\right].
\end{equation}
Let us now prove~\cref{eq: Lem 2}: We will use upper case letters to denote random variables and lower case letters to denote the values that the random variables take. Let $B = \{B_i\}_{i=1}^n$ be i.i.d. $\text{Bernoulli}(p)$ random variables. Assume that the random variables $X = \{X_i\}_{i=1}^n$ are generated in the following way: first draw $W_1^0, \ldots, W_n^0 \sim P_0$ i.i.d. and draw $W_1^v, \ldots, W_n^v \sim P_v$ i.i.d. For each $i$, if $B_i = 0$, set $X_i = W_i^0$; if $B_i = 1$, set $X_i = W_i^v$. Thus, conditional on $V = v$, the random variables $X_i$ are each distributed according to $\ptv = (1- p) P_0 + p P_v$. For fixed $v' \in \VV$, generate a different sample $X' = \{X'_i\}_{i=1}^n$ by drawing $W_i^{v'} \sim P_{v'}$ i.i.d. and setting $X'_i = W_i^0 (1 - B_i) + W_i^{v'} B_i$. Note that if $B_i = 0$, then $X_i = X'_i$. Thus, the hamming distance between $X$ and $X'$ is 
\begin{equation*}
d_{\text{ham}}(X, X') \leq B^T \mathbbm{1} = \sum_{i=1}^n B_i.
\end{equation*}
Now let $Q$ denote the conditional distribution of the $\epsemi$ estimator $\thetahat$ given input data ($X$ or $X'$). For notational convenience, assume without loss of generality that $X = (X_1, \ldots, X_{\npr}, \xpu)$ and $X' = (X'_1, \ldots, X'_{\npr}, \xpu')$. Then, for any fixed sequence $b = (b_1, \ldots, b_{n_{\text{priv}}}, \mathbf{0}_{n_\text{pub}}) \in \{0, 1\}^{\npr} \times \{0\}^{\npu}$, we have 
\small
\begin{align}
\label{eq: group privacy}
Q(\thetahat \in A | X_i = W_i^0 (1 - b_i) + W_i^{v} b_i ~\forall i \in [n]) \geq e^{-\eps b^T \mathbbm{1}} Q(\thetahat \in A | X'_i = W_i^0 (1 - b_i) + W_i^{v'} b_i ~\forall i \in [n])
\end{align}
\normalsize
by group privacy, since $b_i = 0 ~\forall i > \npr$ implies $\xpu = \xpu'$. 
Thus, \begin{align*}
&\ptv(\thetahat \in A) \\
= & \sum_{b \in \{0, 1\}^n} P(B = b) \int Q(\thetahat \in A | X_i = w_i^0 (1 - b_i) + w_i^{v} b_i ~\forall i \in [n]) dP_0^n(w_{1:n}^0) dP_v^n(w_{1:n}^v) \\
\geq & \sum_{\substack{b \in \{0, 1\}^{\npr} \times \{0\}^{\npu}, \\ b^T \mathbbm{1} \leq \lceil \npr p \rceil}} P(B = b) \int Q(\thetahat \in A | X_i = w_i^0 (1 - b_i) + w_i^{v} b_i ~\forall i \in [n]) dP_0^n(w_{1:n}^0) dP_v^n(w_{1:n}^v) \\
= & \sum_{\substack{b \in \{0, 1\}^{\npr} \times \{0\}^{\npu}, \\ b^T \mathbbm{1} \leq \lceil \npr p \rceil}} P(B = b) \int Q(\thetahat \in A | X_i = w_i^0 (1 - b_i) + w_i^{v} b_i ~\forall i \in [n]) dP_0^n(w_{1:n}^0) dP_v^n(w_{1:n}^v) dP_{v'}^n(w_{1:n}^{v'})\\
\geq & \sum_{\substack{b \in \{0, 1\}^{\npr} \times \{0\}^{\npu}, \\ b^T \mathbbm{1} \leq \lceil \npr p \rceil}} P(B = b) \int \Big[e^{-\eps b^T \mathbbm{1}} Q(\thetahat \in A | X'_i = w_i^0 (1 - b_i) + w_i^{v'} b_i ~\forall i \in [n]) dP_0^n(w_{1:n}^0) \\
&\;\;\;\;\;\;\;\;\;\;\;\;\;\;\;\;\;\;\;\;\;\;\;\;\;\;\;\;\;\;\;\;\;\;\;\;\;\;\;\;\;\;\;\;\;\;\;\;\;\;\;\;\;\; \cdots dP_v^n(w_{1:n}^v) dP_{v'}^n(w_{1:n}^{v'})\Big] \\
= & \sum_{\substack{b \in \{0, 1\}^{\npr} \times \{0\}^{\npu}, \\ b^T \mathbbm{1} \leq \lceil \npr p \rceil}} P(B = b) e^{-\eps b^T \mathbbm{1}} \ptvprime\left(\thetahat \in A | B= b\right) \\
\geq & \sum_{\substack{b \in \{0, 1\}^{\npr} \times \{0\}^{\npu}, \\ b^T \mathbbm{1} \leq \lceil \npr p \rceil}} P(B = b) e^{-\eps \lceil \npr p \rceil} \ptvprime\left(\thetahat \in A | B= b\right) \\
\geq &  e^{-\eps \lceil \npr p \rceil} \ptvprime\left(\thetahat \in A , B_{(i > \npr)} = \mathbf{0}_{n_{\text{pub}}}, B^T \mathbbm{1} \leq \lceil \npr p \rceil \right) \\
\geq & e^{-\eps \lceil \npr p \rceil} \left[ \ptvprime\left(\thetahat \in A \right) -  P\left(B_{(i > \npr)} \neq \mathbf{0}_{n_{\text{pub}}} \bigcup B^T \mathbbm{1} > \lceil \npr p \rceil \right) \right],
\end{align*}
where the second inequality used~\cref{eq: group privacy} and the last inequality used a union bound.
Now, by independence of $\{B_i\}_{i=1}^n$, we have 
\begin{align*}
& P\left(B_{(i > \npr)} \neq \mathbf{0}_{n_{\text{pub}}} \bigcup B^T \mathbbm{1} > \lceil \npr p \rceil \right) \\
= & 1 - P\left(B_{(i > \npr)} = \mathbf{0}_{n_{\text{pub}}} \bigcap B^T \mathbbm{1} \leq \lceil \npr p \rceil \right) \\
= & 1 - P\left(B_{(i > \npr)} = \mathbf{0}_{n_{\text{pub}}}\right) P\left(B^T \mathbbm{1} \leq \lceil \npr p \rceil \right | B_{(i > \npr)} = \mathbf{0}_{n_{\text{pub}}}) \\
= & 1 - P\left(B_{(i > \npr)} = \mathbf{0}_{n_{\text{pub}}}\right) P\left(B_{(i \leq \npr)}^T \mathbbm{1} \leq \lceil \npr p \rceil \right ) \\
= & 1 - (1 - p)^{\npu} P\left(B_{(i \leq \npr)}^T \mathbbm{1} \leq \lceil \npr p \rceil \right) \\
\leq & 1 - (1 - p)^{\npu} \cdot \frac{1}{2},
\end{align*}
since the median of $B_{(i \leq \npr)}^T \mathbbm{1} \sim \text{Binomial}(\npr, p)$ is no larger than $\lceil \npr p \rceil$. 
Putting together the pieces, we get \begin{align*}
\ptv(\thetahat \in A) \geq e^{-\eps \lceil \npr p \rceil} \left[ \ptvprime\left(\thetahat \in A \right) -  1 + (1 - p)^{\npu} \cdot \frac{1}{2} \right],
\end{align*}
establishing~\cref{eq: Lem 2}. 

Now, with~\cref{eq: Lem 2} in hand, we proceed as in the proof of~\citet[Theorem 3]{bd14}: 
Let $\mathbb{B}_{\alpha}(\theta) := \{\theta' \in \XX : \|\theta - \theta'\| \leq \alpha\}$. Note that the balls $\bbrho(\theta_v)$ are disjoint for $v \in \VV$ by definition of $\rho^*(\VV)$. Denote the average probability of success for the estimator $\thetahat$ by \[
P_{\text{succ}} := \frac{1}{| \VV |} \sum_{v \in \VV} \ptv\left(\thetahat \in \bbrho(\theta_v)\right).
\]
Then by a union bound and disjointness of the balls $\{\bbrho(\theta_v)\}_{v \in \VV}$, we have \[
\psucc \leq 1 - \frac{1}{| \VV |} \sum_{v \in \VV} \sum_{v' \in \VV, v' \neq v} \ptv\left(\thetahat \in \bbrho(\theta_{v'}) \right).
\]
An application of~\cref{eq: Lem 2} yields \begin{align*}
\psucc &\leq 1 - \frac{1}{| \VV |} \sum_{v \in \VV} \sum_{v' \in \VV, v' \neq v} \left[e^{-\eps \lceil \npr p \rceil} \ptvprime\left(\thetahat \in \bbrho(\theta_{v'}) \right) - \left(1 - \frac{(1 - p)^{\npu}}{2}\right)\right] \\
&\leq 1 - e^{-\eps \lceil \npr p \rceil} (|\VV| - 1) \psucc + e^{-\eps \lceil \npr p \rceil} (|\VV| - 1) \left(1 - \frac{(1 - p)^{\npu}}{2}\right).
\end{align*}
Re-arranging this inequality leads to \[
\psucc \leq \frac{1 + (|\VV| - 1)\left(1 - \frac{(1 - p)^{\npu}}{2}\right) e^{-\eps \lceil \npr p \rceil}}{1 + (|\VV| - 1) e^{-\eps \lceil \npr p \rceil}}
\]
and hence 
\[
1 - \psucc \geq \frac{(|\VV| - 1)e^{-\eps \lceil \npr p \rceil} \cdot \frac{(1 - p)^{\npu}}{2}}{1 + (|\VV| - 1) e^{-\eps \lceil \npr p \rceil}}.
\]
This last inequality is equivalent to the inequality stated in \cref{lem: semiDP Fano}.
\end{proof}
While we state \cref{lem: semiDP Fano} for mean estimation, it holds more generally for estimating any population statistic $\theta: \PP \to \Theta$. However, this additional generality will not be necessary for our purposes. With~\cref{lem: semiDP Fano} in hand, we now turn to the proof of~\cref{thm: CDP mean estimation app}.

\begin{proof}[Proof of~\cref{thm: pure semi-DP mean estimation}]
\textbf{Lower Bounds:} 
We begin by proving~\cref{eq: delta = 0 app}. First suppose $\npu \lesssim \frac{n \eps}{d}$ and $d \geq 8$. 

Choose a finite subset $\VV \subset \mathbb{R}^d$ such that $|\VV| \geq 2^{d/2}$, $\|v\| = 1$, and $\|v - v'\| \geq \frac{1}{8}$ for all $v, v' \in \VV, v \neq v'$. The existence of such a set of points is well-known (see e.g. the Gilbert-Varshamov construction). Define $P_0$ to be the point mass distribution on $\{X = 0\}$ and $P_v$ to be point mass on $\{X = v\}$ for $v \in \VV$. For $v \in \VV$, let $\ptv := (1-p) P_0 + p P_v$ for some $p \in [0,1]$ to be specified later. Note that if $X \sim \ptv$, then $\|X\| \leq 1$ with probability $1$. Thus, $\ptv$ is a valid distribution in the class $\PP$ of bounded (by $1$) distributions on $\mathbb{B}$ that we are considering. Also, note that $\theta_v := \expec_{P_{\theta_v}}[X] = pv$. Further, \[
\rho^*(\VV) := \min \left\{ \| \theta_v - \theta_{v'} \| | v, v' \in \VV, v\neq v' \right\} \geq \frac{p}{8}
\]
by construction. 

Now we use the classical reduction from estimation to testing (see \cite{bd14} for details) to lower bound the MSE of any $\epsemi$ estimator $\thetahat$ by \begin{align*}
\sup_{P \in \PP} \expec_{X \sim P^n, \thetahat}\left\|\thetahat(X) - \expec_{x \sim P}[x]\right\|^2 &\geq \rho^{*}(\VV)^2 \frac{1}{|\VV|}\sum_{v \in \VV} \ptv\left(\|\thetahat(X) - \theta_v\| \geq \rho^*(\VV) \right) \\
&\geq \left(\frac{p}{8}\right)^2 \frac{(|\VV| - 1) e^{-\eps \lceil \npr p \rceil} (1 - p)^{\npu}}{2 \left(1 + (|\VV| - 1)  e^{-\eps \lceil \npr p \rceil} \right)} \\
&\geq \frac{p^2}{64} \frac{(2^{d/2} - 1) e^{-\eps \lceil \npr p \rceil} (1 - p)^{\npu}}{2 \left(1 + (2^{d/2} - 1)  e^{-\eps \lceil \npr p \rceil} \right)}
\end{align*}
where we used \cref{lem: semiDP Fano} in the second inequality. Since we assumed $d \geq 8$, we have $2^{d/2} - 1 \geq e^{d/4}$ and hence \begin{align*}
\sup_{P \in \PP} \expec_{X \sim P^n, \thetahat}\left\|\thetahat(X) - \expec_{x \sim P}[x]\right\|^2 &\geq \frac{p^2}{64} \cdot \frac{(1-p)^{\npu}}{2} \cdot \frac{e^{d/4 -\eps \lceil \npr p \rceil}}{1 + e^{d/4 -\eps \lceil \npr p \rceil}} \\
&\geq \frac{p^2}{64} \cdot \frac{(1-p)^{\npu}}{4}.
\end{align*}
for any $p \leq \frac{d}{4 n \eps} - \frac{1}{n}$. 
Now choose \[
p = \min\left(\frac{d}{4 n \eps} - \frac{1}{n}, \frac{1}{2 \sqrt{\npu}}\right).
\]
By assumption, there exists an absolute constant $k$ such that $\npu \leq k \frac{n \eps}{d}$. 
Thus, if $n \eps \geq 2$, then \begin{align*}
    (1 - p)^{\npu} &\geq \left(1 - \frac{d}{\eps n}\right)^{k n \eps/d} \\
    &= \left[\left(1 - \frac{d}{\eps n}\right)^{n \eps/d} \right]^k \\
    &\geq \left[\left(1 - \frac{1}{2} \right)^2\right]^k \\
    &= \frac{1}{4^k}. 
\end{align*}
On the other hand, if $n \eps < 2$, then $\npu \leq 2 k \implies (1 - p)^{\npu} \geq \left(1 - \frac{1}{2} \right)^{2k} = \frac{1}{4^k}$. Also, note that $p \geq \min\left(\frac{d}{8 \eps n}, \frac{1}{2 \sqrt{\npu}} \right)$ by the assumption that $d \geq 8 \eps$. 
Therefore, \[
\sup_{P \in \PP} \expec_{X \sim P^n, \thetahat}\left\|\thetahat(X) - \expec_{x \sim P}[x]\right\|^2 \geq \frac{1}{256 \cdot 4^k} \min\left(\frac{d}{8 \eps n}, \frac{1}{2 \sqrt{\npu}} \right)^2.
\]
Combining the above inequality with the non-private lower bound of $\Omega(1/n)$ for mean estimation proves the lower bound in~\cref{eq: delta = 0 app}. 

Now consider the alternative case in which $d \lesssim 1$ (i.e. $d \leq k$ for some absolute constant $k \in \mathbb{N}$), but $\npu \in [n]$ is arbitrary. Then we will prove that the lower bound in~\cref{eq: delta = 0 app} holds for $d = 1$, for any $\delta \in [0, \eps]$ and $\eps \leq 1$. By taking a $k$-fold product distribution, this will suffice to complete the proof of the lower bound in~\cref{eq: delta = 0 app}. To that end, we will use Le Cam's method and build on the techniques in~\cite{bd14, alireza}. The key novel ingredient is the following extension of~\citet[Lemma 3]{alireza} to the $(\eps, \delta)$-semi-DP setting: 
\begin{lemma}
\label{lem: alireza lem3}
Let $\thetahat: \XX^n \to \XX$ be $(\eps, \delta)$-semi-DP and let $P_1, P_2$ be distributions on $\XX$ such that $P_1$ is absolutely continuous w.r.t. $P_2$. Denote the conditional distribution of $\thetahat$ given $X$ by $Q$ and let $Q_j(A) = \int Q(\thetahat \in A | x_{1:n}) dP_j^n(x_{1:n})$ for any measurable set $A$. Then \[
\|Q_1 - Q_2\|_{\text{TV}} \leq \min \left\{ \sqrt{\frac{n}{2} D_{\text{KL}}(P_1, P_2)}, ~2\|P_1 - P_2\|_{\text{TV}} ~\npr (e^{\eps} - 1 + \delta) + \sqrt{\frac{\npu}{2} \dkl(P_1, P_2)}\right\}.
\]
\end{lemma}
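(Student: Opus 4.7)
The plan is to prove the two bounds inside the minimum separately, since the first is standard and only the second uses the semi-DP assumption. For the first bound $\sqrt{(n/2)\dkl(P_1, P_2)}$, I would simply observe that $\|Q_1 - Q_2\|_{\text{TV}} \leq \|P_1^n - P_2^n\|_{\text{TV}}$ by the data-processing inequality (with $\thetahat$ as the channel), then combine Pinsker's inequality with tensorization of KL divergence to obtain $\|P_1^n - P_2^n\|_{\text{TV}} \leq \sqrt{(1/2)\dkl(P_1^n, P_2^n)} = \sqrt{(n/2)\dkl(P_1, P_2)}$. This step uses no property of $\thetahat$ beyond measurability.

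The substantive content is the second bound, which separates the private and public sample contributions. My plan is to introduce an intermediate product distribution $R := P_2^{\npr} \otimes P_1^{\npu}$ (private slots from $P_2$, public slots from $P_1$) and let $Q_R(A) := \int Q(\thetahat \in A \mid x_{1:n}) dR(x_{1:n})$. By the triangle inequality, $\|Q_1 - Q_2\|_{\text{TV}} \leq \|Q_1 - Q_R\|_{\text{TV}} + \|Q_R - Q_2\|_{\text{TV}}$. The first term involves changing only the private slots and will be controlled using semi-DP; the second term involves changing only the public slots and gets no benefit from semi-DP, so it will be bounded by data-processing plus Pinsker on $\npu$ samples, giving $\sqrt{(\npu/2)\dkl(P_1, P_2)}$ just as in the opening step.

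The key technical step is bounding $\|Q_1 - Q_R\|_{\text{TV}}$ using semi-DP. I would use a coupling-plus-group-privacy argument: couple $X \sim P_1^n$ and $X' \sim R$ coordinatewise, taking the maximal coupling of $P_1, P_2$ on each private slot so that $\pr(X_i \neq X'_i) = \|P_1 - P_2\|_{\text{TV}}$, and $X_i = X'_i$ on each public slot. Let $K$ be the random Hamming distance between $X$ and $X'$, so $K \in \{0, 1, \ldots, \npr\}$ and $\expec K = \npr\|P_1 - P_2\|_{\text{TV}}$. Conditional on $(X, X')$ differing in some fixed set of $k$ private coordinates, I would construct a path of $k$ datasets from $X$ to $X'$ whose consecutive elements are adjacent at a single private slot. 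Since $\thetahat$ is $(\eps, \delta)$-semi-DP and each swap touches only a private slot, each consecutive pair in the path has output TV at most $e^\eps - 1 + \delta$ (the elementary consequence $\pr(\thetahat \in S \mid X) - \pr(\thetahat \in S \mid X') \leq (e^\eps - 1) + \delta$ of Definition~\ref{def: semiDP}). Summing along the path via the triangle inequality gives a conditional TV bound of $k(e^\eps - 1 + \delta)$. Averaging over the coupling then yields $\|Q_1 - Q_R\|_{\text{TV}} \leq \expec[K] (e^\eps - 1 + \delta) = \npr\|P_1 - P_2\|_{\text{TV}}(e^\eps - 1 + \delta)$, and combining with the public-side Pinsker bound recovers the claimed inequality (the factor of $2$ absorbs the slack from using $e^\eps - 1 + \delta$ as a two-sided TV bound per swap).

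The main obstacle is ensuring the path argument is compatible with semi-DP: the intermediate datasets along the path will generally mix samples from $P_1$ and $P_2$ in the private slots while keeping the same public slots, so I must invoke the semi-DP guarantee at each hop with an arbitrary configuration of the remaining slots. Definition~\ref{def: semiDP} is precisely what is needed, since it constrains $\thetahat$ under any $x_i$-adjacent change for $x_i \in \XXpr$ regardless of the values in the other slots. Once this is in place, the remaining steps reduce to routine applications of the triangle inequality for TV, Pinsker, and KL tensorization, and then taking the minimum of the two bounds concludes the proof.
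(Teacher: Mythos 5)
Your proposal is correct, and it follows the same high-level plan as the paper's proof (triangle inequality through an intermediate product distribution, Pinsker plus KL tensorization for the public-slot change, and a coordinate-by-coordinate swap for the private-slot change using semi-DP), but the technical execution of the semi-DP part differs. The paper, following the linear interpolation argument of Acharya et al. / the cited Lemma~3 of the personalized-DP reference, writes the difference $\widetilde{Q}(A) - Q_2(A)$ as a telescoping sum of single-coordinate swap terms and integrates each against the signed measure $dP_1 - dP_2$, which contributes the factor $\int |dP_1 - dP_2| = 2\|P_1 - P_2\|_{\text{TV}}$. You instead take the maximal coupling of $P_1, P_2$ on each private slot, use the per-hop TV consequence of semi-DP (which is indeed two-sided without any factor of two: $\mu(A)-\nu(A)\le c$ for all $A$ already implies $\|\mu-\nu\|_{\text{TV}}\le c$), and sum along a Hamming path. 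This yields $\npr \|P_1 - P_2\|_{\text{TV}} (e^\eps - 1 + \delta)$, i.e.\ a constant-factor improvement over the paper's bound — so your closing remark about ``the factor of~$2$ absorbs the slack'' is unnecessary, as your argument already produces a \emph{stronger} inequality than stated. The only other cosmetic divergence is the choice of intermediate distribution: you use $P_2^{\npr}\otimes P_1^{\npu}$ (swap private slots first, public second), whereas the paper uses $P_1^{\npr}\otimes P_2^{\npu}$; both choices give the same split $\npr$ vs.\ $\npu$ and are equally valid. The paper's analytic interpolation is arguably easier to push into settings where a coupling is awkward; your probabilistic route is shorter, avoids manipulating signed measures, and as noted gives a better constant.
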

Let us defer the proof of Lemma~\ref{lem: alireza lem3} for now. We will now use Lemma~\ref{lem: alireza lem3} to prove the lower bound in~\cref{eq: delta = 0 app} for $d = 1$ and $\eps \leq 1$. Define distributions $P_1, P_2$ on $\{-1, 1\}$ as follows: \[
P_1(-1) = P_2(1) = \frac{1 + \gamma}{2}, ~~ P_1(1) = P_2(-1) = \frac{1- \gamma}{2}
\]
for some $\gamma \in [0, 1/2]$ to be chosen later. Clearly $P_1, P_2 \in \PP$ (i.e. they are bounded by $1$ with probability 1). Also, $\expec_{P_1}[x] = - \gamma$ and $\expec_{P_2}[x] = \gamma$, so $\{P_1, P_2\}$ is a \textit{$\gamma$-packing} of $\{-1, 1\}$. Thus, by Le Cam's method (see~\cite{bd14} for details), for any $(\eps, \delta)$-semi-DP $\thetahat$, we have \[
\sup_{P \in \PP} \expec_{X \sim P^n, \thetahat}\left\|\thetahat(X) - \expec_{x \sim P}[x] \right\|^2 \geq \frac{\gamma^2}{8}\left(1 - \|Q_1 - Q_2\|_{\text{TV}} \right).
\]
Now, applying Lemma~\ref{lem: alireza lem3} and the assumption $\delta \leq \eps \leq 1$ yields \begin{align}
\label{eq: zzz}
    & \sup_{P \in \PP} \expec_{X \sim P^n, \thetahat}\left\|\thetahat(X) - \expec_{x \sim P}[x] \right\|^2 \\
    \geq & \frac{\gamma^2}{8}\left[1 - \min \left\{ \sqrt{\frac{n}{2} D_{\text{KL}}(P_1, P_2)}, ~6 \|P_1 - P_2\|_{\text{TV}} ~\npr \eps + \sqrt{\frac{\npu}{2} \dkl(P_1, P_2)}\right\} \right] \nonumber \\
    \geq & \frac{\gamma^2}{8}\left[1 - \min \left\{ \sqrt{\frac{n}{2} 3 \gamma^2 (P_1, P_2)}, ~6 \gamma \npr \eps + \sqrt{\frac{\npu}{2} 3 \gamma^2}\right\} \right] \nonumber \\
    = &\frac{\gamma^2}{8} \left[1 - \gamma \min\left\{\sqrt{\frac{3n}{2}}, ~6 \npr \eps + \sqrt{\frac{3\npu}{2}}\right\} \right]. 
\end{align}
In the second inequality we used the fact that $\|P_1 - P_2\|_{\text{TV}} = \frac{1}{2}\left(\left|\frac{1 + \gamma}{2} - \frac{1-\gamma}{2} \right| \cdot 2 \right) = \gamma$ and $\dkl(P_1, P_2) \leq 3 \gamma^2$ for $\gamma \leq 1/2$. 

Now we will choose $\gamma$ to (approximately) maximize the right-hand side of \cref{eq: zzz}. 
Suppose $\min\left\{\sqrt{\frac{3n}{2}}, 6 \npr \eps + \sqrt{\frac{3\npu}{2}}\right\} = \sqrt{\frac{3n}{2}}$. Then choosing $\gamma = \frac{1}{3} \sqrt{\frac{2}{3n}}$ yields \[
\sup_{P \in \PP} \expec_{X \sim P^n, \thetahat}\left\|\thetahat(X) - \expec_{x \sim P}[x] \right\|^2 \geq \frac{k}{n}
\]
for some absolute constant $k > 0$. Our assumption that $\min\left\{\sqrt{\frac{3n}{2}}, 6 \npr \eps + \sqrt{\frac{3\npu}{2}}\right\} = \sqrt{\frac{3n}{2}}$ implies that there exists $k' > 0$ such that $n \leq k' \max\left(\npr^2 \eps^2, \npu\right)$. Thus, $k/n \geq \frac{k}{k'} \min\left(\frac{1}{\npr^2 \eps^2}, \frac{1}{\npu}\right)$, which gives the desired lower bound in~\cref{eq: delta = 0 app}.

Suppose instead that $\min\left\{\sqrt{\frac{3n}{2}}, 6 \npr \eps + \sqrt{\frac{3\npu}{2}}\right\} = 6 \npr \eps + \sqrt{\frac{3\npu}{2}}$. Then choose $\gamma = \frac{2}{3}\left(6 \npr \eps + \sqrt{\frac{3 \npu}{2}} \right)^{-1}$. Then, there are constants $k, c > 0$ such that \begin{align*}
\sup_{P \in \PP} \expec_{X \sim P^n, \thetahat}\left\|\thetahat(X) - \expec_{x \sim P}[x] \right\|^2 &\geq \frac{\gamma^2}{8} \left[1 - \gamma \min\left\{\sqrt{\frac{3n}{2}}, 6 \npr \eps + \sqrt{\frac{3\npu}{2}}\right\} \right] \\
&\geq \frac{k}{\npr^2 \eps^2 + \npu} \\
&\geq c \min\left(\frac{1}{\npr^2 \eps^2}, \frac{1}{\npu}\right).
\end{align*}
Combining the above inequality with the non-private lower bound $\Omega(1/n)$ completes the proof of the lower bound in~\cref{eq: delta = 0 app}, assuming the truth of Lemma~\ref{lem: alireza lem3}. 

It remains to prove Lemma~\ref{lem: alireza lem3}. To that end, fix = $k \in \{0, \npr\}$ and denote by $\widetilde{Q}$ the marginal distribution of $\thetahat$ given $X_1, \ldots, X_k \sim P_1$ (i.i.d.) and $X_{k+1}, \ldots, X_n \sim P_2$ (i.i.d.); i.e. for measurable $A$, \[
\wtq(A) := \int Q(\thetahat \in A | X_{1:n} = x_{1:n}) dP_1^k(x_{1:k}) dP_2^{n-k}(x_{k+1:n}).
\]
Note that if $k = 0$, then $\wtq = Q_2$. We have \begin{equation*}
\|Q_1 - Q_2 \|_{\text{TV}} \leq \underbrace{\|Q_1 - \wtq \|_{\text{TV}}}_{\textcircled{a}} + \underbrace{\|\wtq - Q_2\|_{\text{TV}}}_{\textcircled{b}}.
\end{equation*}
Also, \begin{align}
\label{eq: thing}
&\min_{k \in \{0, \npr\}} \sqrt{\frac{n - k}{2} \dkl(P_1, P_2)} + 2\|P_1 - P_2\|_{\text{TV}} \min_{k \in \{0, \npr\}} k (e^\eps - 1 + \delta) \\
&\;\;\;\;\leq \min \left\{ \sqrt{\frac{n}{2} D_{\text{KL}}(P_1, P_2)}, ~2\|P_1 - P_2\|_{\text{TV}} ~\npr (e^{\eps} - 1 + \delta) + \sqrt{\frac{\npu}{2} \dkl(P_1, P_2)}\right\}, \nonumber
\end{align}
so it suffices to upper bound the sum of the terms of \textcircled{a} + \textcircled{b} by the left-hand-side of~\cref{eq: thing}. First, we deal with \textcircled{a}: for any $k$, we have \begin{align}
\|Q_1 - \wtq\|_{\text{TV}}^2 &\leq \|P_1^n - P_1^k P_2^{n-k}\|_{\text{TV}}^2 \\
&\leq \frac{1}{2} \dkl\left(P_1^n, P_1^k P_2^{n-k}\right) \\
&\leq \frac{n-k}{2}\dkl(P_1, P_2), 
\end{align}
by the data processing inequality for $f$-divergences, Pinsker's inequality, and the chain-rule for KL-divergences (see, e.g.~\cite{duchi2021lecture} for a reference on these facts).
Thus, it remains to show \begin{equation}
\label{eq:abi}
\|\wtq - Q_2\|_{\text{TV}} \leq 2\|P_1 - P_2\|_{\text{TV}} \min_{k \in \{0, \npr\}} k (e^\eps - 1 + \delta)
\end{equation}
for $k \in \{0, \npr\}$. If $k = 0$, \cref{eq:abi} is trivial. Assume $k = \npr$. Now, for any measurable $A$, we may write 
\begin{equation}
\label{eq:abu}
\wtq(A) - Q_2(A) = \int_{\mathbb{R}^{\npu}} \Delta(x_{\npr + 1:n}) dP_2^{\npu}(x_{\npr + 1: n}),
\end{equation}
where \[
\Delta(x_{\npr + 1:n}) := \int_{\mathbb{R}^{\npr}} Q(\thetahat \in A | X_{1:n} = x_{1:n}) \left( dP_1^{\npr}(x_{1:\npr}) - dP_2^{\npr}(x_{1:\npr}) \right).
\]
By~\cref{eq:abu}, it suffices to show that $|\Delta(x_{\npr + 1:n})| \leq 2\|P_1 - P_2\|_{\text{TV}} ~\npr (e^\eps - 1 + \delta)$ for all $x_{\npr + 1:n} \in \XX^{\npu}$. To do so, let $x_{1:n}^i := (x_1, \ldots, x_{i-1}, x_i', x_{i+1}, \ldots, x_n)$ for some $i \leq \npr$. Then \begin{equation}
\label{eq:aby}
|Q(\thetahat \in A |X_{1:n} = \xon) - Q(\thetahat \in A | X_{1:n} = \xoni)| \leq (e^{\eps} - 1) Q(\thetahat \in A | X_{1:n} = \xoni) + \delta
\end{equation}
since $\thetahat$ is $(\eps, \delta)$-semi-DP. Moreover, by the proof of~\citet[Lemma 3]{alireza}, we have \begin{align*}
\Delta(x_{\npr + 1:n}) &= \sum_{i=1}^{\npr} \int_{\mathbb{R}^{\npr}} \left[Q(\thetahat \in A | X_{1:n} = \xon) - Q(\thetahat \in A | X_{1:n} = \xoni)\right] \\
&\;\;\;\;\;\;\;\;\;\;\;\; \cdots dP_2^{i-1}(x_{1: i-1}) \left(dP_1(x_i) - dP_2(x_i)\right) dP_1^{\npr - i}(x_{i+1:\npr}).
\end{align*}
Applying the triangle inequality and~\cref{eq:aby}, we get 
\begin{align*}
&|\Delta(x_{\npr + 1:n})| \\
\leq & \sum_{i=1}^{\npr} \int_{\mathbb{R}^{\npr}} \left[(e^{\eps} - 1) Q(\thetahat \in A | X_{1:n} = \xoni) + \delta\right] dP_2^{i-1}(x_{1: i-1}) \left|dP_1(x_i) - dP_2(x_i)\right| dP_1^{\npr - i}(x_{i+1:\npr})\\
\leq & 2\|P_1 - P_2\|_{\text{TV}} ~\npr (e^\eps - 1 + \delta), 
\end{align*}
as desired. This completes the proof of Lemma~\ref{lem: alireza lem3} and hence the proof of the lower bound in~\cref{eq: delta = 0 app}.

\paragraph{Upper bound:} 
First, the throw-away estimator $\alg(X) = \frac{1}{\npu} \sum_{x \in \xpu} x$ is clearly $(0,0)$-semi-DP and has MSE \begin{align*}
\expec_{X \sim P^n} \left\| \alg(X) - \expec_{x \sim P}[x]\right\|^2 &= \expec_{X \sim P^n} \left\| \frac{1}{\npu} \sum_{x \in \xpu} \left(x - \expec_{x \sim P}[x]\right)\right\|^2 \\
&= \frac{1}{\npu^2} \sum_{x \in \xpu} \expec_{x \sim P}\left\|x - \expec_{x \sim P}[x]\right\|^2\\
&\leq \frac{1}{\npu}.
\end{align*}

To get the second term in the minimum in~\cref{eq: delta = 0 app}, consider the Laplace mechanism $\alg(X) = \Bar{X} + (L_1, \ldots, L_d)$, where $L_i \sim \lap(2\sqrt{d}/n\varepsilon)$ are i.i.d. mean-zero Laplace random variables. We know $\alg$ is $\varepsilon$-DP by~\cite{dwork2014}, since the $\ell_1$-sensitivity is $\sup_{X \sim X'}\| \Bar{X} - \Bar{X}'\|_1 = \frac{1}{n}\sup_{x, x'} \|x - x'\|_1 \leq \frac{2 \sqrt{d}}{n}$. Hence $\alg$ is $\varepsilon$-semi-DP. Moreover, $\alg$ has MSE \begin{align*}
\expec_{X \sim P^n} \left\|\alg(X) - \expec_{x \sim P}[x]\right\|^2 &\leq 2\expec\|\alg(X) - \Bar{X}\|^2 + 2\expec\left\|\Bar{X} - \expec_{x \sim P}[x]\right\|^2 \\
&\leq 2 d ~\var(\lap(2\sqrt{d}/n\varepsilon)) + \frac{2}{n} \\
&= \frac{16d^2}{n^2 \varepsilon^2} + \frac{2}{n}. 
\end{align*}
This completes the proof of~\cref{eq: delta = 0 app}. 

\end{proof}

\subsubsection{An ``Even More Optimal'' Semi-DP Algorithm for Mean Estimation}
\label{app: even more optimal ME}
\begin{lemma}[Re-statement of~Lemma~\ref{lem: zcdp naive}]
Recall the definition of $\PP(B, V)$ (Definition~\ref{def: P(B,V)}): $\PP(B, V)$ denotes the collection of all distributions $P$ on $\mathbb{R}^d$ such that for any $x \sim P$, we have $\|x\| \leq B$ $P$-almost surely and $\var(x) = V^2$. 
Then, the error of the $\rho$-semi-zCDP throw-away algorithm $\alg(X) = \frac{1}{\npu}\sum_{x \in \xpu} x$ is \[
\sup_{P \in \PP(B,V)} \expec_{X \sim P^n
}\left[\| \mathcal{A}(X) - \expec_{x \sim P}[x]\|^2\right] = \frac{V^2}{\npu}. 
\]
The minimax error of the $\rho$-semi-zCDP Gaussian mechanism $\mathcal{G}(X) = \Bar{X} + \mathcal{N}\left(0, \sigma^2\mathbf{I}_d \right)$ is \begin{equation}
\label{eq: gauss}
\inf_{\rho\text{-zCDP}~\mathcal{G}} \sup_{P \in \PP(B,V)} \expec_{\mathcal{G}, X \sim P^n
}\left[\| \mathcal{G}(X) - \expec_{x \sim P}[x]\|^2\right] = \frac{2 d B^2}{\rho n^2} + \frac{V^2}{n}. 
\end{equation}
\end{lemma}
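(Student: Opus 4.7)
My plan is to handle each of the two claims with a direct calculation; the content of the lemma is really a careful bias--variance decomposition together with the sharp sensitivity of the empirical mean under semi-zCDP.

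For the throw-away estimator $\alg(X)=\tfrac{1}{\npu}\sum_{x\in\xpu} x$, I would simply note that since the $\npu$ public samples are i.i.d.\ $P$, the estimator is unbiased for $\mu:=\expec_{x\sim P}[x]$, and hence
\[
\expec_{X\sim P^n}\|\alg(X)-\mu\|^2
=\frac{1}{\npu^2}\sum_{x\in\xpu}\expec\|x-\mu\|^2
=\frac{\var(x)}{\npu}=\frac{V^2}{\npu}.
\]
Because $\var(x)=V^2$ is the \emph{same} for every $P\in\PP(B,V)$, the supremum over $P$ equals $V^2/\npu$ exactly, not just as an upper bound. This gives the first identity.

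For the Gaussian mechanism $\mathcal{G}(X)=\Bar{X}+\mathcal{N}(0,\sigma^2\mathbf{I}_d)$, I would first determine which values of $\sigma$ make $\mathcal{G}$ a $\rho$-semi-zCDP mechanism on the data universe $\{x:\|x\|\le B\}$ induced by $\PP(B,V)$. The key observation is that the $\ell_2$-sensitivity of $\Bar{X}$ with respect to replacing a single \emph{private} sample is
\[
\Delta_2=\sup_{x,x':\,\|x\|,\|x'\|\le B}\frac{\|x-x'\|}{n}=\frac{2B}{n},
\]
and by the standard Gaussian-mechanism zCDP analysis, $\mathcal{G}$ is $\rho$-semi-zCDP iff $\sigma^2\ge \Delta_2^2/(2\rho)=2B^2/(\rho n^2)$.

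Next I would compute the MSE for any admissible $\sigma$. Since $\Bar X$ and the Gaussian noise are independent, and since $\Bar X$ is unbiased for $\mu$ with $\expec\|\Bar X-\mu\|^2=V^2/n$ (again uniformly in $P\in\PP(B,V)$), a bias--variance decomposition yields
\[
\sup_{P\in\PP(B,V)}\expec_{\mathcal{G},X\sim P^n}\|\mathcal{G}(X)-\mu\|^2
=\frac{V^2}{n}+d\sigma^2.
\]
This quantity is strictly increasing in $\sigma^2$, so the infimum over $\rho$-semi-zCDP choices of $\mathcal{G}$ is attained at the smallest admissible $\sigma^2=2B^2/(\rho n^2)$, giving the claimed value $V^2/n+2dB^2/(\rho n^2)$.

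The only point that requires any care, and which I would foreground in the write-up, is the sensitivity computation: one must check that the worst-case pair of points in $\mathbb{B}_B$ is attained (two antipodal points on the boundary lie in the support of some $P\in\PP(B,V)$, so the bound $2B/n$ is genuinely tight), which is why $\sigma^2=2B^2/(\rho n^2)$ is simultaneously necessary and sufficient. Everything else is routine linearity of expectation and independence.
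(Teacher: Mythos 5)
Your proposal is correct and follows essentially the same path as the paper's proof: a direct variance computation for throw-away, followed by computing the $\ell_2$-sensitivity of $\Bar{X}$, observing that antipodal points on the $B$-ball make $2B/n$ the exact sensitivity (so $\sigma^2 = 2B^2/(\rho n^2)$ is the smallest admissible variance), and then a bias--variance decomposition $V^2/n + d\sigma^2$ minimized at that $\sigma^2$. No gaps.
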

\begin{proof}
For throw-away, the i.i.d. data assumption implies \[
\expec \|\alg(X) - \expec x\|^2 = \frac{1}{\npu^2}\sum_{x \in X_{\text{pub}}} \expec \|x - \expec x\|^2 = \frac{V^2}{\npu}.
\]
The Gaussian mechanism $\mathcal{G}^*(X) := \Bar{X} + \mathcal{N}\left(0, \frac{2 B^2}{\rho n^2}\mathbf{I}_d \right)$ is $\rho$-zCDP by~\citep[Proposition 1.6]{bun16} since the $\ell_2$-sensitivity is bounded by $\Delta_2 = \sup_{X \sim X'}\|\Bar{X} - \Bar{X}'\|_2 \leq \frac{2B}{n}$. Moreover, this sensitivity bound is tight: consider any $P$ such that $x = (B, 0_{d-1})$ and $x' = (-B, 0_{d-1})$ are in the support of $P$. Then fix any $y$ in the support of $P$ and consider the adjacent data sets $X = (x, y, \cdots, y)$ and $X' = (x', y, \cdots, y)$. We have $\|\Bar{X} - \Bar{X}'\|_2 = \frac{1}{n}\|x - x'\|_2 = \frac{2B}{n}$. Additionally, if the variance of the additive isotropic Gaussian noise $\sigma^2$ is smaller than $\frac{\Delta_2^2}{2 \rho} = \frac{2 B^2}{\rho n^2}$, then the Gaussian mechanism is not $\rho$-zCDP~\cite{bun16}. Thus, $\mathcal{G}^*(X)$ is the $\rho$-zCDP Gaussian mechanism with the smallest noise variance $\sigma^2$. Hence the infimum in~\cref{eq: gauss} is attained by $\mathcal{G}^*$. Finally, for any $P \in\PP(B,V)$, the MSE of $\mathcal{G}^*$ is
\begin{align*}
\expec_{\mathcal{G}, X \sim P^n}\left[\| \mathcal{G}^*(X) - \expec_{x \sim P}[x]\|^2\right] &= \expec\|\mathcal{G}^*(X) - \Bar{X}\|^2 + \expec\left\|\frac{1}{n}\sum_{i=1}^n x_i - \expec x_i\right\|^2 \\
&= \frac{2 d B^2}{\rho n^2} + \frac{V^2}{n}.
\end{align*}
\end{proof}

\begin{proposition}[Re-statement of~Proposition~\ref{prop: even more optimal ME}]
Recall the definition of $\alg_r$ from~\cref{eq: weighted gauss}. 
$\alg_r$ is $\rho$-semi-zCDP. Also, there exists $r > 0$ such that
\begin{equation}
\label{eq: always better app}
\sup_{P \in \PP(B, V)} \expec_{X \sim P^n}\left[\| \alg_r(X) - \expec_{x \sim P}[x] \|^2\right] < \min\left(\frac{V^2}{\npu},  \frac{2 d B^2}{\rho n^2} + \frac{V^2}{n} \right).
\end{equation}

Further, if $\frac{V^2}{\npu} \leq \frac{2 d B^2}{\rho n^2}$, then the quantitative advantage of $\alg_r$ is \begin{equation}
\label{eq: case I app}
\sup_{P \in \PP(B, V)} \expec_{X \sim P^n}\left[\| \alg_r(X) - \expec_{x \sim P}[x] \|^2\right] \leq \left(\frac{q}{q + s^2} \right) \min\left(\frac{V^2}{\npu},  \frac{2 d B^2}{\rho n^2} + \frac{V^2}{n} \right),
\end{equation}
where $q = 2 + \frac{\npr \rho V^2}{d B^2}$ and $s = \frac{V \npr \sqrt{\rho}}{B \sqrt{d \npu}}$. 
\end{proposition}
\begin{proof}
\textbf{Privacy:} 
Note that the $\ell_2$-sensitivity of $M(X) = \sum_{x \in \xpr} rx + \sum_{x \in \xpu} \left(\frac{1 - \npr r}{\npu}\right)x$ is \begin{align*}
\Delta_2 &= \sup_{\xpr \sim \xpr'} \left\| \sum_{x \in \xpr} rx + \sum_{x \in \xpu} \left(\frac{1 - \npr r}{\npu}\right)x - \sum_{x \in \xpr'} rx - \sum_{x \in \xpu} \left(\frac{1 - \npr r}{\npu}\right)x \right\| \\
& = \sup_{x, x'}\|rx - rx'\| \leq 2rB.\end{align*}
Recall that the Gaussian mechanism guarantees $\rho$-zCDP whenever $\sigma^2 \geq \frac{\Delta_2^2}{2\rho}$~\citep[Proposition 1.6]{bun16}. Thus, $\alg_r$ is $\rho$-semi-zCDP for $\sigma_r^2 \geq \frac{(2rB)^2}{2 \rho} = \frac{2 B^2 r^2}{\rho}$.

\noindent \textbf{Error bounds:} Let $P \in \PP(B, V)$. We have \begin{align}
\label{eq: error of weighted gauss}
\expec_{X \sim P^n}\left[\| \alg_r(X) - \expec_{x \sim P}[x] \|^2\right] &= \frac{2 d B^2 r^2}{\rho} + \expec\left\| \sum_{x \in \xpr} r (x - \expec x) + \sum_{x \in \xpu} \left(\frac{1 - \npr r}{\npu} \right) (x - \expec x) \right\|^2 \nonumber \\
&= \frac{2 d B^2 r^2}{\rho} + \sum_{x \in \xpr} r^2 V^2 + \sum_{x \in \xpu}\left(\frac{1 - \npr r}{\npu} \right)^2 V^2 \nonumber \\
&= \frac{2 d B^2 r^2}{\rho} + \npr r^2 V^2 + \frac{(1 - \npr r)^2}{\npu} V^2,
\end{align} 
using independence of the Gaussian noise and the data, basic properties of variance, and the fact that the data is i.i.d.  

To prove~\cref{eq: always better app}, let \[
J(r) := \frac{2 d B^2 r^2}{\rho} + \npr r^2 V^2 + \frac{(1 - \npr r)^2}{\npu} V^2.
\]
We compute first and second derivatives of $J$: \[
\frac{d}{dr}J(r) = 2r\left(\frac{2 d B^2}{\rho} + \npr V^2\right) -2 \npr\frac{V^2}{\npu}(1 - \npr r)
\]
and \[
\frac{d^2}{dr^2}J(r) = 2\left(\frac{2 d B^2}{\rho} + \npr V^2\right) + 2 \npr^2 \frac{V^2}{\npu}. 
\]
Since $J$ is strongly convex, it has a unique minimizer $r^*$ which satisfies $\frac{d}{dr}J(r^*) = 0$. We find \[
r^* = \frac{\npr V^2}{\npu} \left(\frac{2 d B^2}{\rho} + \npr V^2 + \frac{\npr^2 V^2}{\npu}\right)^{-1}.
\]
One can verify that $r^* \neq 0$ and $r^* \neq \frac{1}{n}$, since $1 \leq \npr < n$ and $\rho < \infty$ by assumption. Thus, $J(r^*) < \min(J(0), J(1/n))$, which yields~\cref{eq: always better app} by~Lemma~\ref{lem: zcdp naive}. 

To prove~\cref{eq: case I app}, we will choose a different $r$: $r := \frac{ K \sqrt{\rho} V}{B \sqrt{d \npu}}$ for $K > 0$ to be determined. Then by~\cref{eq: error of weighted gauss}, we have 
\begin{align*}
\expec_{X \sim P^n}\left[\| \alg_r(X) - \expec_{x \sim P}[x] \|^2\right] &= \frac{2 d B^2 r^2}{\rho} + \npr r^2 V^2 + \frac{(1 - \npr r)^2}{\npu} V^2 \\
&= K^2 \frac{V^2}{\npu}\left(2 + \frac{\npr \rho V^2}{d B^2} \right) + \frac{V^2}{\npu}\left(1 - \frac{K V}{B} \frac{\sqrt{\rho} \npr}{\sqrt{d \npu}}\right)^2 \\
&= \frac{V^2}{\npu} \left( q K^2 + \left(1 - \frac{K V}{B} \frac{\sqrt{\rho} \npr}{\sqrt{d \npu}}\right)^2 \right),
\end{align*}
where $q = 2 + \frac{\npr \rho V^2}{d B^2}$.
Now, letting $s = \frac{V \npr \sqrt{\rho}}{B \sqrt{d \npu}}$ and choosing $K = \frac{s}{q + s^2}$ gives
\begin{align*}
\expec_{X \sim P^n}\left[\| \alg_r(X) - \expec_{x \sim P}[x] \|^2\right] &\leq \frac{V^2}{\npu} \left( q K^2 + \left(1 - \frac{K V}{B} \frac{\sqrt{\rho} \npr}{\sqrt{d \npu}}\right)^2 \right) \\
& \leq \frac{V^2}{\npu} \left(\frac{q^2 + q s^2}{q^2 + 2q s^2 + s^4} \right).
\end{align*}
Finally, the assumption $\frac{V^2}{\npu} \leq \frac{2dB^2}{\rho n^2}$ implies $\frac{V^2}{\npu} = \min\left(\frac{V^2}{\npu},  \frac{2dB^2}{\rho n^2} + \frac{V^2}{n} \right)$, completing the proof. 

\end{proof}

\subsection{Optimal Semi-DP Empirical Risk Minimization}
\label{app: ERM}

\paragraph{Practical Applications of Semi-DP ERM Beyond ML:}
\label{app: practical applications of ERM}
Semi-DP ERM has numerous applications beyond training ML models. For example, consider semi-DP optimization of energy consumption in smart grids or semi-DP optimization of the total capacity of a multi-user wireless communication system. In these systems, the goal is to optimize the current performance of the system given existing users (e.g., optimize current beamforming strategies in wireless communications). Some users may opt-in to share their data (e.g. electricity consumption pattern) and some users may not. Thus, the problem is naturally a semi-DP ERM problem. 

\begin{theorem}[Complete statement of~\cref{thm: DP convex ERM}]
There exist absolute constants $c_0$ and $C_0$, with $0 < c_0 \leq C_0$, such that
\begin{equation*}
c_0 LD\min\left\{ \frac{\npr}{n}, \pen \right\} \leq \ERMrisk \leq C_0 LD  \min\left\{ \frac{\npr}{n}, \pen \right\}.
\end{equation*}
Further, if $\mu > 0$, then there exist absolute constants $0 < c_1 \leq C_1$ such that \begin{equation*}
    c_1 LD\min\left\{ \frac{\npr}{n}, \pen \right\}^2 \leq \ERMriskmu \leq C_1 \frac{L^2}{\mu} \min\left\{ \frac{\npr}{n}, \frac{d \sqrt{\ln(n)}}{n \eps} \right\}^2.
\end{equation*}
\end{theorem}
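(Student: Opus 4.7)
The plan is to exhibit two $\eps$-semi-DP algorithms and take the minimum of their excess risks. The first is the throw-away estimator $\alg_{\mathrm{TA}}(X) \in \argmin_{w \in \WW} \sum_{x \in \xpu} f(w,x)$, which ignores $\xpr$ and is therefore $0$-semi-DP. Writing $\hf(w) = \tfrac{\npu}{n}\hf_{\mathrm{pub}}(w) + \tfrac{\npr}{n}\hf_{\mathrm{priv}}(w)$, the fact that $\alg_{\mathrm{TA}}(X)$ minimises $\hf_{\mathrm{pub}}$, together with $L$-Lipschitzness of $f$ and $\WW$ having diameter $D$, yields
\begin{equation*}
\hf(\alg_{\mathrm{TA}}(X)) - \hf^* \;\leq\; \tfrac{\npr}{n}\bigl(\hf_{\mathrm{priv}}(\alg_{\mathrm{TA}}(X)) - \hf_{\mathrm{priv}}(\ws)\bigr) \;\leq\; \tfrac{\npr}{n}\,L D.
\end{equation*}
The second algorithm is the $\eps$-DP exponential mechanism of~\cite{bst14} applied to the full dataset; it is in particular $\eps$-semi-DP and has excess empirical risk $O(LDd/(n\eps))$. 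Taking the better of the two gives the stated upper bound.

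\textbf{Lower bound, convex case.} The plan is to reduce ERM to empirical mean estimation and run a packing argument analogous to the proof of~\cref{thm: pure DP empirical mean estimation}. I will work with the linear loss $f(w,x) = -L\langle w,x\rangle$ on the parameter set $\WW = \{w : \|w\| \leq D/2\}$ and data universe $\XX \subseteq \mathbb{B}$; then $f \in \FF$, the empirical minimiser is $\ws(X) = (D/2)\Bar{X}/\|\Bar{X}\|$, and the excess empirical risk equals $L\bigl((D/2)\|\Bar{X}\| - \langle \alg(X), \Bar{X}\rangle\bigr)$. I then reuse the hard instances from~\cref{thm: pure DP empirical mean estimation}: a Gilbert--Varshamov packing $\{x_i\}_{i=1}^{2^{d/2}}$ on $[-1/\sqrt{d},1/\sqrt{d}]^d$ with pairwise distances $\geq 1/8$, and datasets $X_i$ in which all $\npr$ private slots equal $x_i$ and all $\npu$ public slots equal $\mathbf{0}$. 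Because the public suffix is identical across every $X_i$, semi-DP group privacy collapses to group privacy of size $\npr$ between any $X_i, X_j$; the disjoint-balls argument then shows that no $\eps$-semi-DP $\alg$ can land within $\tfrac{1}{128}\tfrac{D\npr}{n}$ of $\ws(X_i)$ with probability $\geq 2/3$ for every $i$, provided $d \gtrsim \eps \npr$. Rescaling by embedding a sub-dataset of $n^* = nd/(3\eps\npr)$ copies of $x_i$ inside zeros handles the complementary $d/(n\eps)$ regime exactly as in Case 2 of that proof, yielding the matching lower bound $\Omega(LD \min\{\npr/n, d/(n\eps)\})$.

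\textbf{Strongly convex case.} For the upper bound I use the same two algorithms but exploit strong convexity. Throw-away satisfies $\hf(\alg_{\mathrm{TA}}) - \hf^* \leq \tfrac{L\npr}{n}\|\alg_{\mathrm{TA}} - \ws\|$ by the same Lipschitz argument, and $\hf(\alg_{\mathrm{TA}}) - \hf^* \geq \tfrac{\mu}{2}\|\alg_{\mathrm{TA}} - \ws\|^2$ by $\mu$-strong convexity of $\hf$; combining gives $\|\alg_{\mathrm{TA}} - \ws\| \leq 2L\npr/(\mu n)$ and therefore excess risk $\leq 2L^2\npr^2/(\mu n^2)$. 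The second algorithm is the $\eps$-DP output-perturbation (or exponential-mechanism) estimator for strongly convex ERM from~\cite{bst14} with excess risk $O(L^2 d^2 \ln(n)/(\mu n^2 \eps^2))$. Taking the minimum yields the claimed upper bound. For the matching lower bound I use the regularised linear loss $f(w,x) = \tfrac{\mu}{2}\|w\|^2 - L\langle w,x\rangle$; choosing the domain radius large enough that the unconstrained minimiser $\ws(X) = (L/\mu)\Bar{X}$ lies in $\WW$, the identity $\hf(w) - \hf^* = \tfrac{\mu}{2}\|w - \ws(X)\|^2$ converts any semi-DP ERM algorithm $\alg$ into a semi-DP empirical mean estimator with squared-$\ell_2$ error $(2/\mu)(\hf(\alg(X)) - \hf^*)$, so plugging in~\cref{thm: pure DP empirical mean estimation} produces the $\Omega((L^2/\mu)\min\{\npr/n, d/(n\eps)\}^2)$ bound; the residual $\sqrt{\ln n}$ gap to the upper bound is the same gap already present for standard DP strongly convex ERM.

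\textbf{Main obstacle.} The delicate step is keeping the semi-DP packing argument tight: unlike a fully DP lower bound, the adversary cannot tamper with the $\npu$ public coordinates, so the hard datasets must share a \emph{common} public suffix and group privacy must be invoked only across the $\npr$ private slots. Obtaining the correct interpolation between the $\npr/n$ and $d/(n\eps)$ regimes then requires the two-case rescaling device ($n$ vs.\ $n^*$) from~\cref{thm: pure DP empirical mean estimation}; once this scaffolding is in place, the reduction from ERM to mean estimation via (regularised) linear losses makes the remaining work mechanical.
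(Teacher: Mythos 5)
Your overall strategy coincides with the paper's: throw-away versus the $\eps$-DP exponential mechanism of~\cite{bst14} for the upper bounds, and a packing-based reduction from ERM to empirical mean estimation (linear loss for the convex case, a quadratic loss for the strongly convex case) for the lower bounds. Your strongly convex \emph{upper} bound is actually slightly cleaner than the paper's: the paper applies a perturbation-stability lemma to $\hfpu$ to get $\frac{L^2}{\mu}\frac{\npr^2}{n\npu}$ and then needs a case analysis on $\npu$, whereas you combine the Lipschitz bound $\hf(\alg_{\mathrm{TA}})-\hf^*\le\frac{L\npr}{n}\|\alg_{\mathrm{TA}}-\ws\|$ with $\mu$-strong convexity of $\hf$ itself to obtain $\frac{2L^2\npr^2}{\mu n^2}$ directly, which is exactly what is needed.

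There is, however, a concrete error in your convex lower bound. You assert that the disjoint-balls argument yields some $i$ with $\|\alg(X_i)-\ws(X_i)\|\geq\frac{1}{128}\frac{D\npr}{n}$ (with good probability) and conclude $\Omega(LD\min\{\npr/n,d/(n\eps)\})$. But with $f(w,x)=-L\langle w,x\rangle$ on $\WW=\{\|w\|\le D/2\}$ one has
\begin{equation*}
\hf(\wpr)-\hf^* \;=\; L\Bigl(\tfrac{D}{2}\|\Bar{X}\|-\langle\wpr,\Bar{X}\rangle\Bigr)\;\geq\;\tfrac{LD}{4}\,\|\Bar{X}\|\,\Bigl\|\tfrac{2}{D}\wpr-\tfrac{\Bar{X}}{\|\Bar{X}\|}\Bigr\|^2,
\end{equation*}
and on your hard instances $\|\Bar{X}_i\|=\npr/n$; plugging in a distance of order $D\npr/n$ therefore yields excess risk of order $LD(\npr/n)^3$, not $LD\,\npr/n$. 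The packing argument actually gives a \emph{constant-order} separation: the minimizers $\ws(X_i)=(D/2)x_i$ are pairwise at distance $\ge D/16$, so the correct ball radius is $\Theta(D)$, not $\Theta(D\npr/n)$ --- it appears you carried over the $\Theta(\npr/n)$ radius from the mean-estimation packing, which is the separation scale of the means $\Bar{X}_i$, not of the $\ws(X_i)$. The paper avoids this bookkeeping by post-processing: it rescales $\wpr$ to the semi-DP mean estimator $\alg'=\frac{M}{n}\wpr$ with $M/n=\|\Bar{X}\|$, applies the already-proved empirical mean-estimation lower bound to $\alg'$, and unwinds the rescaling inside the inequality above so that the $\|\Bar{X}\|$ factor cancels exactly, cleanly interpolating the $\npr/n$ and $d/(n\eps)$ regimes. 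With the ball radius corrected to $\Theta(D)$ your argument goes through and matches the paper's.
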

\begin{proof}
\textbf{Lower Bounds:} Given a lower bound for empirical mean estimation, Bassily et al.~\cite{bst14} show how to prove excess risk lower bounds for convex and strongly convex ERM by reducing these problems to mean estimation. Thus, our lower bounds follow immediately by combining the lower bound in~\cref{thm: pure DP empirical mean estimation} with the reduction in~\cite{bst14}. Roughly, the reduction works as follows:

In the strongly convex case, we simply take $f(w,x) = \frac{1}{2}\|w - x\|^2$  on $\WW = \XX = \mathbb{B}$; $f(\cdot, x)$ is $1$-uniformly-Lipschitz and $1$-strongly convex. Moreover, for any $\epsemi$ $\alg$ with output $\wpr = \alg(X)$, we have \[
\expec \hf(\wpr) - \hf^* = \frac{1}{2} \expec \|\alg(X) - \Bar{X}\|^2.
\]
Applying~\cref{thm: pure DP empirical mean estimation} and then scaling $f \to \frac{L}{D} f$ and $\WW \to D \WW$ and $\XX \to D \XX$ completes the proof.

For the convex case, we take $f(w,x) = - \langle w, x \rangle$ on $\WW = \XX = \mathbb{B}$. Then $\ws := \argmin_{w \in \WW} \hf(w) = \frac{\Bar{X}}{\|\Bar{X}\|}$ and \[
\hf(\wpr) - \hf^* \geq \frac{1}{2}\|\Bar{X}\|\|\wpr - \ws\|^2. 
\]
Also, the proof of~\cref{thm: pure DP empirical mean estimation} shows that there exists a dataset $X \in \XX^n$ such that $\|\Bar{X}\| = M/n := \min\left(\frac{\npr}{n}, \frac{d}{3n\eps} \right)$ and $\expec\|\alg'(X) - \Bar{X}\|^2 \gtrsim \min\left(\frac{\npr}{n}, \frac{d}{n \eps} \right)^2$ for any $\epsemi$ $\alg'$. Note that $\alg' := \frac{M}{n} \wpr$ is $\epsemi$ by post-processing. Thus, \begin{align*}
\expec \hf(\wpr) - \hf^* &\geq \frac{M}{2n} \expec \left[ \|\wpr - \ws\|^2 \right] \\
&= \frac{M}{2n} \left(\frac{n}{M}\right)^2 \expec \left[ \|\alg'(X) - \Bar{X}\|^2 \right] \\
&\gtrsim \frac{n}{M} \min\left(\frac{\npr}{n}, \frac{d}{n \eps} \right)^2\\
&\gtrsim \min\left(\frac{\npr}{n}, \frac{d}{n \eps} \right). 
\end{align*}

A standard scaling argument (see~\cite{bst14} for details) completes the proof. 

\paragraph{Upper Bounds:} The second terms in each minimum follows by running the $\eps$-DP algorithms in~\cite{bst14}: these achieve the desired excess empirical risk bounds and are automatically $\epsemi$.

We now prove the first term in each respective minimum. Denote $\hfpu(w) = \frac{1}{n} \sum_{x \in X_{pub}} f(w,x)$ and $\hfpr(w) = \frac{1}{n} \sum_{x \in X_{priv}} f(w,x)$, so that $\hf = \hfpu + \hfpr$. The algorithm we will use simply returns any minimizer of the public empirical loss: $\alg(X) = \wpu \in \argmin_{w \in \WW} \hfpu(w)$. (It will be easy to see from the proof that any approximate minimizer would also suffice.) $\alg$ is clearly $\epsemi$. Next, we bound the excess risk of $\alg$. Let $\ws \in \argmin_{w \in \WW} \hf(w)$.

\paragraph{Convex Upper Bound:} We have \begin{align*}
    \hf(\wpu) - \hf(\ws) &= \hf(\wpu) - \hfpu(\wpu) + \hfpu(\wpu) - \hfpu(\ws) \\
    &\;\;\; + \hfpu(\ws) - \hf(\ws) \\
    &\leq \frac{1}{n} \sum_{x \in X_{priv}} f(\wpu, x)  + 0 - \frac{1}{n}\sum_{x \in X_{priv}} f(\ws, x) \\
    &\leq \frac{1}{n} \sum_{x \in X_{priv}} L \| \wpu - \ws\| \\
    &= L \| \wpu - \ws\| \frac{\npr}{n}\\
    &\leq LD \frac{\npr}{n}.
\end{align*}

\paragraph{Strongly Convex Upper Bound:}
By the above, $\hf(\wpu) - \hf(\ws) \leq L \| \wpu - \ws\| \frac{\npr}{n}$. Now we will use strong convexity to bound $\| \wpu - \ws\|$. To do so, we use the following lemma, versions of which have appeared, e.g. in~\cite{lowy2021, chaud}:
\begin{lemma}{\cite{lowy2021}}
\label{lemmaB2outpert}
Let $H(w), h(w)$ be convex functions on some convex closed set $\mathcal{W} \subseteq \mathbb{R}^d$ and suppose that $H(w)$ is $\mu_{H}$-strongly convex. Assume further that $h$ is $L_{h}$-Lipschitz. Define $w_{1} = \arg\min_{w \in \mathcal{W}} H(w)$ and $w_{2} = \arg\min_{w \in \mathcal{W}} [H(w) + h(w)]$. Then $\|w_{1} - w_{2}\| \leq \frac{L_{h}}{\mu_H}.$ 
\end{lemma}
We apply the lemma with $h(w) = \hfpr(w)$ and $H(w) = \hfpu(w)$. Then the conditions of the lemma are satisfied with $L_h \leq \frac{\npr}{n} L$ and $\mu_H = \frac{\npu}{n} \mu$. Thus, \[
\| \ws - \wpu \| \leq \frac{L_h}{\mu_H} \leq \frac{L}{\mu} \frac{\npr}{\npu} 
\]
This leads to \[
\hf(\wpu) - \hf(\ws) \leq \frac{L^2}{\mu} \frac{\npr^2}{n \npu}.
\]
Combining the two strongly convex upper bounds with the upper bound $LD \frac{\npr}{n} \leq \frac{L^2}{\mu} \frac{\npr}{n}$ (which holds for any convex function), we have an algorithm $\alg$ with the following excess risk: 
\begin{equation}
\label{eq:a}
\ERlower \lesssim \frac{L^2}{\mu}\min\left(\frac{\npr}{n}, \frac{\npr^2}{n \npu}, \frac{d^2 \ln(n)}{n^2 \eps^2}\right). 
\end{equation}
We will show that \cref{eq:a} is equal to the strongly convex upper bound stated in~\cref{thm: DP convex ERM} up to constant factors. 
First, suppose $\npu \gtrsim n$: i.e. there is a constant $k > 0$ such that $\npu \geq k n$ for all $n \geq 1$. Then, clearly~\cref{eq:a} and the strongly convex upper bound stated in~\cref{thm: DP convex ERM} are both equal to $\Theta\left(\frac{L^2}{\mu}\min\left(\frac{\npr^2}{n^2}, \left(\pen\right)^2 \ln(n) \right) \right)$. 

Next, suppose $\npu \ll n$: i.e. for any $k > 0$, there exists $n \geq 1$ such that $\npu < k n$. Then we claim that $\min\left\{ \frac{\npr}{n}, \frac{d \sqrt{\ln(n)}}{n \eps} \right\}^2 \gtrsim \min\left\{1, \frac{d \sqrt{\ln(n)}}{n \eps} \right\}^2$. If we prove this claim, then we are done. There are two subcases to consider: A) $\npr/n < \frac{d \sqrt{\ln(n)}}{\eps n}$; and B) $\npr/n \geq \frac{d \sqrt{\ln(n)}}{\eps n}$. In subcase B), the claim is immediate. Consider subcase A): if $\npr \gtrsim n$, then we're done. If not, then we have $\npr \ll n$ and $\npu \ll n$, so $n = \npr + \npu \ll n$, a contradiction. This completes the proof. 
\end{proof}

\begin{remark}[Details of Remark~\ref{rem: nonconvex ERM}]
The same minimax risk bound~\cref{eq: convex ERM excess risk} holds up to a logarithmic factor if we replace $\FFmuz$ by the larger class of all Lipschitz \textit{non-convex} (or convex) loss functions in the definition~\cref{eq:DP minimax excess empirical risk}: First, the lower bound in~\cref{thm: DP convex ERM} clearly still holds for non-convex loss functions. For the upper bound, the $\eps$-DP (hence semi-DP) exponential mechanism achieves error $O(LD \pen \ln(n))$~\cite{bst14,gtu22}. Further, the proof of~\cref{thm: DP convex ERM} reveals that convexity is not necessary for the throw-away algorithm to achieve error~$O(LD \npr/n)$. However, the optimal algorithms are inefficient for non-convex loss functions: to the best of our knowledge, all existing polynomial time implementations of the exponential mechanism require convexity for their runtime guarantees to hold. Further, computing $\approx \argmin_{w \in \WW} \widehat{F}_{\text{pub}}(w)$ in the implementation of throw-away may not be tractable in polynomial time for non-convex $\widehat{F}_{\text{pub}}$. 
\end{remark}

\subsection{Optimal Semi-DP Stochastic Convex Optimization}
\label{app: DP SCO}

\paragraph{Approximate $(\eps, \delta)$-Semi-DP SCO}
\begin{theorem}[Complete Version of~\cref{thm: approx DP SCO}]
\label{thm: approx DP SCO app}
Let $\eps \lesssim 1/\log(nd)$ and $\delta \ll 1/n$. Then, there is a constant $C>0$ such that
{\small
\begin{equation*}
\ell(d,n) LD\min\left\{\frac{1}{\sqrt{\npu}}, \frac{\sqrt{d}}{n \eps} + \frac{1}{\sqrt{n}}\right\} \leq \SCOrisk \leq C LD\min\left\{ \frac{1}{\sqrt{\npu}}, \frac{\sqrt{d \ln(1/\delta)}}{n \eps} + \frac{1}{\sqrt{n}}\right\},
\end{equation*}
}
and 
{\small
\begin{equation*}
\ell(d,n) LD\min\left\{\frac{1}{\sqrt{\npu}}, \frac{\sqrt{d}}{n \eps} + \frac{1}{\sqrt{n}}\right\}^2 \leq \SCOriskmu \leq C \frac{L^2}{\mu}\min\left\{ \frac{1}{\sqrt{\npu}}, \frac{\sqrt{d \ln(1/\delta)}}{n \eps} + \frac{1}{\sqrt{n}}\right\}^2,
\end{equation*}
}
where $1/\ell(d,n)$ is logarithmic in $d$ and $n$. Our lower bounds hold for symmetric $\alg = (\alg^1, \ldots, \alg^d)$. 
\normalsize
\end{theorem}
\begin{proof}
\textbf{Lower bounds:}
Let $\alg$ be $(\eps, \delta)$-semi-DP and symmetric, and denote $\wpr = \alg(X)$. 

\noindent \ul{Strongly convex lower bounds:} We begin with the strongly convex lower bounds, which can be proved by reducing strongly convex SCO to mean estimation and applying~\cref{thm: CDP mean estimation}. In a bit more detail, let $f: \WW \times \XX \to \mathbb{R}$ be given by \[
f(w,x) = \frac{L}{2D}\|w - x\|^2,
\]
where $\WW = \XX = D \mathbb{B}$. 
Note that $f(\cdot, x)$ is $L$-uniformly Lipschitz and $\frac{L}{D}$-strongly convex in $w$ for all $x$. Further, $\ws := \argmin_{w \in \WW} \left\{F(w) = \expec_{x \sim P}\left[f(w,x)\right]\right\} = \expec_{x \sim P}[x]$. By a direct calculation (see e.g. \citep[Lemma 6.2]{klz21}), we have \begin{equation}
\label{eq: sc reduction to ME1}
\expec F(\wpr) - F(\ws) = \frac{L}{2D} \expec \|\wpr - \ws\|^2.
\end{equation}
We can lower bound $\expec \|\wpr - \ws\|^2 = \expec \| \alg(X) - \expec_{x \sim P}[x]\|^2$ via~\cref{thm: CDP mean estimation} (and its proof, to account for the re-scaling). Specifically, \[
\expec \|\wpr - \ws\|^2 \geq \ell(d,n) D^2 \min\left\{\frac{1}{\npu}, \frac{d}{n^2 \eps^2} + \frac{1}{n}\right\},
\]
for a logarithmic function $\ell(d,n)$ of $d$ and $n$, by \cref{thm: CDP mean estimation}. 
Applying~\cref{eq: sc reduction to ME1} yields the desired excess risk lower bound for $\delta > 0$.

\noindent \ul{Convex lower bounds:}
We will begin by proving the lower bounds for the case in which $L = D = 1$, and then scale our construction to get the lower bounds for arbitrary $L, D$. 

Let $\XX = \left\{\pm \frac{1}{\sqrt{d}}\right\}^d \subset \mathbb{R}^d$ and $\WW = \mathbb{B}$. Define \[
f(w,x) = - \langle w, x \rangle,
\]
which is convex and $1$-uniformly-Lipschitz in $w$ on $\XX$. Let $\ptheta$ be the hard distribution used to prove~\cref{thm: approx DP mean estimation with small theta}, which satisfies $\expec_{x \sim \ptheta}[x] = \theta \in [-a/\sqrt{d}, a/\sqrt{d}]^d$ and $\|\theta\| \leq a = \min\left(\frac{1}{\sqrt{\npu}}, \frac{\sqrt{d}}{\npr \eps} + \frac{1}{n}\right)$. 
Further, $w_{\theta}^* = \argmin_{w \in \WW} [F_{\theta}(w) := \expec_{x \sim \ptheta} f(w,x) = - \langle w, \theta \rangle] = \frac{\theta}{\|\theta\|}$. 
A direct calculation (see e.g. \citep[Equation 14]{klz21})
shows \begin{align}
\sup_{\theta} \expec F_\theta(\wpr) - F_\theta^* &\geq \frac{1}{2} \expec\left[\|\theta\| \|\wpr - \ws_\theta\|^2 \right] \\
&= \frac{1}{2} \sup_{\theta} \expec\left[\frac{1}{\|\theta\|}\left\|\tilde{w}_{\text{priv}} - \theta \right\|^2 \right],
\end{align}
where $\tilde{w}_{\text{priv}} = \widetilde{\alg}_\theta(X)$ is the output of the algorithm $\widetilde{\alg}_\theta: \XX^n \to \|\theta\| \WW$ defined by $\widetilde{\alg}_\theta(X) := \| \theta \| \alg(X)$. Note that $\widetilde{\alg}_\theta$ is $(\eps, \delta)$-semi-DP by post-processing, for any $\theta$. 
Now, we invoke~\cref{thm: approx DP mean estimation with small theta} to obtain \[
\sup_{\theta} \expec\left[\left\|\tilde{w}_{\text{priv}} - \theta \right\|^2 \right] \geq \widetilde{\Omega}\left(\min\left\{\frac{1}{\npu}, \frac{d}{\npr^2 \eps^2} + \frac{1}{n}\right\}\right)
\]
for $\|\theta\| \leq \min\left(\frac{1}{\sqrt{\npu}}, \frac{\sqrt{d}}{\npr \eps} + \frac{1}{n}\right)$. This implies the desired lower bound when $L=D=1$. 

For general $L$ and $D$, we scale the problem instance as follows: let $\wt{W} = D \WW$, $\wt{X} = L \XX$, and $\wt{x} \sim \wt{\ptheta} \iff \wt{x} = Lx$ for $x \sim \ptheta$. Define $\wt{f}: \wt{\WW} \times \wt{\XX} \to \mathbb{R}$ by $\wt{f}(\wt{w}, \wt{x}) := f(\wt{w}, \wt{x}) = - \langle \wt{w}, \wt{x} \rangle$. Then $\wt{f}(\cdot, \wt{x})$ is $L$-Lipschitz and convex. Moreover, if $F(w) = \expec_{x \sim \ptheta}[f(w,x)]$, $\wt{F}(\wt{w}) = \expec_{\wt{x} \sim \wt{P}}[f(\wt{w}, \wt{x})]$,  $\wt{w} = Dw$, and $\wt{\theta} = \expec_{\wt{x} \sim \wt{\ptheta}}[\wt{x}] = L \theta$,
then $D \ws \in \argmin_{\wt{w} \in \wt{\WW}} \wt{F}(\wt{w})$ and 
\begin{align*}
\wt{F}(\wt{w}) - \wt{F}^* &= -\langle \wt{w}, \wt{\theta}\rangle 
+ \langle\wt{w}^*, \wt{\theta} \rangle  \\
&= D \langle  \wt{\theta}, \ws - w \rangle \\
&=  LD \langle \theta, \ws - w \rangle \\
&= LD \left[F(w) - F^* \right].
\end{align*}
This shows that excess risk scales by $LD$, completing the lower bound proofs.

\noindent \textbf{Upper bounds:}
\ul{Convex upper bounds}: Consider the $0$-semi-DP throw-away algorithm that discards $\xpr$ and runs $\npr$ steps of one-pass SGD (stochastic approximation) using $\xpu$. This algorithm has excess risk $O\left(LD/\sqrt{\npu}\right)$~\cite{ny}. To obtain the second term in the convex $(\eps, \delta)$-semi-DP upper bound, one can use, e.g. $(\eps, \delta)$-DP-SGD~\cite{bft19}.

\noindent\ul{Strongly convex upper bounds:} Consider the $0$-semi-DP throw-away algorithm that discards $\xpr$ and runs $\npr$ steps of one-pass SGD (stochastic approximation) using $\xpu$. This algorithm has excess risk $O\left(\frac{L^2}{\mu \npu}\right)$~\cite{ny}. The second term in the strongly convex $(\eps, \delta)$-semi-DP upper bound can be attained, e.g. by $(\eps, \delta)$-DP-SGD~\cite{lr21fl}. 
\end{proof}

Next, we provide minimax optimal excess risk bounds for pure $\eps$-semi-DP SCO. 
\paragraph{Pure $\eps$-Semi-DP SCO}
\begin{theorem}[Pure $\eps$-Semi-DP SCO]
\label{thm: pure sdp sco}
Suppose $\eps \leq d/8$, and either $\npu \lesssim \frac{n\eps}{d}$ or $d \lesssim 1$. 
If $\mu = 0$ (convex case), then there exist absolute constants $0 < c_0 \leq C_0$ such that
\small
\begin{equation*}
c_0 LD\min\left\{\frac{1}{\sqrt{\npu}}, \frac{d}{n \eps} + \frac{1}{\sqrt{n}}\right\} \leq \SCOriskdelz \leq C_0 LD\min\left\{ \frac{1}{\sqrt{\npu}}, \frac{d}{n \eps} + \frac{1}{\sqrt{n}}\right\}.
\end{equation*}
\normalsize
If $\mu > 0$, there are constants $0 < c_1 \leq C_1$ such that 
\small
\begin{equation*}
c_1 LD\min\left\{\frac{1}{\npu}, \frac{d^2}{n^2 \eps^2} + \frac{1}{n}\right\} \leq \SCOriskdelzmu \leq C_1 \frac{L^2}{\mu}\min\left\{\frac{1}{\npu}, \frac{d^2 \ln(n)}{n^2 \eps^2} + \frac{1}{n}\right\}.
\end{equation*}
\normalsize
The above upper bounds hold for any $\npu, d$. 
\end{theorem}

\begin{proof}[Proof of~\cref{thm: pure sdp sco}]
\textbf{Lower bounds:}
Let $\alg$ be $\eps$-semi-DP and denote $\wpr = \alg(X)$. 

\noindent \ul{Strongly convex lower bound:} We begin with the strongly convex lower bound, which can be proved by reducing strongly convex SCO to mean estimation and applying~\cref{thm: pure semi-DP mean estimation}. In a bit more detail, let $f: \WW \times \XX \to \mathbb{R}$ be given by \[
f(w,x) = \frac{L}{2D}\|w - x\|^2,
\]
where $\WW = \XX = D \mathbb{B}$. 
Note that $f$ is $L$-uniformly Lipschitz and $\frac{L}{D}$-strongly convex in $w$ for all $x$. Further, $\ws := \argmin_{w \in \WW} \left\{F(w) = \expec_{x \sim P}\left[f(w,x)\right]\right\} = \expec_{x \sim P}[x]$. By a direct calculation (see e.g. \citep[Lemma 6.2]{klz21}), we have \begin{equation}
\label{eq: sc reduction to ME}
\expec F(\wpr) - F(\ws) = \frac{L}{2D} \expec \|\wpr - \ws\|^2.
\end{equation}
We can lower bound $\expec \|\wpr - \ws\|^2 = \expec \| \alg(X) - \expec_{x \sim P}[x]\|^2$ via~\cref{thm: pure semi-DP mean estimation} (and its proof, to account for the re-scaling). Specifically, if $\delta = 0$, $\eps \leq \max(1, d/8)$, and either $d = O(1)$ or $\npu = O(n \eps/d)$, then \cref{thm: pure semi-DP mean estimation} and its proof imply \[
\expec \|\wpr - \ws\|^2 \geq c D^2 \min\left(\frac{1}{\npu}, \frac{d^2}{n^2 \eps^2} + \frac{1}{n} \right).
\]
Combining this with~\cref{eq: sc reduction to ME} leads to the desired excess risk lower bound for $\delta = 0$.

\noindent \ul{Convex lower bound:}
We will begin by proving the lower bound for the case in which $L = D = 1$, and then scale our construction to get the lower bounds for arbitrary $L, D$. 

Assume $\delta = 0$, $\eps \leq d/8$, and either $\npu \lesssim n\eps/d$ or $d \lesssim 1$. 
Let $\XX = \{0\} \bigcup \left\{\pm \frac{1}{\sqrt{d}}\right\}^d \subset \mathbb{R}^d$ and $\WW = \mathbb{B}$. Define \[
f(w,x) = - \langle w, x \rangle,
\]
which is convex and $1$-uniformly-Lipschitz in $w$ on $\XX$. Choose $\VV$ to be a finite subset of $\mathbb{R}^d$ such that $|\VV| \geq 2^{d/2}$, $\|v\| = 1$ for all $v$, and $\|v - v'\| \geq 1/8$ for all $v \neq v'$ (see e.g. the Gilbert-Varshamov construction). Following the proof of~\cref{thm: CDP mean estimation}, we define $\ptv = (1-p) P_0 + p P_v$ for all $v \in \VV$, where $p \in [0, 1]$ will be chosen later, $P_0$ is point mass on $\{X = 0\}$ and $P_v$ is point mass on $\{X = v\}$. Denote the mean $\theta_v := \expec_{x \sim \ptv}[x] = pv$. Note that $\|\theta_v\| = p$ for all $v$. Let $F_v(w) = \expec_{x \sim \ptv}[f(w,x)]$ and $\ws_v \in \argmin_{w \in \WW} F_v(w) = \frac{\theta_v}{\|\theta_v\|} = v$. 
A direct calculation (see e.g. \citep[Equation 14]{klz21})
shows \begin{equation}
\label{eq:b}
\expec F_v(w) - F_v^* \geq \frac{1}{2} \expec\left[\|\theta_v\| \|w - \ws_v\|^2 \right]
\end{equation}
for any $w \in \WW, v \in \VV$. Also, \begin{align*}
\rho^*(\VV) = \min\left\{\|\ws_v - \ws_{v'}\| : v, v' \in \VV, v\neq v' \right\} = \min\left\{\|v - v'\| : v, v' \in \VV, v\neq v' \right\} \geq \frac{1}{8}.
\end{align*}
Thus, by combining~\cref{eq:b} with the reduction from estimation to testing and \cref{lem: semiDP Fano} (see the proof of~\cref{thm: CDP mean estimation} for details), we have \begin{align*}
\sup_{v \in \VV} \expec\left[F_v(\wpr) - F_v^* \right] &\geq \frac{1}{2} \sup_{v\in \VV} \expec\left[\|\theta_v\| \|\wpr - \ws_v\|^2 \right]\\
&= \frac{p}{2} \sup_{v \in \VV} \expec\left[\|\wpr - \ws_v\|^2 \right] \\
&\geq \frac{p}{2} \rho^{*}(\VV)^2 \frac{1}{|\VV|}\sum_{v \in \VV} \ptv\left(\|\thetahat(X) - \theta_v\| \geq \rho^*(\VV) \right) \\
&\geq \frac{p}{128} \frac{(|\VV| - 1) e^{-\eps \lceil \npr p \rceil} (1 - p)^{\npu}}{2 \left(1 + (|\VV| - 1)  e^{-\eps \lceil \npr p \rceil} \right)} \\
&\geq \frac{p}{128} \frac{(2^{d/2} - 1) e^{-\eps \lceil \npr p \rceil} (1 - p)^{\npu}}{2 \left(1 + (2^{d/2} - 1)  e^{-\eps \lceil \npr p \rceil} \right)} \\
&\geq \frac{p}{512} (1-p)^{\npu} \min\left\{1, \frac{2^{d/2} - 1}{e^{\eps(\npr p + 1)}} \right\}.
\end{align*}
Now, assume $d \geq 4$ so that $2^{d/2} - 1 \geq e^{d/4}$. Then, as detailed in the proof of~\cref{thm: CDP mean estimation}, choosing \[
p = \min\left(\frac{d}{4n \eps} - \frac{1}{n}, \frac{1}{2\sqrt{\npu}}\right)
\]
and assuming $\npu \leq k n \eps/d$ for some absolute constant $k$ implies \[
\sup_{v \in \VV} \expec\left[F_v(\wpr) - F_v^* \right] \geq c \min\left(\frac{1}{\sqrt{\npu}}, \frac{d}{n \eps}\right)
\]
for some absolute constant $c > 0$. Combining this with the non-private SCO lower bound~\cite{ny} yields  \[
\sup_{P} \expec\left[F(\wpr) - F^* \right] \geq c' \min\left(\frac{1}{\sqrt{\npu}}, \frac{d}{n \eps} + \frac{1}{\sqrt{n}}\right),
\]
where $F(w) := \expec_{x \sim P} [f(w,x)]$. 

Suppose instead that $0 \leq \delta \leq \eps$ and $d \lesssim 1$ (i.e. $d \leq k$ for some absolute constant $k \geq 1$), but $\npu \in [n]$ is arbitrary. We will prove the lower bound for $d=1$; by taking the $k$-fold product distribution, this is sufficient to complete the proof of the unscaled $\eps$-semi-DP lower bound. Define distributions $P_1, P_2$ on $\{-1, 1\}$ as follows: \[
P_1(-1) = P_2(1) = \frac{1 + \gamma}{2}, ~~ P_1(1) = P_2(-1) = \frac{1- \gamma}{2}
\]
for some $\gamma \in (0, 1/2]$ to be chosen later. Note $\theta_1 := \expec_{P_1}[x] = - \gamma$ and $\theta_2 := \expec_{P_2}[x] = \gamma$, so $|\theta_j| = \gamma$ for $j = 1, ~2$. 
Let $F_j(w) = \expec_{x \sim P_j} f(w,x)$ and $\ws_j = \frac{\theta_j}{|\theta_j|} = \frac{\theta_j}{\gamma} \in \argmin_{w \in \WW} F_j(w)$. 
Then by~\cref{eq:b}, we have 
\begin{align*}
\max_{j \in \{1,2\}} \expec\left[F_j(\wpr) - F_j^* \right] &\geq \frac{1}{2} \max_{j \in \{1,2\}} \expec\left[|\theta_j| |\wpr - \ws_j|^2 \right]\\
&= \frac{\gamma}{2} \max_{j \in \{1,2\}} \expec\left[|\wpr - \ws_j|^2 \right] \\
&= \frac{1}{2 \gamma} \max_{j \in \{1,2\}} \expec\left[|\wpr' - \theta_j|^2 \right],
\end{align*}
where $\wpr' := \gamma \wpr$ is semi-DP iff $\wpr$ is semi-DP (by post-processing). Thus, by applying Le Cam's method and Lemma~\ref{lem: alireza lem3} (see the proof of~\cref{thm: CDP mean estimation} for details), we get \begin{align*}
    \max_{j \in \{1,2\}} \expec\left[F_j(\wpr) - F_j^* \right] &\geq \frac{1}{2 \gamma}\left[\frac{\gamma^2}{8}\left(1 - \gamma \min\left(\sqrt{\frac{3n}{2}}, 6 \npr \eps + \sqrt{\frac{3 \npu}{2}} \right) \right) \right]. 
\end{align*}
Now we will choose $\gamma$ to (approximately) maximize the right-hand side of the above inequality. 
If $\min\left\{\sqrt{\frac{3n}{2}}, 6 \npr \eps + \sqrt{\frac{3\npu}{2}}\right\} = \sqrt{\frac{3n}{2}}$, then choosing $\gamma = \frac{1}{3} \sqrt{\frac{2}{3n}}$ yields \[
\max_{j \in \{1,2\}} \expec\left[F_j(\wpr) - F_j^* \right]\geq \frac{k}{\sqrt{n}}
\]
for some absolute constant $k > 0$. If instead $\min\left\{\sqrt{\frac{3n}{2}}, 6 \npr \eps + \sqrt{\frac{3\npu}{2}}\right\} = 6 \npr \eps + \sqrt{\frac{3\npu}{2}}$, then we choose $\gamma = \frac{2}{3}\left(6 \npr \eps + \sqrt{\frac{3 \npu}{2}} \right)^{-1}$. This choice implies \[
\max_{j \in \{1,2\}} \expec\left[F_j(\wpr) - F_j^* \right] \geq k' \min\left(\frac{1}{\npr \eps}, \frac{1}{\sqrt{\npu}}\right)
\]
for some absolute constant $k' > 0$. Combining the pieces above with the non-private SCO lower bound~\cite{ny} yields  \[
\sup_{P} \expec\left[F(\wpr) - F^* \right] \geq c \min\left(\frac{1}{\sqrt{\npu}}, \frac{1}{n \eps} + \frac{1}{\sqrt{n}}\right),
\]
where $F(w) := \expec_{x \sim P} [f(w,x)]$.

A standard scaling argument completes the lower bound proofs (see e.g. the proof of~\cref{thm: approx DP SCO app} for details).

\paragraph{Upper bounds:}

\ul{Convex upper bounds}: Consider the $0$-semi-DP throw-away algorithm that discards $\xpr$ and runs $\npr$ steps of one-pass SGD (stochastic approximation) using $\xpu$. This algorithm has excess risk $O\left(LD/\sqrt{\npu}\right)$~\cite{ny}. To obtain the second term in the convex $\eps$-semi-DP upper bound, use the $\eps$-DP (hence semi-DP) regularized exponential mechanism of~\citet{gtu22}.

\noindent\ul{Strongly convex upper bounds:} Consider the $0$-semi-DP throw-away algorithm that discards $\xpr$ and runs $\npr$ steps of one-pass SGD (stochastic approximation) using $\xpu$. This algorithm has excess risk $O\left(\frac{L^2}{\mu \npu}\right)$~\cite{ny}. To obtain the second term in the strongly convex $\eps$-semi-DP upper bound, one can use, e.g. the $\eps$-DP (hence semi-DP) iterated exponential mechanism of~\citet{gtu22}. 
\end{proof}

\subsubsection{Semi-DP SCO with an ``Even More Optimal'' Gradient Estimator}
\label{app: even better DP SCO}

\begin{proposition}
\label{thm: weighted DP-SGD is DP}
We provide privacy guarantees for~\cref{alg: weighted dp-sgd}: 
\begin{enumerate}
    \item Suppose we sample \textit{with replacement} in line 4 of~\cref{alg: weighted dp-sgd}. Then, there exist constants $c_1, c_2$ such that for any $\eps < c_1 \left(\frac{\kpr}{\npr}\right)^2 T$, \cref{alg: weighted dp-sgd} is $(\eps, \delta)$-semi-DP for any $\delta > 0$ if we choose $\sigma^2 \geq c_2 \frac{C^2 \ln(1/\delta) T}{\eps^2 \npr^2}$.
    \item Suppose we sample \textit{without replacement} in line 4 and choose $T \leq \frac{n}{\kpr}$. Then~\cref{alg: weighted dp-sgd} is $\rho$-semi-zCDP if $\sigma^2 \geq \frac{2 C^2}{\rho \kpr^2}$. 
\end{enumerate}
\end{proposition}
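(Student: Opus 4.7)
The plan is to reduce both claims to standard analyses of the (subsampled) Gaussian mechanism by exploiting the fact that the public contribution to $\widetilde{g}_t$ is independent of $\xpr$ and can therefore be treated as a deterministic post-processing shift when we condition on the adversary's view. Concretely, fix any two $x_i$-adjacent datasets $X, X^{(i)}$ with $x_i \in \xpr$, and consider the sequence of releases $(\widetilde{g}_0, \widetilde{g}_1, \dots, \widetilde{g}_{T-1})$; since $w_{t+1}$ is a deterministic function of $w_t$ and $\widetilde{g}_t$, post-processing lets us restrict attention to the joint distribution of the noisy gradients. Write
\[
\widetilde{g}_t \;=\; \underbrace{\tfrac{\alpha}{\kpr}\sum_{x \in B_t^{priv}} \text{clip}_C(\nabla f(w_t, x)) + \alpha v_t}_{=:\, M_t(\xpr)} \;+\; \underbrace{\tfrac{1-\alpha}{\kpu}\sum_{x \in B_t^{pub}} \nabla f(w_t,x)}_{\text{indep.\ of } \xpr}.
\]
Since the public term only depends on $\xpu$, $w_t$, and the public batch randomness, it is a constant shift from the standpoint of an $x_i$-privacy analysis, and we only need to bound the privacy loss of the sequence $(M_t)_{t < T}$.

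Next I compute the per-step sensitivity. When one private sample is swapped, either $B_t^{priv}$ is unchanged, in which case $M_t$ is identically distributed, or exactly one clipped gradient in the sum is replaced, in which case the deterministic part of $M_t$ shifts by at most $\tfrac{\alpha}{\kpr}\bigl\|\text{clip}_C(\nabla f(w_t,x_i)) - \text{clip}_C(\nabla f(w_t,x_i'))\bigr\| \le \tfrac{2\alpha C}{\kpr}$ in $\ell_2$. The additive noise is $\alpha v_t \sim \mathcal{N}(0,\alpha^2 \sigma^2 \mathbf{I}_d)$, so conditional on $x_i \in B_t^{priv}$, $M_t$ is the Gaussian mechanism with $\ell_2$-sensitivity $\tfrac{2\alpha C}{\kpr}$ and noise scale $\alpha\sigma$; the factor of $\alpha$ cancels, giving effective noise multiplier $\sigma\kpr/(2C)$.

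For part (2), without-replacement sampling with $T \le \npr/\kpr$ ensures each $x_i \in \xpr$ appears in $B_t^{priv}$ for at most one $t$, so only a single release $M_t$ is sensitive to $x_i$. A single Gaussian mechanism with sensitivity $\tfrac{2\alpha C}{\kpr}$ and noise variance $\alpha^2 \sigma^2$ is $\rho$-zCDP iff $\alpha^2 \sigma^2 \ge \tfrac{(2\alpha C/\kpr)^2}{2\rho}$, i.e. $\sigma^2 \ge \tfrac{2 C^2}{\rho \kpr^2}$, yielding the stated bound. Adaptive composition across $t$ is vacuous here because each $x_i$ influences at most one step.

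For part (1), with-replacement sampling makes $M_t$ a Poisson-like (or uniform-batch) subsampled Gaussian mechanism with subsampling rate $q = \kpr/\npr$ and noise multiplier $\sigma\kpr/(2C)$. The plan is to invoke the moments accountant / Rényi-DP analysis of the subsampled Gaussian mechanism \cite{abadi16} to bound the $T$-fold adaptive composition of $(M_0,\dots,M_{T-1})$: in the regime $\eps < c_1 q^2 T$, this gives $(\eps,\delta)$-DP provided the noise multiplier exceeds $c'\,q\sqrt{T\log(1/\delta)}/\eps$, i.e. $\sigma \kpr/(2C) \gtrsim (\kpr/\npr)\sqrt{T\log(1/\delta)}/\eps$, which rearranges to $\sigma^2 \ge c_2\, C^2 T \log(1/\delta)/(\eps^2 \npr^2)$. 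The hardest bookkeeping step is making sure the adaptive composition correctly uses the $x_i$-adjacent definition (rather than the standard DP one) so that the public term genuinely drops out at every level of the accountant; this follows because the public randomness and public data can be absorbed into the ``auxiliary randomness'' of the adaptive composition theorem. Finally, applying the privacy guarantee of the sequence $(M_t)$ and post-processing yields $(\eps,\delta)$-semi-DP of the full transcript, completing both parts.
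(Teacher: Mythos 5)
Your proof is correct and follows essentially the same plan as the paper's: bound the $\ell_2$-sensitivity of the noisy private gradient query by $2\alpha C/\kpr$, invoke the moments accountant of Abadi et al.\ for the with-replacement (subsampled) case, and use the Gaussian mechanism's $\rho$-zCDP guarantee together with parallel composition for the without-replacement case. You spell out two points the paper leaves implicit — that the public term is an $\xpr$-independent shift which can be absorbed by post-processing, and that the $\alpha$ factor cancels in the noise-multiplier computation — but the underlying argument and citations are identical.
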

\begin{proof}
Note that the $\ell_2$-sensitivity of the private stochastic gradient query is \[
\Delta = \sup_{\xpr \sim \xpr'} \left\|\frac{\alpha}{\kpr} \sum_{x \in B_t^{priv}} \text{clip}_C(\nabla f(w_t, x)) - \frac{\alpha}{\kpr}\sum_{x' \in B_t^{priv}} \text{clip}_C(\nabla f(w_t, x'))\right\|_2 \leq \frac{2 \alpha C}{\kpr}.\]

1. Consider sampling with replacement. Then we are randomly subsampling from the private data uniformly with sampling ratio $\kpr/a$. Thus, the theorem follows from \citep[Theorem 1]{abadi16}. 

2. Consider sampling without replacement. Then by the $\rho$-zCDP guarantee of the Gaussian mechanism~\citep[Proposition 1.6]{bun16} and the sensitivity bound above, $\widetilde{g}_t$ is $\rho$-semi-zCDP for every $t$. Moreover, since we are sampling without replacement, the privacy of every $x \in \xpr$ is only affected by $\widetilde{g}_t$ for a single $t \in [T]$. Thus, semi-zCDP of \cref{alg: weighted dp-sgd} follows by parallel composition~\cite{mcsherry2009privacy}. 
\end{proof}

Excess risk of 
By~Proposition~\ref{prop: even more optimal ME}, there exists a choice of $\alpha$ such that the variance of our unbiased estimator in line 7 is always less than the variance of both the throw-away gradient estimator $\frac{1}{\kpu} \sum_{x} \nabla f(w_t, x)$ and the DP-SGD estimator $\frac{1}{\kpr + \kpu} \sum_{x \in B_t} \nabla f(w_t, x) + u_t$, where $u_t$ is appropriately scaled (to ensure DP) Gaussian noise. Consequently, if we choose $T$ and $K = \kpr + \kpu$ such that $n = TK$ and sample without replacement (i.e. one pass), then
\textit{~\cref{alg: weighted dp-sgd} always has smaller excess risk than both the throw-away SCO algorithm and one-pass DP-SGD}. 
Moreover, if the loss function has Lipschitz continuous gradient, then one can combine the stochastic gradient estimator of line 7 with \textit{acceleration}~\cite{ghadimilan1} to obtain a linear-time semi-DP algorithm that always outperforms the accelerated DP algorithm of~\citet{lr21fl}. This is because the variance of our gradient estimator (hence our excess risk) is strictly smaller than that of~\citet{lr21fl}, by~\cref{prop: even more optimal ME}. For example, for $\beta$-smooth, $\mu$-strongly convex loss functions, one-pass Algorithm 1 achieves excess risk that is optimal up to a factor of $O(\sqrt{\beta/\mu})$ and improves over ~\cite{lr21fl}. Moreover, \cite{lr21fl} has the smallest excess risk among \textit{linear-time} (one-pass) DP algorithms whose \textit{privacy analysis does not require convexity}. Thus, our algorithm can be used for deep learning. In our numerical experiments, we implement the with-replacement sampling version of~\cref{alg: weighted dp-sgd}.  

We also note that near-optimal excess risk bounds for \textit{non-convex} loss functions that satisfy the \textit{(Proximal) PL inequality}~\cite{polyak, karimi2016linear} can be derived by combining a proximal variation of~\cref{alg: weighted dp-sgd} with the techniques of~\citet{lgr23}. Further, if $f(\cdot, x)$ is not uniformly Lipschitz, but has stochastic gradients with bounded $k$-th order moment for some $k \geq 2$, then excess risk bounds can still be derived for~\cref{alg: weighted dp-sgd} via techniques in~\cite{lowy2022outliers}. Our algorithm can also be extended to a variation of noisy stochastic gradient descent ascent, which could be used, e.g. for \textit{fair} semi-DP model training~\cite{lowy2022DPfair}. We leave it as future work to explore these and other potential applications of our gradient estimator in efficiently training private ML models with public data.

\section{Optimal Locally Private Model Training with Public Data}
\label{app: LDP}
\noindent \textbf{Notation and Setup:} Following~\citet{duchi2019lower}, we permit algorithms to be \textit{fully interactive}. That is, algorithms may adaptively query the same individual $i$ multiple times over the course of $T$ ``communication rounds.'' We denote $i$'s message in round $t$ by $Z_{i,t} \in \ZZ$. Person $i$'s message $Z_{i, t} \in \ZZ$ in round $t$ may depend on all previous communications $B^{(t)} := (Z_{\leq n, t}, B^{(t-1)})$ and on $i$'s own data: $Z_{i,t} \sim Q_{i,t}(\cdot | x_i, Z_{<i, t}, B^{(t-1)})$. If $i$'s data is private, then $Z_{i,t}$ is a randomized view of $x_i$ distributed (conditionally) according to $Q_{i,t}$. If $i$'s data is public, then $Z_{i,t}$
may be deterministic. 
Full interactivity is the most general notion of interactivity. If $T = 1$, then we say the algorithm is \textit{sequentially interactive}. If, in addition, each person's message $Z_{i,1}$ depends only on $x_i$ and not on $x_{j \neq i}$, then we say the algorithm is \textit{non-interactive}. Semi-LDP (Definition~\ref{def: semiLDP}) essentially requires that the messages $\{Z_{i,t}\}_{t\in [T]}$ be DP for all private $x_i \in \xpr$.

\subsection{Optimal Semi-LDP Mean Estimation}
\label{app: LDP ME}
\begin{theorem}[Re-statement of~\cref{thm: LDP mean estimation}]
Let $\eps \in (0, 1]$. There are absolute constants $0 < c \leq C$ s.t.
\[
c\min\left\{\frac{1}{\npu}, \frac{d}{n \eps^2}\right\}\leq \mathcal{M}^{\text{loc}}_{\text{pop}}(\varepsilon, \npr, n, d) \leq C \min\left\{\frac{1}{\npu}, \frac{d}{n \eps^2}\right\}.
\]
\end{theorem}

\begin{proof}
\textbf{Lower bound:} We will actually prove a more general lower bound than the one in~\cref{thm: LDP mean estimation}; namely, we will show a lower bound on the minimax $\ell_1$-error for estimation of distributions on $\XX_r = \{\pm r\}^d$ for $r > 0$.
To that end, let 
$\gamma \in (0, 1)$ and 
\[
P_1:= \begin{cases}
r &\mbox{with probability} \frac{1+\gamma}{2} \\
-r &\mbox{with probability} \frac{1-\gamma}{2}
\end{cases}\]
and \[
P_{-1}:= \begin{cases}
r &\mbox{with probability} \frac{1-\gamma}{2} \\
-r &\mbox{with probability} \frac{1+\gamma}{2}
\end{cases}.\]
 We define our hard distribution on $\XX_r$ by first drawing $V \sim \text{Unif}(\{\pm 1\}^d)$ and then---conditional on $V = v$---drawing $X_{i,j} \sim P_v = \Pi_{j=1}^d P_{v_j}$ for $i \in [n], j \in [d]$, where $P_v$ denotes the product distribution. We have Markov chains $V_j \to X_{i,j} \to Z$ for all $j \in [d], i \in [n]$, where $Z$ is the semi-LDP transcript. Note that $\left| \ln\left(\frac{dP_1}{dP_{-1}}\right) \right| \leq \ln\left( \frac{1+\gamma}{1 - \gamma} \right) \triangleq b$ and $e^b \leq 3$ for any $\gamma \in (0, 1/2]$. Now we will use the following lemma from Duchi \& Rogers (2019): 
\begin{lemma}\citep[Lemma 24]{duchi2019lower}
\label{lem: DR19 lem24}
    Let $V \to X \to Z$ be a Markov chain, where $X \sim P_v$ conditional on $V = v$. If $\left| \ln \frac{dP_v}{dP_{v'}}\right| \leq \alpha$ for all $v, v'$, then \[
    I(V; Z) \leq 2(e^{\alpha} - 1)^2 I(X; Z). 
    \]
\end{lemma}
Thus, for $V_j \sim \text{Unif}(\{\pm 1\})$, Lemma~\ref{lem: DR19 lem24} implies $I(V_j; Z) \leq \frac{8 \gamma^2}{(1 - \gamma)^2} I(X_{i,j}; Z)$. Hence the strong data processing constant~\citep[Definition 9]{duchi2019lower} is $\beta := \beta(P_1, P_{-1}) \leq \frac{8 \gamma^2}{(1 - \gamma)^2}$.

Now, $\theta_{v_j} := \expec_{x \sim P_{v_j}}[x] = \gamma r v_j$ for any $v_j \in \{\pm 1\}$. Moreover, letting $\theta_v = (\theta_{v_1}, \cdots, \theta_{v_d})$ for $v \in \{\pm 1\}^d$ and $\theta \in \mathbb{R}^d,$ we have \[
\|\theta - \theta_v\|_1 = \sum_{j=1}^d |\theta_j - r \gamma v_j| = r\gamma \sum_{j=1}^d \left| \frac{\theta_j}{r \gamma} - v_j \right| \geq r \gamma \sum_{j=1}^d \mathbbm{1}_{\{\text{sign}(\theta_j) \neq v_j\}}.
\]
Thus, $\{\pm 1\}^d$ induces an $r\gamma$-Hamming separation, so Assouad's lemma~\citep[Lemma 1]{duchi2018minimax} yields \[
\inf_{\alg \in \mathbb{A}_\eps^{\text{loc}}} \sup_{P \in \PP_r} \expec\|\alg(X) - \theta(P)\|_1 \geq r\gamma \sum_{j=1}^d \inf_{\hat{V}} \pr(\hat{V}_j(Z) \neq V_j),\]
where $Z$ is the communication transcript of $\alg$, the infimum on the RHS is over all 
estimators of $V$, $\theta(P) = \expec_{x \sim P}[x]$, and $\PP_r$ is the set of distributions on $\XX_r$

Assume WLOG that the private samples are the first $\npr$ samples of $X$: $\xpr = (x_1, \cdots, x_{\npr})$.
To lower bound $\sum_{j=1}^d \inf_{\hat{V}} \pr(\hat{V}_j(Z) \neq V_j)$, we use a slight extension of~\citet[Theorem 10]{duchi2019lower}: \[
\sum_{j=1}^d \inf_{\hat{V}} \pr(\hat{V}_j(Z) \neq V_j) \geq \frac{d}{2}\left[ 1 - \sqrt{\frac{7 (e^b + 1)}{d} \beta \left( I(\xpr; Z | V) + I(\xpu; Z | V) \right)}\right]. 
\]
This follows since $V \to \xpr \to Z$ and $V \to \xpu \to Z$ are both Markov chains and the other assumptions in \citep[Theorem 10]{duchi2019lower} all hold. Combining this bound with Assouad's lemma~\citep[Lemma 1]{duchi2018minimax} and substituting the definitions of $b$ and $\beta$ given above
gives us \[
\inf_{\alg \in \mathbb{A}_\eps^{\text{loc}}} \sup_{P \in \PP_r} \expec\|\alg(X) - \theta(P)\|_1 \geq \frac{r \gamma d}{2} \left[ 1 - \sqrt{\frac{896}{d} \gamma^2 \left( I(\xpr; Z | V) + I(\xpu; Z | V) \right)}\right]
\] for any $\gamma \in (0, 1/2]$. It remains to upper bound the conditional mutual information $I(\xpr; Z | V)$ and $I(\xpu; Z | V)$. 

Now for any $\eps$-semi-LDP algorithm with communication transcript $Z$, we have $I(\xpr; Z | V) \leq \npr \min(\eps, 4 \eps^2)$, by an easy extension of~\citet[Lemma 12]{duchi2019lower}.
Also, $I(\xpu; Z | V) \leq H(\xpu | V) \leq \log(|\XX^{\npu}|) = d \npu$, where $H(\cdot | \cdot)$ denotes conditional entropy. Thus, 
\[
\inf_{\alg \in \mathbb{A}_\eps^{\text{loc}}} \sup_{P \in \PP_r} \expec\|\alg(X) - \theta(P)\|_1 \geq \frac{r \gamma d}{2} \left[ 1 - \sqrt{\frac{4000}{d} \gamma^2 \left(  \npr \min(\eps, \eps^2) + d\npu \right)}\right].
\]
Choosing $\gamma^2 = c \min\left(\frac{1}{\npu}, \frac{d}{\npr \min(\eps, \eps^2)}\right)$ for some small constant $c > 0$ yields \[
\inf_{\alg \in \mathbb{A}_\eps^{\text{loc}}} \sup_{P \in \PP_r} \expec\|\alg(X) - \theta(P)\|_1 \gtrsim r d \min\left(\frac{1}{\sqrt{\npu}}, \sqrt{\frac{d}{\npr \min(\eps, \eps^2)}} \right),
\]
whence 
\[
\inf_{\alg \in \mathbb{A}_\eps^{\text{loc}}} \sup_{P \in \PP_r} \expec\|\alg(X) - \theta(P)\|_2 \gtrsim r \sqrt{d} \min\left(\frac{1}{\sqrt{\npu}}, \sqrt{\frac{d}{\npr \min(\eps, \eps^2)}} \right).
\]
By applying the non-private mean estimation lower bound, we get \[
\inf_{\alg \in \mathbb{A}_\eps^{\text{loc}}} \sup_{P \in \PP_r} \expec\|\alg(X) - \theta(P)\|_2 \gtrsim r \sqrt{d} \min\left(\frac{1}{\sqrt{\npu}}, \sqrt{\frac{d}{\npr \min(\eps, \eps^2)}} + \frac{1}{\sqrt{n}}\right).
\]
Choosing $r = 1/\sqrt{d}$ ensures that $\PP_r \subset \PP(\mathbb{B})$ and yields \[
\inf_{\alg \in \mathbb{A}_\eps^{\text{loc}}} \sup_{P \in \PP(\mathbb{B})} \expec\|\alg(X) - \theta(P)\|_2 \gtrsim \min\left(\frac{1}{\sqrt{\npu}}, \sqrt{\frac{d}{n \min(\eps, \eps^2)}} + \frac{1}{\sqrt{n}}\right).
\]
Applying Jensen's inequality completes the proof of the lower bound in~\cref{thm: LDP mean estimation}. Since we assumed $\eps \leq 1 \leq d$, the minimum in the denominator simplifies to $\eps^2$ and the $1/\sqrt{n}$ term is non-dominant. 

\noindent \textbf{Upper bound:} The first term in the minimum can be realized by the algorithm that throws away the private data and returns $\alg(X) = \frac{1}{\npu} \sum_{x \in X_{\text{pub}}} x$, which is $0$-semi-LDP. Also, \[
\expec \|\alg(X) - \expec x\|^2 = \frac{1}{\npu^2}\sum_{x \in X_{\text{pub}}} \expec \|x - \expec x\|^2 \leq \frac{1}{\npu}.
\]
The second term in the upper bound can be realized by the $\eps$-LDP (hence $\eps$-semi-LDP) estimator of~\citet{duchi13}, which has worst-case MSE upper bounded by $O\left(\frac{d}{n \eps^2}\right)$. 
\end{proof}

\begin{algorithm}
	\caption{$\pu(p,\gamma)$~\cite{bhowmick2018protection}}
	\label{alg:pu}
	\begin{algorithmic}[1]
	\STATE {\bfseries Input:} $v \in \mathbb{S}^{d-1}$, $\gamma \in [0,1]$, $p \in [0,1]$. $B(\cdot; \cdot, \cdot)$ below is the incomplete Beta function $B(x;a,b) = \int_0^x t^{a-1}(1-t)^{b-1} \textrm{d}t$ and $B(a,b) = B(1; a, b)$.
		\STATE Draw $z \sim 
  \text{Bernoulli}
  (p)$
		\IF{$z=1$}
		    \STATE Draw $V \sim \uniform \{u \in \sphere^{d-1}: \langle u,v \rangle \geq \gamma \}$
		\ELSE
		     \STATE Draw $V \sim \uniform \{u \in \sphere^{d-1}: \langle u,v\rangle < \gamma \}$
		\ENDIF
		\STATE Set $\alpha = \frac{d-1}{2}$ and $\tau = \frac{1 + \gamma}{2}$
		\STATE Calculate normalization constant
		\begin{equation*}
		    m = \frac{(1-\gamma^2)^\alpha}{2^{d-2} (d-1)} \left( \frac{p}{B(\alpha,\alpha) - B(\tau; \alpha,\alpha)} + \frac{1-p}{B(\tau; \alpha,\alpha)}   \right)
		\end{equation*}
        \STATE Return $\frac{1}{m} \cdot V$
	\end{algorithmic}
\end{algorithm}

\subsubsection{An ``Even More Optimal'' Semi-LDP Estimator}
\begin{lemma}[Re-statement of Lemma~\ref{lem: privunit error}]
Let $P$ be a distribution on $\mathbb{B}$ with $V^2 = \expec\|x - \expec_{x \sim P}[x]\|.$
Let $c > 0$ such that $\expec_{x \sim P}\| \duchisample(x) - \expec_{x \sim P}[x]\|^2 = \frac{c d}{n \eps^2}$, so that $\expec_{X \sim P^n}\| \duchiset(X) - \expec_{x \sim P}[x]\|^2 = \frac{c d}{n \eps^2} + \frac{V^2}{n}$. Then,
\[
\expec_{X \sim P^n}\left[\left\|\alg_{\text{Semi-Duchi}}(X) - \expec_{x \sim P}[x] \right\|^2\right]  =  \frac{\npr}{n} \cdot \frac{c d}{n \eps^2} + \frac{\npu}{n} \cdot \frac{V^2}{n}.
\]
\end{lemma}
\begin{proof}
We have
\begin{align*}
\expec\|\widetilde{\alg}_{\text{semi-Duchi}}(X) - \expec_{x \sim P}[x]\|^2 &= \frac{1}{n^2}\left[\sum_{x \in \xpr}\expec\|\duchisample(x) - \expec_{x \sim P}[x]\|^2 + \sum_{x \in \xpu}\expec\|x - \expec_{x \sim P}[x]\|^2 \right] \\
&= \frac{n c d}{\eps^2 n^2} + \frac{\npu V^2}{n^2},
\end{align*}
by independence of the data and the assumptions in the statement of the lemma. 
\end{proof}

\subsubsection{A Semi-LDP Estimator with Optimal Constants}
\begin{proposition}[Re-statement of~Proposition~\ref{prop: semi-LDP PrivUnit is truly optimal}]
Let $\alg(X) = \frac{1}{n} \left[\MMpr(\RR(x_1), \cdots, \RR(x_{\npr})) + \MMpu(\xpu) \right]$ be a $\eps$-semi-LDP algorithm, where $\RR: \mathbb{S}^{d-1} \to \ZZ$ is an $\eps$-LDP randomizer and 
$\MMpr: \ZZ^{\npr} \to \mathbb{R}^d$ and $\MMpu: \ZZ^{\npu} \to \mathbb{R}^d$ are aggregation protocols such that $\expec_{\MMpr, \RR}\left[\MMpr(\RR(x_1), \cdots, \RR(x_{\npr}))\right] = \sum_{x \in \xpr} x$ and $\expec_{\MMpu}\left[\MMpu(\xpu)\right] = \sum_{x \in \xpu} x$ for all $X = (\xpr, \xpu) \in \left(\mathbb{S}^{d-1}\right)^n$. Then, 
\[
\sup_{X \in \left(\mathbb{S}^{d-1}\right)^n} \expec_{\alg_{\text{semi-PrivU}}}\|\alg_{\text{semi-PrivU}}(X) - \Bar{X}\|^2 \leq \sup_{X \in \left(\mathbb{S}^{d-1}\right)^n} \expec_{\alg}\|\alg(X) - \Bar{X}\|^2. 
\]
\end{proposition}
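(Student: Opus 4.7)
The plan is to decompose the worst-case MSE of any qualifying estimator $\alg$ into a private contribution and a public contribution, show that the public contribution is minimized (at zero) by the choice $\MMpu(\xpu)=\sum_{x\in\xpu}x$, and then invoke the worst-case optimality of PrivUnit from Asi et al.~to handle the private contribution.

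First, since $\MMpr$ (with its internal randomness and the randomness of the local randomizers $\RR$) uses randomness that is independent of that used by $\MMpu$, and since both protocols are unbiased estimators of $\sum_{x\in\xpr}x$ and $\sum_{x\in\xpu}x$ respectively for every $X\in(\mathbb{S}^{d-1})^n$, the cross term in the expansion of $\|\alg(X)-\Bar{X}\|^2$ vanishes. I would write out the straightforward computation
\[
\expec_\alg\|\alg(X)-\Bar{X}\|^2 \;=\; \frac{1}{n^2}\,\expec_{\MMpr,\RR}\Bigl\|\MMpr(\RR(x_1),\ldots,\RR(x_{\npr}))-\sum_{x\in\xpr}x\Bigr\|^2 \;+\; \frac{1}{n^2}\,\expec_{\MMpu}\Bigl\|\MMpu(\xpu)-\sum_{x\in\xpu}x\Bigr\|^2,
\]
and take the supremum of each term separately over $X\in(\mathbb{S}^{d-1})^n$ (since the two suprema are over disjoint blocks $\xpr,\xpu$, this does not lose anything).

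Next, the public term is a nonnegative quantity, minimized exactly at $0$ by the choice $\MMpu(\xpu)=\sum_{x\in\xpu}x$ used by $\alg_{\text{semi-PrivU}}$. So it suffices to show that the private term for $\alg_{\text{semi-PrivU}}$ (namely $\tfrac{1}{n^2}\sum_{x\in\xpr}\expec\|\pu(x)-x\|^2$, using independence of the PrivUnit calls) is no larger in the worst case than the private term for any other $\eps$-LDP unbiased protocol $(\RR,\MMpr)$ operating on $\xpr$. This is precisely the content of the main optimality result of Asi et al.\ for mean estimation on the sphere under $\eps$-LDP: among all pairs of an $\eps$-LDP randomizer $\RR$ and an aggregation protocol $\MM$ satisfying $\expec[\MM(\RR(x_1),\ldots,\RR(x_m))]=\sum_i x_i$ for every $x_{1:m}\in(\mathbb{S}^{d-1})^m$, the worst-case squared error $\sup_{x_{1:m}}\expec\|\MM(\RR(x_1),\ldots,\RR(x_m))-\sum_i x_i\|^2$ is minimized (up to and including constants) by taking $\MM$ to be the sum of per-sample PrivUnit outputs. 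Applied with $m=\npr$, this gives the desired bound on the private term.

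Combining the two bounds, taking the supremum over $X$, and dividing by $n^2$ yields the claimed inequality. The main conceptual obstacle is the first step: one has to be careful that the independence between $\MMpr$ and $\MMpu$ is indeed guaranteed by Definition~\ref{def: semiLDP} (the public-side messages may depend on prior communications in the interactive setting), and that the unbiasedness assumption kills the cross term for every fixed $X$, not just in expectation over some distribution $P$. Once the decomposition is rigorously justified, the rest reduces to a direct citation of the Asi et al.\ optimality theorem applied to the private block.
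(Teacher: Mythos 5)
Your proposal captures the correct high-level idea and is a valid route, but it differs from the paper's execution in a way worth flagging. The paper does \emph{not} split the supremum directly; instead it lower bounds the sup by an expectation over the uniform distribution $P$ on $\mathbb{S}^{d-1}$, decomposes there (where the cross term vanishes because $\xpr$ and $\xpu$ are i.i.d. and $\MMpu$ is unbiased), applies Asi et al.'s Lemma A.1 to reduce the private expectation to per-sample randomizers $\hat{\RR}_x$, and then uses a rotation-conjugation trick ($\RR'_x(v) := U^\top \hat{\RR}_x(Uv)$ for a uniform rotation $U$) plus an $\argmin$ over $x$ to convert that average-case statement back into a worst-case statement for a single randomizer $\RR'$. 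Only then do they invoke Asi et al.'s Proposition 3.4 (per-randomizer worst-case optimality of PrivUnit). Your decomposition $\sup_{(\xpr,\xpu)}[E_{\text{priv}}(\xpr)+E_{\text{pub}}(\xpu)]=\sup_{\xpr}E_{\text{priv}}+\sup_{\xpu}E_{\text{pub}}$ is clean and correct (it depends only on the cross term vanishing for every fixed $X$, which holds given independent internal randomness of $\MMpr$ and $\MMpu$, the same implicit assumption the paper makes and which the $\expec_{\MMpr,\RR}$ vs.\ $\expec_{\MMpu}$ notation in the hypothesis supports), and it avoids the uniform-distribution detour entirely.

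The one potential gap is your citation for the private block. You treat ``the main optimality result of Asi et al.'' as already saying: among all unbiased $(\RR,\MM)$ pairs, the worst-case (over fixed $x_{1:m}\in(\mathbb{S}^{d-1})^m$) MSE of $\MM(\RR(x_1),\ldots,\RR(x_m))$ is minimized by summing per-sample PrivUnit outputs. What the paper actually uses from Asi et al.\ is (a) a per-randomizer worst-case optimality statement (Proposition 3.4), and (b) an average-case (uniform-$P$) aggregation-to-canonical reduction (Lemma A.1, in the spirit of Proposition 3.3). The paper does the non-trivial work of converting (b) to a worst-case statement via the rotation and $\argmin$ arguments. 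If Asi et al.\ do state the combined worst-case aggregation-protocol theorem you cite, your proof is shorter and strictly cleaner; if they only state the average-case aggregation reduction, you still owe the rotation/argmin step, and at that point your argument reduces to the paper's (minus the unnecessary intermediate $\expec_{X\sim P^n}$ step, which your direct split renders unnecessary). Either way your strategy is sound; make the Asi et al.\ citation precise before treating it as complete.
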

\begin{proof}
First, Asi et al.~\citep[Proposition 3.4]{asilocal} showed that PrivUnit (with a proper choice of $(p, \gamma)$) has the smallest worst-case variance among all unbiased $\eps$-LDP randomizers: \begin{equation}
\label{eq: Privunit is optimal}
    \sup_{x \in \mathbb{S}^{d-1}} \expec\|\RR(x) - x\|^2 \geq \sup_{x \in \mathbb{S}^{d-1}} \expec\|\text{PrivUnit}(x) - x\|^2
    \end{equation}
    for all $\eps$-LDP randomizers $\RR$ such that $\expec[\RR(x)] = x$ for all $x \in \mathbb{S}^{d-1}$. 

Now, let $\RR$ be a $\eps$-LDP randomizer and $\MMpr$ and $\MMpu$ be aggregation protocols such that the assumptions in~Proposition~\ref{prop: semi-LDP PrivUnit is truly optimal} are satisfied. We claim that there exists an unbiased $\eps$-LDP randomizer $\RR': \mathbb{S}^{d-1} \to \ZZ$ such that \begin{equation}
\label{eq: canonical protocols are optimal}
\sup_{X \in \left(\mathbb{S}^{d-1}\right)^n} \expec_{\alg}\left\|n \alg(X) - \sum_{x \in X} x\right\|^2 \geq \sup_{X \in \left(\mathbb{S}^{d-1}\right)^n} \expec_{\RR'}\left\|\sum_{x \in \xpr} \RR'(x) 
- \sum_{x \in \xpr} x \right\|^2.
\end{equation}
To prove~\cref{eq: canonical protocols are optimal}, we follow the idea in the proof of~\citet[Proposition 3.3]{asilocal}. Let $P$ denote the uniform distribution on $\mathbb{S}^{d-1}$. We have 
\begin{align*}
&\sup_{X \in \left(\mathbb{S}^{d-1}\right)^n} \expec_{\alg}\left\|n \alg(X) - \sum_{x \in X} x\right\|^2 \geq \expec_{X \sim P^n, \alg}\left\|n \alg(X) - \sum_{x \in X} x\right\|^2 \\
\geq& \expec\left\| \MMpr(\RR(x_1), \cdots, \RR(x_{\npr})) - \sum_{x \in \xpr} x \right\|^2 + \expec\left\| \MMpu(\xpu) - \sum_{x \in \xpu} x \right\|^2,
\end{align*}
since the cross-term (inner product) vanishes by independence of $\xpu$ and $\xpr$, and unbiasedness of $\MMpu$. Now, \citep[Lemma A.1]{asilocal} shows that there exist $\eps$-LDP randomizers $\{\hat{\RR}_x\}_{x \in \xpr}$ such that $\expec[\hat{\RR}_x(v)] = v$ for all $v \in \mathbb{S}^{d-1}$ and 
\[
\expec_{\xpr \sim P^{\npr}} \left\| \MMpr(\RR(x_1), \cdots, \RR(x_{\npr})) - \sum_{x \in \xpr} x \right\|^2 \geq \sum_{x \in \xpr} \expec_{v \sim P}\|\hat{\RR}_x(v) - v\|^2.
\]
Hence \[
\sup_{X \in \left(\mathbb{S}^{d-1}\right)^n} \expec_{\alg}\left\|n \alg(X) - \sum_{x \in X} x\right\|^2 \geq \sum_{x \in \xpr} \expec_{v \sim P}\|\hat{\RR}_x(v) - v\|^2. 
\]
Define $\RR'_x(v) := U^T \hat{\RR}_x(U v)$ for $v \in \mathbb{S}^{d-1}$, where $U$ is a uniformly random rotation matrix such that $U^T U = \mathbf{I}_d$. Note that $\RR'_x$ is an $\eps$-LDP randomizer such that $\expec[\RR'(v)] = v$ for all $v \in \mathbb{S}^{d-1}, x \in \xpr$. Moreover,  for any fixed $v \in \mathbb{S}^{d-1}, x \in \xpr$, we have 
\begin{align*}
\expec\|\RR_x'(v) - v\|^2 &= \expec_U \|\hat{\RR}_x(Uv) - Uv\|^2 \\
&= \expec_{v' \sim P} \|\hat{\RR}_x(v') - v'\|^2 \\
\end{align*}
Let $x^* := \argmin_{x \in \xpr} \sup_{v\in \mathbb{S}^{d-1}} \expec\|\RR_x'(v) - v\|^2$ and $\RR'(v) := \RR_{x^*}'(v)$. 
Then putting the pieces together, we have 
\begin{align*}
\sup_{X \in \left(\mathbb{S}^{d-1}\right)^n} \expec_{\alg}\left\|n \alg(X) - \sum_{x \in X} x\right\|^2 &\geq \sum_{x \in \xpr} \sup_{v \in \mathbb{S}^{d-1}} \expec\|\RR_x'(v) - v\|^2 \\
&\geq \npr \sup_{v \in \mathbb{S}^{d-1}} \expec\|\RR'(v) - v\|^2 \\
&= \sup_{\xpr \in (\mathbb{S}^{d-1})^{\npr}} \expec_{\RR'} \left\| \sum_{x \in \xpr} \RR'(x) - \sum_{x \in \xpr} x \right\|^2,
\end{align*}
by conditional independence of $\{\RR'(x)\}_{x \in \xpr}$ given $X$. This
establishes~\cref{eq: canonical protocols are optimal}. 
Thus, 
\begin{align*}
n^2 \sup_{X \in \left(\mathbb{S}^{d-1}\right)^n} \expec_{\alg}\left\| \alg(X) - \Bar{X}\right\|^2 
&= \sup_{X \in \left(\mathbb{S}^{d-1}\right)^n} \expec_{\alg}\left\|n \alg(X) - \sum_{x \in X} x\right\|^2 \\ 
&\geq \sup_{X \in \left(\mathbb{S}^{d-1}\right)^n} \expec_{\RR'} \left\|\sum_{x \in \xpr} \RR'(x) 
- \sum_{x \in \xpr} x \right\|^2 \\
&\geq \sup_{X \in \left(\mathbb{S}^{d-1}\right)^n} \expec \left\| \sum_{x \in \xpr} \text{PrivUnit}(x) - \sum_{x \in \xpr} x \right\|^2 \\
&= n^2 \sup_{X \in \left(\mathbb{S}^{d-1}\right)^n} \expec_{\alg_{\text{semi-PrivU}}}\|\alg_{\text{semi-PrivU}}(X) - \Bar{X}\|^2,
\end{align*}
where we used~\cref{eq: Privunit is optimal} in the last inequality. 
Dividing both sides of the above inequality by $n^2$ completes the proof. 
\end{proof}

\subsection{Optimal Semi-LDP Stochastic Convex Optimization}
\label{app: LDP SCO}
If $\mu = 0$ (convex case), we denote $\mathcal{R}_{\text{SCO}}^{\text{loc}}(\varepsilon, \npr, n, d, L, D) := \mathcal{R}_{\text{SCO}}^{\text{loc}}(\varepsilon, \npr, n, d, L, D, \mu = 0)$. 
\begin{theorem}[Complete statement of~\cref{thm: LDP SCO convex}]
Let $\eps \in (0, 1]$. There exist absolute constants $c$ and $C$, with $0 < c \leq C$, such that 
\small
\[
c LD \min\left\{\frac{1}{\sqrt{\npu}}, \sqrt{\frac{d}{n \eps^2}} \right\} \leq \mathcal{R}_{\text{SCO}}^{\text{loc}}(\varepsilon, \npr, n, d, L, D) \leq C LD \min\left\{\frac{1}{\sqrt{\npu}}, \sqrt{\frac{d}{n \eps^2}} \right\},
\] 
\normalsize
and
\small
\[
c LD \min\left\{\frac{1}{\npu}, \frac{d}{n \eps^2}\right\} \leq \mathcal{R}_{\text{SCO}}^{\text{loc}}(\varepsilon, \npr, n, d, L, D, \mu) \leq C \frac{L^2}{\mu} \min\left\{\frac{1}{\npu}, \frac{d}{n \eps^2}\right\}.
\]
\normalsize
\end{theorem}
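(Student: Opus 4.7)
The plan is to prove matching upper and lower bounds separately, with both directions reducing to results already established in the paper, especially Theorem~\ref{thm: LDP mean estimation} (the semi-LDP mean estimation bound) and standard SCO/LDP-SCO machinery.

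For the \textbf{lower bounds}, I will follow the blueprint from the proof of~\cref{thm: approx DP SCO}, replacing every appeal to the central-DP mean estimation lower bound with the semi-LDP mean estimation lower bound from~\cref{thm: LDP mean estimation}. In the strongly convex case, take $f(w,x) = \frac{L}{2D}\|w-x\|^2$ on $\WW = \XX = D\mathbb{B}$: this is $L$-Lipschitz, $(L/D)$-strongly convex, and satisfies $\expec F(\wpr) - F^* = \frac{L}{2D}\expec\|\wpr - \expec_{x\sim P}[x]\|^2$, so applying~\cref{thm: LDP mean estimation} and rescaling $\mu \mapsto L/D$ (via inspection of the construction) yields the claimed $\min\{1/\npu,\,d/(n\min(\eps,\eps^2)) + 1/n\}$ factor with prefactor $LD$. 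In the convex case, take $f(w,x) = -\langle w,x\rangle$ on $\WW = \XX = \mathbb{B}$, use the product Bernoulli construction from the proof of~\cref{thm: LDP mean estimation} to build a family $\{\ptv\}_{v \in \VV}$ with $\ws_v = v$ and $\|\theta_v\| = \gamma$, then invoke $\expec F_v(w) - F_v^* \geq \tfrac12 \|\theta_v\|\,\expec\|w - \ws_v\|^2$ to pass the lower bound from estimation error to excess risk. Optimizing $\gamma$ as in~\cref{thm: LDP mean estimation} gives the $\sqrt{d/(n\min(\eps,\eps^2))}$ term; combining with the non-private SCO minimax lower bound $\Omega(LD/\sqrt{n})$ of~\cite{ny} gives the additive $1/\sqrt{n}$; taking the minimum with the throw-away obstruction $\Omega(LD/\sqrt{\npu})$ (which follows by fixing all of $\xpr$ to a constant and applying non-private SCO lower bounds to $\xpu$) completes the proof. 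Finally, rescaling $\WW \mapsto D\WW$, $\XX \mapsto L\XX$, $f \mapsto f$ shows excess risk scales by $LD$, as in the proof of~\cref{thm: approx DP SCO}.

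For the \textbf{upper bounds}, both terms in each minimum come from running one of two algorithms and returning the better output. The first term is achieved by a $0$-semi-LDP \emph{throw-away} algorithm that discards $\xpr$ and runs one-pass projected SGD on the $\npu$ public samples; standard stochastic approximation analysis~\cite{ny} yields excess population risk $O(LD/\sqrt{\npu})$ in the convex case and $O(L^2/(\mu \npu))$ in the strongly convex case. The second term is achieved by treating the public data as private and running the one-pass \emph{LDP-SGD} of~\cite{duchi13} on all $n$ samples, with the PrivUnit-style randomizer at the optimal noise scale; this algorithm is $\eps$-LDP (hence $\eps$-semi-LDP) and achieves the known optimal $\eps$-LDP SCO rates $O\bigl(LD(\sqrt{d/(n\min(\eps,\eps^2))} + 1/\sqrt{n})\bigr)$ and $O\bigl((L^2/\mu)(d/(n\min(\eps,\eps^2)) + 1/n)\bigr)$ respectively. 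Running both algorithms on disjoint halves of the data budget (or selecting the better one via a standard doubling trick, which costs only constants) gives the minimum.

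The \textbf{main obstacle} is the lower bound, specifically verifying that the semi-LDP construction in~\cref{thm: LDP mean estimation}, which was designed for mean estimation on $\XX_r = \{\pm r\}^d$, transports faithfully through the SCO-to-mean-estimation reduction. In the convex case, the distributions $\ptv$ constructed via product Bernoullis must be compatible with the linear loss $f(w,x) = -\langle w, x \rangle$ and the corresponding $\ws_v$ must be well-separated, which requires careful choice of the packing set and a sharp application of Assouad's method restricted to the private coordinates. The strongly convex case is more routine because it reduces to mean estimation essentially verbatim. A secondary subtlety is that the minimum in the convex bound is not simply the minimum of two pure terms but includes an additive $1/\sqrt{n}$ term reflecting the statistical floor; this is handled by combining the semi-LDP and non-private lower bounds (i.e.\ applying the non-private bound when the privacy term is negligible).
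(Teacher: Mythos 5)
Your proposal follows essentially the same route as the paper's proof: in the strongly convex case, reduce to semi-LDP mean estimation via $f(w,x)=\frac{L}{2D}\|w-x\|^2$ and~\cref{thm: LDP mean estimation}; in the convex case, use the linear loss with the Bernoulli-product packing from~\cref{thm: LDP mean estimation} together with the inequality $\expec F_v(w)-F_v^*\geq\tfrac12\|\theta_v\|\expec\|w-\ws_v\|^2$; and for upper bounds, take the better of one-pass SGD on $\xpu$ and one-pass LDP-SGD~\cite{duchi13} with the PrivUnit randomizer (and non-uniform iterate averaging in the strongly convex case). Two minor points are worth flagging, neither fatal. First, your phrase ``taking the minimum with the throw-away obstruction $\Omega(LD/\sqrt{\npu})$'' misdescribes how the $\min$ arises in the lower bound: combining two separate lower bounds gives the $\max$, not the $\min$. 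In the paper, the $\min$ comes from inside a single Assouad-type argument, where the packing scale $\gamma$ must simultaneously satisfy $\gamma^2\lesssim 1/\npu$ (so the entropy of the raw public block does not blow up the mutual-information budget) and $\gamma^2\lesssim d/(\npr\min(\eps,\eps^2))$ (so the strong data-processing contraction on the private messages bites); choosing $\gamma^2\asymp\min\{\cdot,\cdot\}$ yields the $\min$ form directly, and the additive $1/\sqrt{n}$ then comes from combining with the non-private floor, as you correctly note. Second, there is no need for a doubling trick or halving the data budget in the upper bound: $n,\npr,\npu,d,\eps$ are all known, so one simply compares the two closed-form error guarantees and runs whichever of throw-away or LDP-SGD is predicted to win.
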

\begin{proof}
\textbf{Lower bounds:}
Let $\alg$ be $\eps$-semi-LDP and denote $\wpr = \alg(X)$. 

\noindent \ul{Strongly convex lower bound:} We begin with the strongly convex lower bounds, which can be proved straightforwardly by reducing strongly convex SCO to mean estimation and applying~\cref{thm: LDP mean estimation}. In a bit more detail, let $f: \WW \times \XX \to \mathbb{R}$ be given by \[
f(w,x) = \frac{L}{2D}\|w - x\|^2,
\]
where $\WW = \XX = D \mathbb{B}$. 
Note that $f$ is $L$-uniformly Lipschitz and $\frac{L}{D}$-strongly convex in $w$ for all $x$. Further, $\ws := \argmin_{w \in \WW} \left\{F(w) = \expec_{x \sim P}\left[f(w,x)\right]\right\} = \expec_{x \sim P}[x]$. By a direct calculation (see e.g. \citep[Lemma 6.2]{klz21}), we have \begin{equation}
\label{eq: sc reduction to ME2}
\expec F(\wpr) - F(\ws) = \frac{L}{2D} \expec \|\wpr - \ws\|^2.
\end{equation}
We can lower bound $\expec \|\wpr - \ws\|^2 = \expec \| \alg(X) - \expec_{x \sim P}[x]\|^2$ via~\cref{thm: LDP mean estimation} (and its proof, to account for the re-scaling). Specifically, there is a distribution $P$ on $\XX$ such that 
\[
\expec_{X \sim P^n, \wpr} \|\wpr - \ws\|^2 \geq c D^2 \min\left(\frac{1}{\npu}, \frac{d}{n \eps^2} \right).
\]
Combining this with~\cref{eq: sc reduction to ME2} leads to the desired excess risk lower bound.

\noindent \ul{Convex lower bound:}
We will begin by proving the lower bounds for the case in which $L = D = 1$, and then scale our construction to get the lower bounds for arbitrary $L, D$. 

Let $\WW = \mathbb{B}$, $\XX = \left\{\pm \frac{1}{\sqrt{d}}\right\}^d$, and 
\[
f(w,x) = - \langle w, x \rangle,
\]
which is convex and $1$-uniformly-Lipschitz in $w$ on $\XX$.
For any $\eps$-semi-LDP $\alg'(X) = \wpr'$ and any $\gamma \in (0, 1)$, the proof of~\cref{thm: LDP mean estimation} constructs a distribution $P_{\gamma}$ on $\XX$ with mean $\expec_{x \sim P_{\gamma}}[x] = \theta \in \{\pm \frac{\gamma}{\sqrt{d}}\}^d$ such that \begin{equation}
\label{eq:c}
\expec_{\wpr', X \sim P_{\gamma}^n}\left[\left\|\wpr' - \theta \right\|^2\right] \geq \frac{\gamma^2}{4}\left[1 - \sqrt{4000 \gamma^2\left(\frac{\npr \eps^2}{d} + \npu \right)}\right].
\end{equation}
Now, let $F_\gamma(w) = \expec_{x \sim P_{\gamma}}[f(w,x)]$ and $\ws_\gamma = \argmin_{w \in \WW} F_\gamma(w) = \frac{\theta}{\|\theta\|}$. 
A direct calculation (see e.g. \citep[Equation 14]{klz21})
shows \begin{align}
\label{eq:d}
\expec F_\gamma(w) - F_\gamma^* &\geq \frac{1}{2} \expec\left[\|\theta \| \|w - \ws_\gamma\|^2 \right] \nonumber \\
&\geq \frac{1}{2 \|\theta\|} \expec\left[\|w' - \theta\|^2 \right] \nonumber \\
&= \frac{1}{2\gamma} \expec\left[\|w' - \theta\|^2 \right]
\end{align}
for any $w \in \WW, w' := w \| \theta\|$. Note that $\alg(X) = \wpr$ is $\eps$-semi-DP if and only if $\alg'(X) = \wpr' := \|\theta\| \wpr = \gamma \wpr$ is $\eps$-semi-DP, by post-processing. Thus, \cref{eq:c,eq:d} together imply that any $\eps$-semi-DP $\wpr'$ has worst-case excess risk that is lower bounded by \begin{align*}
\expec[F_\gamma(\wpr') - F_\gamma^*] \geq \frac{\gamma}{8} \left[1 - \sqrt{4000 \gamma^2\left(\frac{\npr \eps^2}{d} + \npu \right)}\right].
\end{align*}
Choosing $\gamma^2 = c \min\left(\frac{d}{\npr \eps^2}, \frac{1}{\npu} \right)$ for some small $c \in (0, 1/16000)$ implies \[
\expec[F_\gamma(\wpr') - F_\gamma^*] \geq c' \min\left(\sqrt{\frac{d}{\npr \eps^2}}, \frac{1}{\sqrt{\npu}} \right)
\]
for some $c' > 0$. This proves the desired lower bound for the case when $L = D = 1$. In the general case, we scale our hard instance, as in the proof of~\cref{thm: CDP mean estimation}: Let $\wt{W} = D \WW$, $\wt{X} = L \XX$, and $\wt{x} \sim \wt{P} \iff \wt{x} = Lx$ for $x \sim P$. Define $\wt{f}: \wt{\WW} \times \wt{\XX} \to \mathbb{R}$ by $\wt{f}(\wt{w}, \wt{x}) := f(\wt{w}, \wt{x}) = - \langle \wt{w}, \wt{x} \rangle$. Then $\wt{f}(\cdot, \wt{x})$ is $L$-Lipschitz and convex. Moreover, if $F(w) = \expec_{x \sim P}[f(w,x)]$, $\theta := \expec_{x \sim P}[x]$, $\ws := \argmin_{w \in \WW} F(w) = \frac{\theta}{\|\theta\|}$, 
$\wt{F}(\wt{w}) = \expec_{\wt{x} \sim \wt{P}}[f(\wt{w}, \wt{x})]$,  $\wt{w} = Dw$, and $\wt{\theta} := \expec_{\wt{x} \sim \wt{P}}[\wt{x}] = L \theta$,
then $\wt{w}^* = D \ws \in \argmin_{\wt{w} \in \wt{\WW}} \wt{F}(\wt{w})$ and 
\begin{align*}
\wt{F}(\wt{w}) - \wt{F}^* &= -\langle \wt{w}, \wt{\theta}\rangle + \langle\wt{w}^*, \wt{\theta} \rangle  \\
&= D \langle  \wt{\theta}, \ws - w \rangle \\
&=  LD \langle \theta, \ws - w \rangle \\
&= LD \left[F(w) - F^* \right].
\end{align*}
This shows that the excess risk of the scaled instance scales by $LD$, completing the lower bound proofs. 

\textbf{Upper bounds:} The first term in each of the upper bounds ($LD/\sqrt{\npu}$ for convex, and $L^2/(\mu \npu)$ for strongly convex) is attained by the throw-away algorithm that runs $\npu$ steps of (one-pass) SGD on $\xpu$~\cite{ny}. 

The second term in the convex upper bound follows from the $\eps$-LDP (hence semi-LDP) upper bound of~\citet[Proposition 3]{duchi13}. 

For the second term in the strongly convex upper bound, we run $\eps$-LDP-SGD as in~\cite{duchi13}. We also return a non-uniform weighted average $\hat{w}_n$ of the iterates $w_1, \ldots, w_n$ as in~\cite{rakhlin12} to obtain
\[
\expec F(\hat{w}_n) - F^* \leq C \frac{G^2}{\mu n},
\]
where $G^2 = \sup_{t \in [n]} \expec \|\duchisample(\nabla f(w_t, x_t))\|^2 \lesssim L^2 \left(1 + \frac{d}{\eps^2}\right)$~\cite{duchi13}. 
Thus, \[
\expec F(\hat{w}_n) - F^* \leq C' \frac{L^2}{\mu} \left(\frac{d}{n \eps^2}\right),
\]
since $d \geq 1 \geq \eps^2$. 
This completes the proof. 
\end{proof}
\subsubsection{An ``Even More Optimal'' Semi-LDP Algorithm for SCO}
\label{app: more optimal LDP SCO}

\begin{algorithm}[ht]
\caption{Semi-LDP-SGD}
\label{alg: semi-Ldp-sgd}
\begin{algorithmic}[1]
\STATE {\bfseries Input:} 
clip threshold $C > 0$, 
stepsize $\eta$,
privacy parameter $\eps \in (0, 1]$. 
 \STATE Initialize $w_0 \in \WW$.
 \FOR{$t \in \{0, 1, \cdots, n-1\}$} 
  \STATE Draw random sample $x_t$ from $X$ without replacement. 
  \IF{$x_t \in \xpr$}
  \STATE $\widetilde{g}_t \gets \duchisample(\text{clip}_C(\nabla f(w_t, x_t)))$.
  \ELSE
  \STATE $\widetilde{g}_t \gets \nabla f(w_t, x_t)$
  \ENDIF
 \STATE Update $w_{t+1} := \Pi_{\WW}[w_t - \eta \widetilde{g}_t]$.
 \ENDFOR \\
\STATE {\bfseries Output:} $\Bar{w}_n = \frac{1}{n} \sum_{i=1}^n w_i$.
\end{algorithmic}
\end{algorithm}
\begin{proposition}[Re-statement of Proposition~\ref{prop: more optimal semi-LDP SCO}]
Let $f \in \FFmuz$ and $P$ be any distribution and $\eps \leq d$. \cref{alg: semi-Ldp-sgd} is $\eps$-semi-LDP. Further, 
there is an absolute constant $c$ such that 
the output $\alg(X) = \Bar{w}_n$ of~\cref{alg: semi-Ldp-sgd} satisfies \[
\expec_{\alg, X \sim P^n} F(\Bar{w}_n) - F^* \leq c \frac{LD}{\sqrt{n}} \max\left\{
\sqrt{\frac{d}{\eps^2}} \sqrt{\frac{\npr}{n}}, \sqrt{\frac{\npu}{n}}\right\}.
\]
\end{proposition}
\begin{proof}
\textbf{Privacy:} Since $\duchisample$ is an $\eps$-LDP randomizer and we are applying $\duchisample$ to the gradients of all the private samples $x \in \xpr$, \cref{alg: semi-Ldp-sgd} is $\eps$-semi-LDP. 

\textbf{Excess risk: }Choose $C > L$: i.e. we don't clip, since stochastic subgradients are already uniformly bounded by the $L$-Lipschitz assumption. 
By the classical analysis of the stochastic subgradient method (see e.g.~\cite{bubeck2015convex}), we can obtain \begin{align*}
\expec F(\Bar{w}_n) - F^* &\leq \frac{1}{n} \sum_{t=1}^n \expec[\widetilde{g}_t^T(w_t - w^*)]\\
&\leq \frac{D^2}{\eta n} + \frac{\eta}{n}\left(\npr G_a^2 + \npu G_b^2\right),
\end{align*}
where $G_a^2 := \sup_t \expec\|\duchisample(\nabla f(w_t, x_t))\|^2$ and $G_b^2 := \sup_t \expec\|\nabla f(w_t, x_t)\|^2$. By the uniform Lipschitz assumption, we have $G_b^2 \leq L^2$. By~\cite{duchi13}, we have $G_a^2 \leq c^2 L^2 \frac{d}{\eps^2}$ for some absolute constant $c > 0$. Thus, choosing $\eta = \frac{D}{L} \min\left(\sqrt{\frac{\eps^2}{\npr c^2 d}}, \frac{1}{\sqrt{\npu}} \right)$ yields 
\begin{align*}
\expec F(\Bar{w}_n) - F^* &\leq 3 \frac{LD}{\sqrt{n}} \max\left( 
\sqrt{\frac{c^2 d}{\eps^2}} \sqrt{\frac{\npr}{n}}, \sqrt{\frac{\npu}{n}}\right).
\end{align*} 
This completes the proof. 
\end{proof}

\section{Numerical Experiments}
\label{app: experiments}

Code for all of the experiments is available here: 
\url{https://github.com/optimization-for-data-driven-science/DP-with-public-data}.

\subsection{Central Semi-DP Experiments}

\subsubsection{Semi-DP Linear Regression with Gaussian Data}
\label{app: linreg setup}
\paragraph{Data Generation:} We implement~\cref{alg: weighted dp-sgd} on synthetic data designed for a linear regression problem of dimension $2,000$, using the squared error loss: $f(w,x) = (\langle w, x^{(1)} \rangle - x^{(2)})^2$, where $x^{(1)} \in \mathbb{R}^{d}$ denotes the feature vector and $x^{(2)} \in \mathbb{R}$ is the target. Here, $d = 2,000$. Our synthetic dataset consists of $n = 30,000$ training samples, $7500$ validation samples, and $37,500$ test samples. The feature vectors $x_i := x^{(1)}_i$ and the optimal parameter vector $w^*$ are drawn i.i.d. from a multivariate Gaussian distribution $\mathcal{N}\left(0, \mathbf{I}_{2000}\right)$. We generate predicted values $x^{(2)}_i$ from a Gaussian distribution $\mathcal{N}\left( \langle w^*, x_i \rangle, 1\right)$. Thus, the optimal linear regression model has which ensures an optimal mean squared error of $1$. 

\paragraph{Experimental Setup:}

Our experiments investigate two phenomena: 1) the effect of the ratio $\frac{\npu}{n}$ on test loss when $\eps \in \{2, 4\}$ is fixed, for values of $\frac{\npu}{n}$ ranging from $0.01$ to $0.95$, see Figures \ref{fig:acc_vs_ratio_eps_2}-\ref{fig:acc_vs_ratio_eps_4}; and 2) the effect of privacy (quantified by $\varepsilon$) on test loss, for fixed $\frac{\npu}{n} \in \{0.1, 0.25\}$, and varying $\varepsilon \in \{0.1, 0.5, 1.0, 2.0, 4.0, 8.0\}$, see Figures \ref{fig:acc_vs_eps_ratio_0_1}-\ref{fig:acc_vs_eps_ratio_0_25} . We maintain the privacy parameter $\delta$ at a constant value of $10^{-5}$ throughout our experiments. We set the private batch size $\kpr=500$, public batch size $\kpu=200$, and iterations $T=5000$. All algorithms undergo extensive hyperparameter tuning using the validation dataset, and the performance of each tuned algorithm is subsequently assessed using the test dataset. (See ``\textbf{Hyperparameter Tuning}'' paragraph below for details on the tuning process.)  %

\paragraph{Details on Implementations of Algorithms:} 
We compare four different semi-DP algorithms: 1. \textit{Throw-away}.   
2. \textit{DP-SGD}~\cite{abadi16,de2022unlocking}.  
3. \textit{PDA-MD}~\citep[Algorithm 1]{amid2022public}.
4. \textit{Our~\cref{alg: weighted dp-sgd}}---specifically, the \textit{sample-with-replacement} version of our algorithm. 
If $\npu \geq d$, the \textit{Throw-away} algorithm simply returns a minimizer $w_{\text{pub}}$ of the public loss: $w_{\text{pub}}=(\xpu^T  \xpu )^{-1}\xpu^T y_{\text{pub}}$. Otherwise, we used pretrained warm-start models with all public samples. (See ``\textbf{warm-start}'' paragraph below for details.)
\textit{DP-SGD} adds noise to all (public or private) gradients. We use the state-of-the-art (for image classification) implementation of DP-SGD of~\cite{de2022unlocking}.
We adopt the re-parameterization of DP-SGD in~\citep[Equation 3]{de2022unlocking} to ease the hyperparameter tuning. For \textit{PDA-MD}, we implement~\citep[Algorithm]{amid2022public}. We use their exact Mirror descent form by multiplying the inverse of the hessian $\xpu^T \xpu$ by the private gradient \cite{amid2022public}. In their original implementation, they added a small constant, Hessian Regularization, times the identity matrix to the Hessian before calculating the inverse for numerically stable. We choose the hessian regularization constant as $0.01$, the same value in their original implementation. 

\label{Appendix: effect of different ratio}
\textit{Effect of increasing ratio $\npu/\npr$:}
More public data always improves the performance of \cref{alg: weighted dp-sgd}, but does not always benefit PDA-MD. The primary reason for this is that our algorithm more effectively handles the increasing privacy noise that is needed to maintain semi-DP with increasing ratio $\npu/n$. Our algorithm achieves this by using the weight parameter to reduce the variance of the increasingly noisy private gradients and leverage public gradients. 
By contrast, as $\npu/n$ ratio rises, the efficacy of PDA-MD may diminish due to its over-reliance on increasingly noisy private gradients. We verify our reasoning numerically in Appendix \ref{sec: tables}: Tables \ref{table: std_dev_eps_2} and \ref{table: std_dev_eps_4} record the standard deviation $\sigma$ of the privacy noise for different ratios.

\textit{Effect of hessian regularization parameter on PDA-MD:}
Upon reproducing the results of PDA-MD, we discovered a high sensitivity to the choices of Hessian Regularization, especially when 
the ratio of the largest and the smallest eigenvalue of the data matrix is large. We test PDA-MD on the dataset proposed in the original study as well as on our own dataset. The results of these tests are displayed in~\Cref{fig:hessian_reg}. We see PDA-MD is sensitive to hessian regularization parameter, which requires extra tuning on complicated tasks. We implement PDA-MD with the optimal regularization value of $0.01$.

\textit{Details on warm-start and cold-start:} In our evaluation, we adopt a ``warm start'' strategy for all algorithms: we first find a minimizer $w_{\text{pub}}$ of the public loss, and then initialize the training process with $w_{\text{pub}}$. Note that there are two cases: $\npu \geq n$ and $\npu < n$. In the case of $\npu \geq n$, the minimizer $w_{\text{pub}}$ of the public loss can be obtained via $w_{\text{pub}}=\left(\xpu^T  \xpu \right)^{-1}\xpu^T y_{\text{pub}}$. In the case of $\npu < n$, we run SGD on linear regression with all $\npu$. Specifically, we minimize $w_{\text{pub}}$ of the public loss by running the SGD with public batch size $\kpu=200$, stepsize $\eta_t=0.5$, and iterations $T=50000$ to allow the models fully converge. For cold start scenarios, we initialize all model parameters $w$ to $0$. The cold start experiment results can be seen in Figure \ref{fig:acc_vs_ratio_eps_4_non_warm} and Figure \ref{fig:acc_vs_eps_ratio_0_1_non_warm}.

\textit{\ul{Clipping public gradients improves performance:}} 
We have empirically found that a slight modification of~\cref{alg: weighted dp-sgd} offers performance benefits. Namely, it is beneficial to project the public gradients onto the $\ell_2$-sphere of radius $C$; that is, we re-scale the public gradients to have $\ell_2$-norm equal to $C$. 
Note that semi-DP still holds regardless of whether or not the public gradients are re-scaled. Re-scaling the public gradients helps balance the effects of the public and private gradients on the optimization trajectory. 
In the original method stated in \cref{alg: weighted dp-sgd}, 
if unclipped public gradients and the private gradient are of very different magnitudes, 
then one gradient direction might dominate the optimization procedure, leading to a sub-optimal model.
Thus, our public gradient re-scaling technique promotes a more balanced update, which gracefully combines the public and private data in each iteration. To the best of our knowledge, this technique is novel.

\textit{Privacy accounting:} We compute the privacy loss of each algorithm by using the moments acccountant of Abadi et al.~\cite{abadi16}. For a fixed clip threshold $C$, privacy level $(\eps, \delta)$ is determined by three parameters: the variance of privacy noise $\sigma^2$, the private sampling ratio $q:= \kpr/\npr$, and the total number of iterations $T$. In our setting, the privacy parameters $(\varepsilon, \delta)$ are given, and we use the moments accountant to compute an approximation of $\sigma^2$ for any choice of hyperparameters $T$ and $q$. We utilize the implementation of the privacy accountant provided by the Pytorch privacy framework, Opacus.

\label{hyperparameter tuning}
\paragraph{Hyperparameter Tuning:} 

The results reported are for each algorithm with the hyperparameters (step size and $\alpha$ in Semi-DP) that attain the best performance for a given experiment. For simplicity and computation efficiency, we keep clip threshold $C=1$ for all of our experiments. Preliminary experiments found that a clip threshold  of $C = 1$ worked well for all algorithms. To tune all algorithms, we use grid search. See~\Cref{tab:grid_dp_sgd,tab:baye-lr}~in~\cref{sec: hyperparameter search} for detailed descriptions of the hyperparameter search grids.

\begin{figure}
    \centering
    \begin{minipage}{.4\textwidth}
        \centering
        \includegraphics[width=\textwidth]{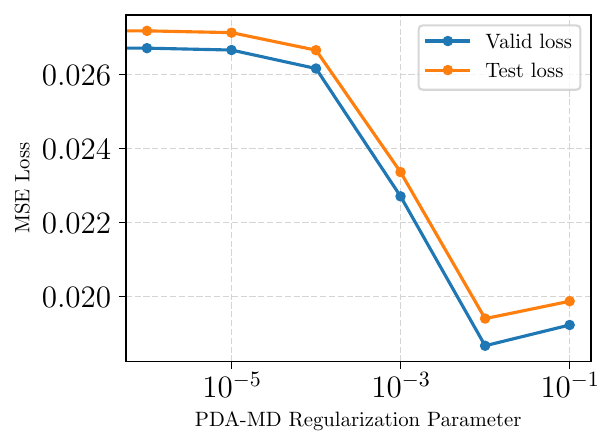}        
    \end{minipage}
    \hspace{0.25in}
    \begin{minipage}{.4\textwidth}
        \centering
        \includegraphics[width=\textwidth]{MSE_PDA_MD_Regualrization_Parameter.pdf}
    \end{minipage}
    \caption{Loss v.s. PDA-MD Regularization Parameter. Left: Results on proposed dataset in~\cite{amid2022public}. Right: Results on our dataset. PDA-MD is sensitive to hessian regularization parameter, which requires extra tuning on complicated tasks.}
    \label{fig:hessian_reg}
\end{figure}

\begin{figure*}[h]
    \centering
    \begin{minipage}{.4\textwidth}
        \includegraphics[width=2.5in]{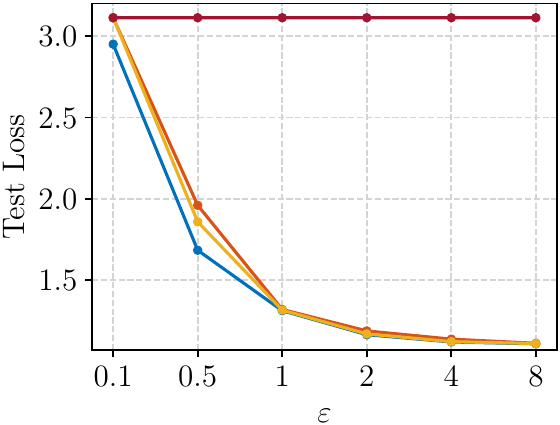}
        \caption{Test loss vs. $\eps$. $\frac{\npu}{n}=0.1$.}
        \label{fig:acc_vs_eps_ratio_0_1}
    \end{minipage}
    \begin{minipage}{.4\textwidth}
        \includegraphics[width=2.5in]{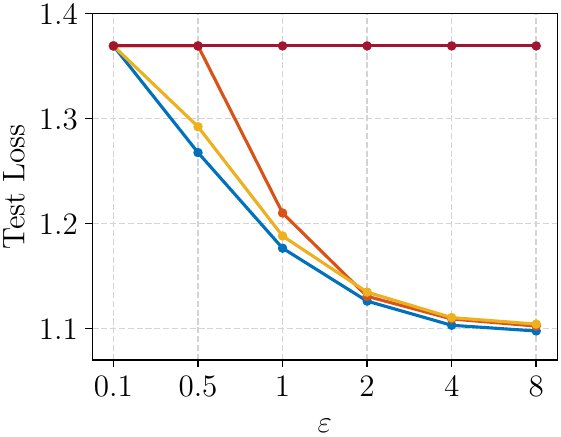}  
        \caption{Test loss vs. $\eps$. $\frac{\npu}{n}=0.25$.}
        \label{fig:acc_vs_eps_ratio_0_25}
    \end{minipage}
    \begin{minipage}{.15\textwidth}
        \includegraphics[width=1.3\textwidth]{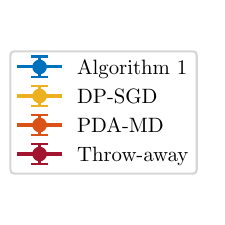} 
    \end{minipage}
\end{figure*}

\begin{figure*}[h]
    \centering
    \begin{minipage}{.4\textwidth}
        \includegraphics[width=2.5in]{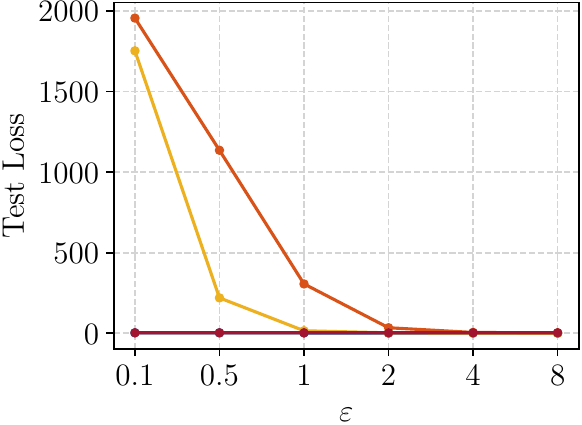}  
        \caption{Test loss vs. $\eps$. $\frac{\npu}{n}=0.1$, without warm-start.}  \label{fig:acc_vs_eps_ratio_0_1_non_warm}
    \end{minipage}
    \begin{minipage}{.15\textwidth}
        \vspace{-1.1in}
        \hspace{1in}
    \includegraphics[width=1.3\textwidth]{legend.pdf} 
    \end{minipage}
\end{figure*}

\subsubsection{Semi-DP Mean Estimation}
\label{app:semi-dp-mean-estimation}

We present an experiment with mean estimation: We fix $\rho$-semi-zCDP with privacy parameter $\rho = 0.5$. We draw $n$ i.i.d. samples from a $(d = 1000)$-dimensional $\textbf{Bernoulli}(p = 1/2)$ product distribution.
We investigated the effect of the ratio $\frac{n_\text{pub}}{n}$ on mean $\ell_2$ error, for values of $\frac{n_\text{pub}}{n}$ ranging from $0.05$ to $0.95$. We presented the experiment in three different high-dimensional and low-dimensional settings: $n<d$, $n=d$, and $n>d$. \Cref{fig:mean-est-d-1000-n-500-appendix} shows that when $n<d$, throw-away outperforms Gaussian mechanism. \Cref{fig:mean-est-d-1000-n-1000-appendix} and~\cref{fig:mean-est-d-1000-n-2000-appendix} show that when $d \leq n$, throw-away outperforms Gaussian mechanism except when $\frac{n_\text{pub}}{n} = 0.05$. Moreover, \textbf{our Weighted Gaussian estimator outperforms both throw-away and the Gaussian mechanism in every case}. 

\vspace{-0.1in}
\begin{figure}[h]
    \begin{minipage}{.32\textwidth}
        \centering
        \includegraphics[width=0.95\linewidth]{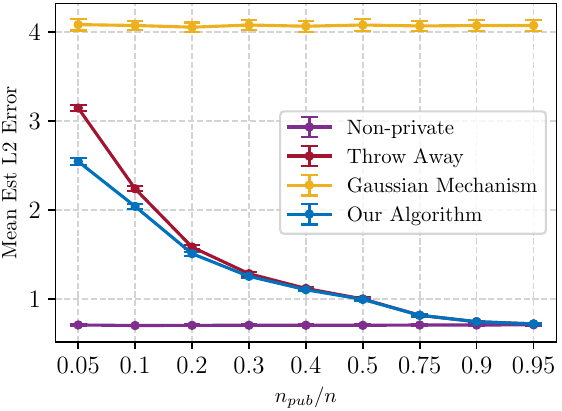}
        \caption{$d=1000$, $n=500$}
        \label{fig:mean-est-d-1000-n-500-appendix}
    \end{minipage}
    \hfill
    \begin{minipage}{.32\textwidth}
        \centering
        \includegraphics[width=0.95\linewidth]{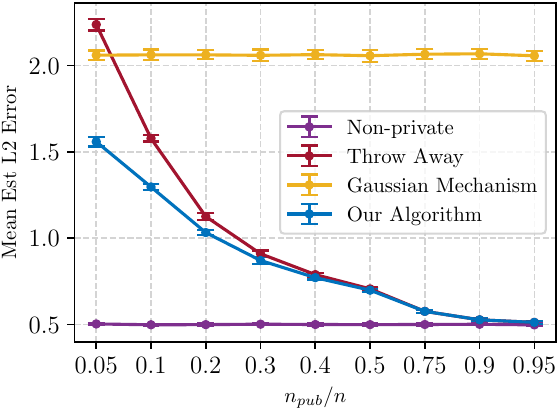}
        \caption{$d=1000$, $n=1000$}
        \label{fig:mean-est-d-1000-n-1000-appendix}
    \end{minipage}
    \hfill
    \begin{minipage}{.32\textwidth}
        \centering
        \includegraphics[width=0.95\linewidth]{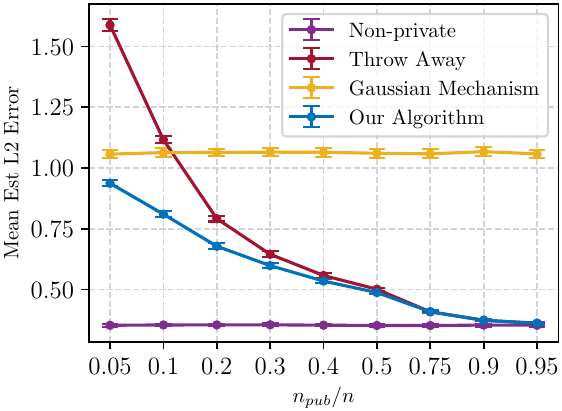}
        \caption{$d=1000$, $n=2000$}
        \label{fig:mean-est-d-1000-n-2000-appendix}
    \end{minipage}
\end{figure}

\subsubsection{Semi-DP Logistic Regression with CIFAR-10}
\label{app:cifar10-setup}

We evaluate the performance of~\cref{alg: weighted dp-sgd} in training a logistic regression model to classify digits in the CIFAR-10 dataset~\cite{krizhevsky2009learning}. We compare~\cref{alg: weighted dp-sgd} against DP-SGD and throw-away. Note that PDA-MD does not have an efficient implementation for logistic regression. This is because there does not exist a closed form update rule for the mirror descent step. Thus, we do not compare against PDA-MD. 

We flatten the images and feed them to the logistic (softmax) model. Cross-entropy loss is used here; therefore, the model is convex. Implementations of the algorithms are similar to the linear regression case. However, in this case, throw-away consists of running non-private SGD on $\xpu$ to find an approximate minimizer $w_{\text{pub}}$. For all three algorithms, we fixed batch-size 256 and privacy parameter $\delta=10^{-6}.$ The remaining hyperparameters are tuned by grid search, using the same grid for each algorithm. Also, in contrast to the linear regression experiments, \textit{we do not use warm-start} for any of the algorithms and \textit{we use SGD} for all algorithms in these experiments. See~\Cref{tab:baye-cifar}~in~\cref{sec: hyperparameter search} for detailed descriptions of the hyperparameter search grids.

Results are reported in~\Cref{fig:cifar_acc_vs_ratio_1,fig:cifar_acc_vs_ratio_2,fig:cifar_acc_vs_eps_1,fig:cifar_acc_vs_eps_2,}. 
\textit{Our~\cref{alg: weighted dp-sgd} always outperforms the baselines} in terms of minimizing test accuracy/error. The advantage of~\cref{alg: weighted dp-sgd} over these baselines is even more pronounced than it was for linear regression. This might be partially due to the fact that we did not use warm-start. 
\begin{figure}[H]
    \centering
    \begin{minipage}{.25\textwidth}
        \includegraphics[width=0.95\textwidth]{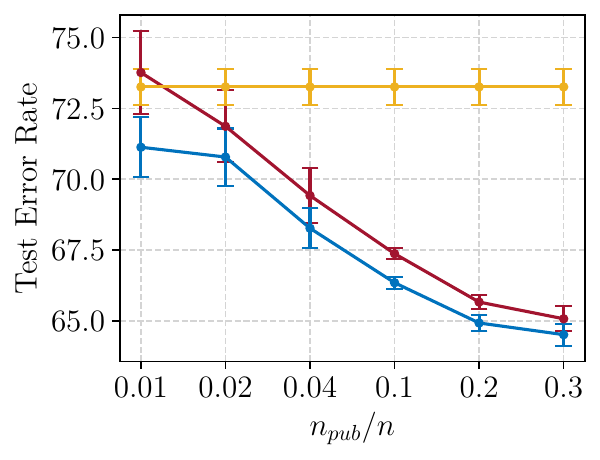}
    \end{minipage}
    \begin{minipage}{.25\textwidth}
        \includegraphics[width=0.95\textwidth]{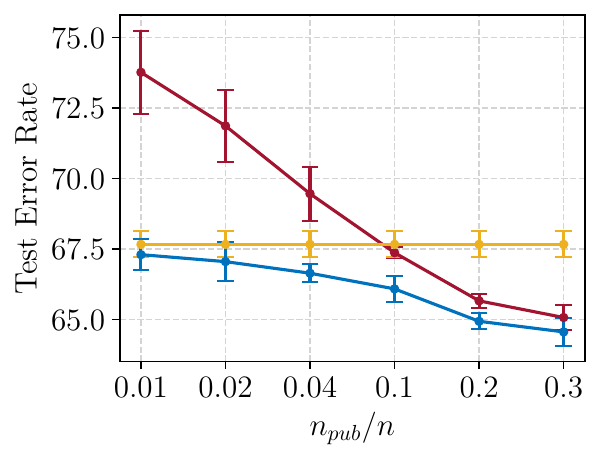} 
    \end{minipage}
    \begin{minipage}{.25\textwidth}
        \includegraphics[width=0.95\textwidth]{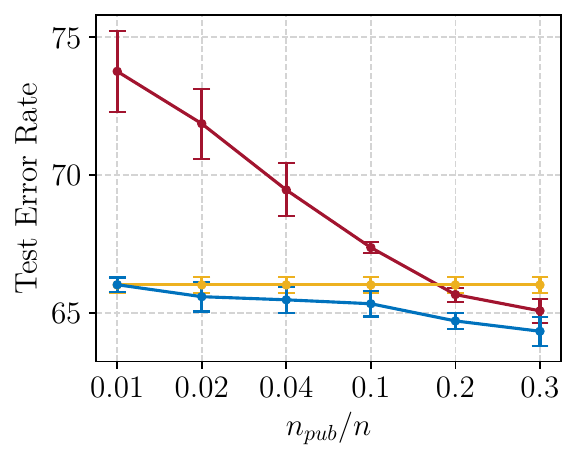} 
    \end{minipage}
    \begin{minipage}{.05\textwidth}
        \vspace{-0.2in}
    \includegraphics[width=3\textwidth]{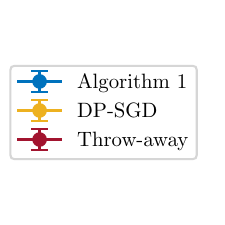} 
    \end{minipage}
    \caption{Test error rate vs. $\frac{\npu}{n}$. Left: $\varepsilon=0.1$. Middle: $\varepsilon=0.5$. Right: $\varepsilon=1.0$}
    \label{fig:cifar_acc_vs_ratio_1}
\end{figure}

\begin{figure}[h]
    \centering
    \begin{minipage}{.25\textwidth}
        \includegraphics[width=0.95\textwidth]{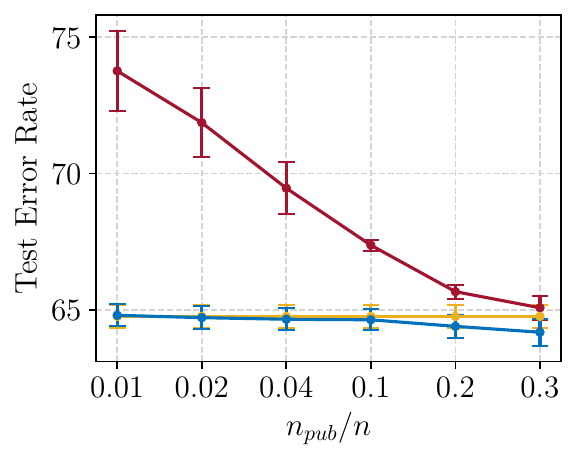}
    \end{minipage}
    \begin{minipage}{.25\textwidth}
        \includegraphics[width=0.95\textwidth]{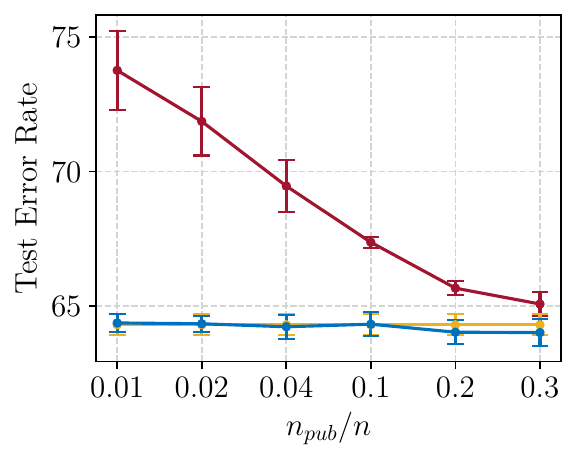} 
    \end{minipage}
    \begin{minipage}{.25\textwidth}
        \includegraphics[width=0.95\textwidth]{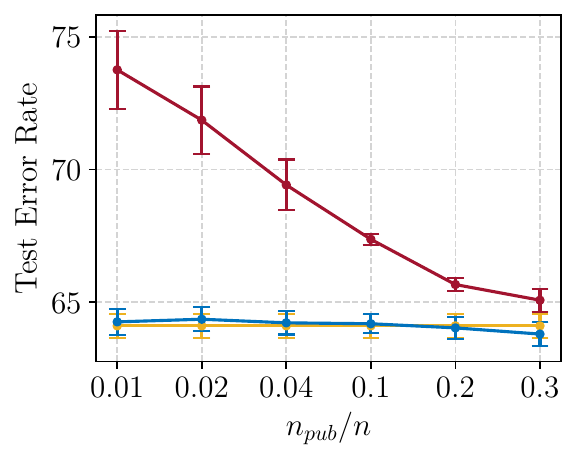} 
    \end{minipage}
    \begin{minipage}{.05\textwidth}
        \vspace{-0.2in}
    \includegraphics[width=3\textwidth]{legend_cifar10.pdf} 
    \end{minipage}
    \caption{Test error rate vs. $\frac{\npu}{n}$. Left: $\varepsilon=2.0$. Middle: $\varepsilon=4.0$. Right: $\varepsilon=8.0$}
    \label{fig:cifar_acc_vs_ratio_2}
\end{figure}

\begin{figure}[h]
    \centering
    \begin{minipage}{.25\textwidth}
        \includegraphics[width=0.95\textwidth]{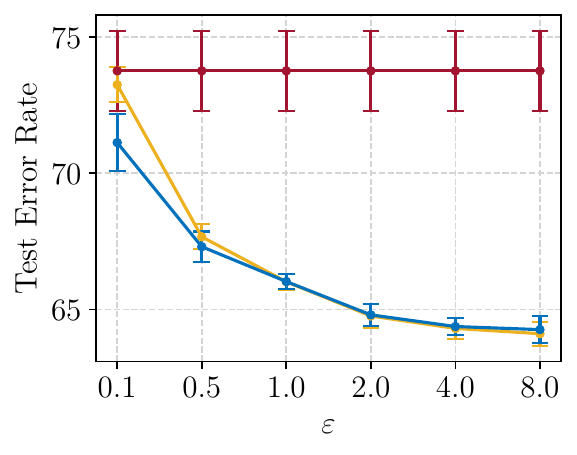}
    \end{minipage}
    \begin{minipage}{.25\textwidth}
        \includegraphics[width=0.95\textwidth]{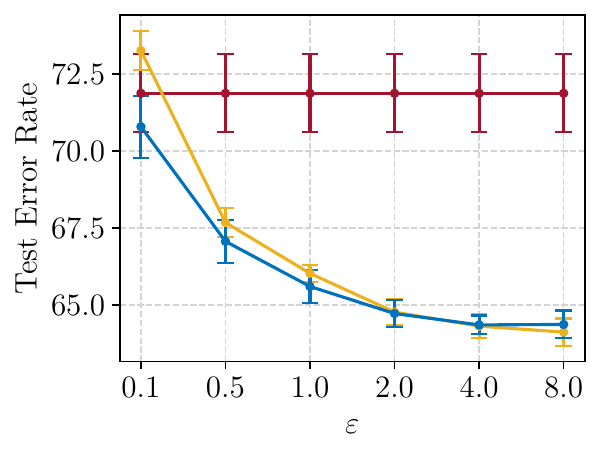} 
    \end{minipage}
    \begin{minipage}{.25\textwidth}
        \includegraphics[width=0.95\textwidth]{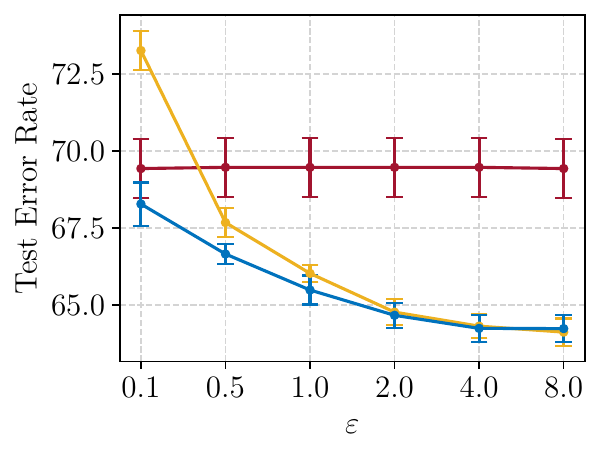} 
    \end{minipage}
    \begin{minipage}{.05\textwidth}
        \vspace{-0.2in}
    \includegraphics[width=3\textwidth]{legend_cifar10.pdf} 
    \end{minipage}
    \caption{Test error rate vs. $\varepsilon$. Left: $\frac{\npu}{n}=0.01$. Middle: $\frac{\npu}{n}=0.02$. Right: $\frac{\npu}{n}=0.04$}
    \label{fig:cifar_acc_vs_eps_1}
\end{figure}

\begin{figure}[H]
    \centering
    \begin{minipage}{.25\textwidth}
        \includegraphics[width=0.95\textwidth]{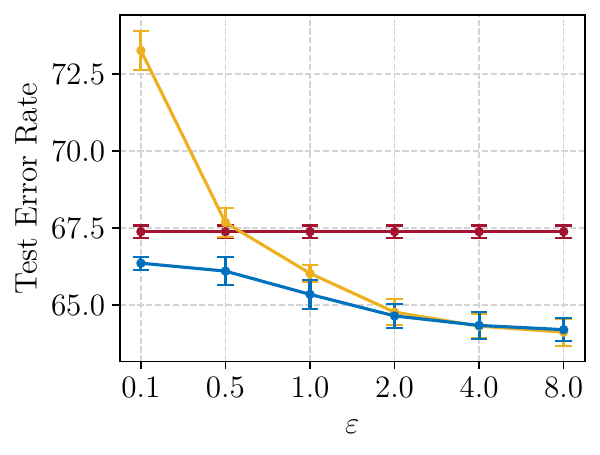}
    \end{minipage}
    \begin{minipage}{.25\textwidth}
        \includegraphics[width=0.95\textwidth]{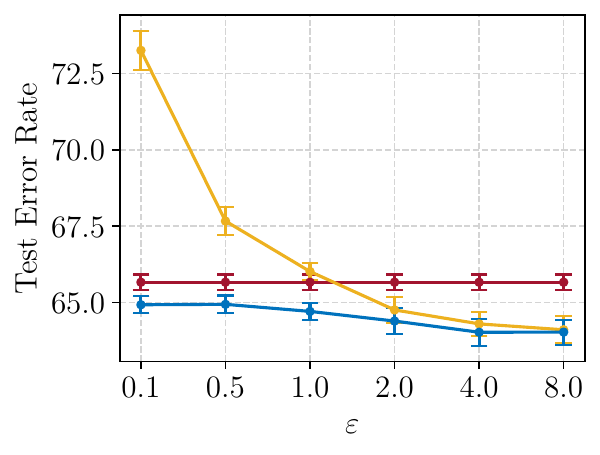} 
    \end{minipage}
    \begin{minipage}{.25\textwidth}
        \includegraphics[width=0.95\textwidth]{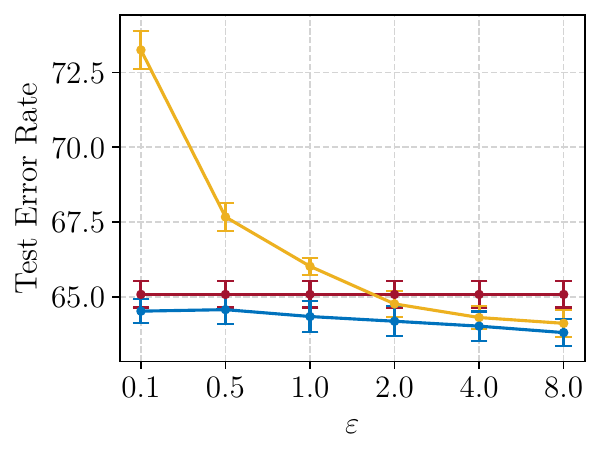} 
    \end{minipage}
    \begin{minipage}{.05\textwidth}
        \vspace{-0.2in}
    \includegraphics[width=3\textwidth]{legend_cifar10.pdf} 
    \end{minipage}
    \caption{Test error rate vs. $\varepsilon$. Left: $\frac{\npu}{n}=0.1$. Middle: $\frac{\npu}{n}=0.2$. Right: $\frac{\npu}{n}=0.3$}
    \label{fig:cifar_acc_vs_eps_2}
\end{figure}

\subsubsection{Semi-DP Wide-ResNet-16-4 with CIFAR-10}
\label{app:cifar10-wrn-14-4-setup}

To evaluate the performance of~\cref{alg: weighted dp-sgd} in training non-convex (neural) models, we use the Wide-ResNet with 16 layers and a width factor of 4 (WRN-16-4)~\cite{zagoruyko2017wide}. When trained non-privately on CIFAR-10 dataset, this model achieves an error rate of $5.02$~\cite{zagoruyko2017wide}. 

\paragraph{Experimental Setup:} We followed the same experimental setup as~\cite{de2022unlocking}. For all algorithms, 
we used a constant learning rate and did not use momentum, weight decay, or dropout. 
We fixed the batch size of 256 and privacy parameter $\delta=10^{-5}.$ For simplicity and to isolate the effects of the weighted gradient estimator, we used the WRN-16-4 without batch/group normalization or data augmentation.\footnote{We believe that accuracy could be improved by combining these tricks with our semi-DP gradient estimator~\cite{de2022unlocking,nasr23a}.}We split the CIFAR-10 dataset~\cite{krizhevsky2009learning} of 50,000 training examples into 40,000 training samples and 10,000 validation samples. We used their 10,000 test images as our test set. Same as~\cref{app:cifar10-setup}, we did not use warm-start here.

\paragraph{Hyperparameter Tuning:} We selected the hyperparameters settings with the highest validation accuracy for all algorithms and reported their test accuracy on the official test set. To tune the hyperparameters, we used the bayesian hyperparameter optimization technique. That is, we build a probability
model of the objective function and use it to select the most promising hyperparameters to evaluate in the true objective function. See~\cref{tab:baye-cifar-wrn-16-4}~in~\cref{sec: hyperparameter search} for detailed descriptions of the hyperparameter search range.

\paragraph{Results:} 
We investigate two cases: 1) the effect of the ratio of public samples $\frac{\npu}{n}$ on accuracy when $\eps=8$ is fixed. We test values of $\frac{\npu}{n}$ ranging from $0.01$ to $0.3$; and 2) the effect of privacy (quantified by $\varepsilon$) on accuracy, for fixed $\frac{\npu}{n} = 0.04$. We vary $\varepsilon \in \{0.1, 0.5, 1.0, 2.0, 4.0, 8.0\}$. Results are reported in~\Cref{fig:loss_vs_ratio_cifar_10_non_cvx_1,fig:loss_vs_ratio_cifar_10_non_cvx_2}. 
\textit{Our~\cref{alg: weighted dp-sgd} always outperforms the baselines} (DP-SGD and Throw-away) in terms of minimizing test error, across all experimental setups.

\begin{figure}[H]
    \centering
    \begin{minipage}{.4\textwidth}
        \includegraphics[width=.9\textwidth]{/cifar10-nonlinear/Cifar10_wrn-16-4_eps_8.0_test_acc_error_bar.pdf}
        \vspace{-0.3in}
    \end{minipage}
    \begin{minipage}{.4\textwidth}
        \includegraphics[width=.9\textwidth]{/cifar10-nonlinear/Cifar10_wrn-16-4_eps_8.0_valid_acc_error_bar.pdf} 
        \vspace{-0.3in}
    \end{minipage}
    \begin{minipage}{.15\textwidth}
        \vspace{-0.3in}
    \includegraphics[width=1.2\textwidth]{legend_cifar10.pdf} 
    \end{minipage}
    \vspace{0.1in}
    \caption{Left: Test error rate vs. $\frac{\npu}{n}$. Right: Validation error rate vs. $\frac{\npu}{n}$. $\eps=8$} \label{fig:loss_vs_ratio_cifar_10_non_cvx_1}
\end{figure}

\begin{figure}[H]
    \centering
    \begin{minipage}{.4\textwidth}
        \includegraphics[width=.9\textwidth]{/cifar10-nonlinear/Cifar10_wrn-16-4_ratio_0.04_test_acc_error_bar.pdf}
        \vspace{-0.3in}
    \end{minipage}
    \begin{minipage}{.4\textwidth}
        \includegraphics[width=.9\textwidth]{/cifar10-nonlinear/Cifar10_wrn-16-4_ratio_0.04_valid_acc_error_bar.pdf} 
        \vspace{-0.3in}
    \end{minipage}
    \begin{minipage}{.15\textwidth}
        \vspace{-0.3in}
    \includegraphics[width=1.2\textwidth]{legend_cifar10.pdf} 
    \end{minipage}
    \vspace{0.1in}
    \caption{Left: Test error rate vs. $\eps$. Right: Validation error rate vs. $\eps$. $\frac{\npu}{n}=0.04$} \label{fig:loss_vs_ratio_cifar_10_non_cvx_2}
\end{figure}

\subsection{Hyperparameters Search Grids}
\label{sec: hyperparameter search}
\begin{table}[H]
\centering
\begin{tabular}{@{}ccc@{}}
\toprule
hyperparameter &   learning-rate \\ \midrule
value          &  $0, 0.01, 0.03, 0.05, 0.07, 0.09, 0.1, 0.3, 0.5, 0.7, 0.9, 1.1, 1.3, 1.5, 1.7, 1.9$                   \\ \bottomrule
\end{tabular}
\caption{Grid used for hyperparameter search for PDA-MD and DP-SGD in~\Cref{app: linreg setup}}
\label{tab:grid_dp_sgd}
\end{table}

\begin{table}[H]
\centering
\begin{tabular}{@{}cccc@{}}
\toprule
hyperparameter &  learning-rate & $\alpha$ \\ \midrule
value          & $0, 0.01, 0.03, 0.05, 0.07, 0.09, 0.1, 0.3, 0.5, 0.7, 0.9, 1.1, 1.3, 1.5, 1.7, 1.9$ & $0, 0.1, 0.2 \ldots 1$                    \\ \bottomrule
\end{tabular}
\caption{Grid used for hyperparameter search for \cref{alg: weighted dp-sgd} in~\Cref{app: linreg setup}}
\label{tab:baye-lr}
\end{table}

\begin{table}[H]
\centering
\begin{tabular}{@{}cc@{}}
\toprule
hyperparameter & value \\ \midrule
iterations & $2000, 4000, 6000, \ldots, 20000$\\
learning-rate &  $0.005, 0.01, 0.05, 0.1, 0.15, 0.2, 0.25, 0.5, 0.75, 1, 1.5, 2, 3, 4, 5$   \\
$\alpha$ & $0, 0.1, 0.2, \ldots, 1$       \\\bottomrule
\end{tabular}
\caption{Grid used for hyperparameter search for DP-SGD and Semi-DP in~\Cref{app:cifar10-setup}}
\label{tab:baye-cifar}
\end{table}

\begin{table}[H]
\centering
\begin{tabular}{@{}cccc@{}}
\toprule
hyperparameter & iterations &  learning-rate & $\alpha$ \\ \midrule
value          &  $[3000, 7000]$                & $[0.1,3]$ & $[0,1]$                    \\ \bottomrule
\end{tabular} \\
\caption{Range used for Bayesian hyperparameter search for DP-SGD and Semi-DP in~\Cref{app:cifar10-wrn-14-4-setup}}
\label{tab:baye-cifar-wrn-16-4}
\end{table}

\subsection{Exact Numerical Results}
\label{sec: tables}

\begin{table}[H]
\centering
\begin{tabular}{@{}ccccc@{}}
\toprule
$\npu/n$ & Algorithm 1 & PDA-MD    & DP-SGD & Throw-away \\ \midrule
0.01     & 2.3526      & 1689.5905 & 2.7935 & 1689.5905  \\
0.03     & 1.9719      & 1084.2257 & 2.1457 & 1084.2256  \\
0.04     & 1.7626      & 776.1302  & 2.1417 & 776.1301   \\
0.1      & 1.1648      & 1.1880    & 1.1695 & 3.1133     \\
0.25     & 1.1260      & 1.1306    & 1.1346 & 1.3691     \\
0.5      & 1.1043      & 1.1394    & 1.1065 & 1.1539     \\
0.75     & 1.0946      & 1.1013    & 1.0937 & 1.1013     \\
0.9      & 1.0787      & 1.0787    & 1.0787 & 1.0788     \\
0.95     & 1.0725      & 1.0725    & 1.0725 & 1.0725     \\ \bottomrule
\end{tabular}
\caption{Exact training results of curves reported in Figure~\ref{fig:acc_vs_ratio_eps_2}.}
\label{tab:acc_vs_ratio_eps_2_detailed}
\end{table}

\begin{table}[H]
\centering
\begin{tabular}{@{}ccccc@{}}
\toprule
$\npu/n$ & Algorithm 1 & PDA-MD    & DP-SGD & Throw-away \\ \midrule
0.01     & 1.5020      & 1689.5905 & 1.5206 & 1689.5905  \\
0.03     & 1.3363      & 1084.2257 & 1.3769 & 1084.2256  \\
0.04     & 1.3196      & 776.1302  & 1.3776 & 776.1302   \\
0.1      & 1.1201      & 1.1376    & 1.1221 & 3.1133     \\
0.25     & 1.1030      & 1.1090    & 1.1103 & 1.3691     \\
0.5      & 1.0911      & 1.1019    & 1.0999 & 1.1539     \\
0.75     & 1.0863      & 1.1013    & 1.0877 & 1.1013     \\
0.9      & 1.0787      & 1.0787    & 1.0787 & 1.0788     \\
0.95     & 1.0725      & 1.0725    & 1.0725 & 1.0725     \\ \bottomrule
\end{tabular}
\caption{Exact training results of curves reported in Figure~\ref{fig:acc_vs_ratio_eps_4}.}
\label{tab:acc_vs_ratio_eps_4_detailed}
\end{table}

\begin{table}[H]
\centering
\begin{tabular}{@{}ccccc@{}}
\toprule
$\varepsilon$ & Algorithm 1 & PDA-MD & DP-SGD & Throw-away \\ \midrule
0.10  & 2.9509   & 3.1133            & 3.1133     & 3.1133           \\
0.50  & 1.6841   & 1.9602            & 1.8588     & 3.1133           \\
1.00  & 1.3125   & 1.3201            & 1.3158     & 3.1133           \\
2.00  & 1.1648   & 1.1880            & 1.1695     & 3.1133           \\
4.00  & 1.1201   & 1.1376            & 1.1221     & 3.1133           \\
8.00  & 1.1088   & 1.1120            & 1.1104     & 3.1133           \\ \bottomrule
\end{tabular}
\caption{Exact training results of curves reported in Figure~\ref{fig:acc_vs_eps_ratio_0_1}.}
\label{tab:acc_vs_eps_ratio_0_1_detailed}
\end{table}

\begin{table}[H]
\centering
\begin{tabular}{@{}ccccc@{}}
\toprule
$\varepsilon$ & Algorithm 1 & PDA-MD & DP-SGD & Throw-away \\ \midrule
0.10 & 1.3691 & 1.3691 & 1.3691 & 1.3691 \\
0.50 & 1.2647 & 1.3691 & 1.2679 & 1.3691 \\
1.00 & 1.1764 & 1.2099 & 1.1880 & 1.3691 \\
2.00 & 1.1260 & 1.1306 & 1.1346 & 1.3691 \\
4.00 & 1.1030 & 1.1090 & 1.1103 & 1.3691 \\
8.00 & 1.0976 & 1.1023 & 1.1041 & 1.3691 \\ \bottomrule
\end{tabular}
\caption{Exact training results of curves reported in Figure~\ref{fig:acc_vs_eps_ratio_0_25}.}
\label{tab:acc_vs_eps_ratio_0_25_detailed}
\end{table}

\begin{table}[H]
\centering
\begin{tabular}{@{}ccccc@{}}
\toprule
$\npu/n$ & Algorithm 1 & PDA-MD    & DP-SGD & Throw-away \\ \midrule
0.01     & 1.5175      & 2010.2422 & 1.5411 & 1689.5905  \\
0.03     & 1.4225      & 2010.2422 & 1.5411 & 1084.2256  \\
0.04     & 1.3989      & 2010.2422 & 1.5413 & 776.1302   \\
0.1      & 1.3371      & 5.3822    & 1.5413 & 3.1133     \\
0.25     & 1.2574      & 2.6205    & 1.5411 & 1.3691     \\
0.5      & 1.1539      & 5.9698    & 1.5411 & 1.1539     \\
0.75     & 1.1013      & 78.9970   & 1.5411 & 1.1013     \\
0.9      & 1.0787      & 1053.6185 & 1.5411 & 1.0788     \\
0.95     & 1.0725      & 1662.3572 & 1.5413 & 1.0725     \\ \bottomrule
\end{tabular}
\caption{Exact training results of curves reported in Figure~\ref{fig:acc_vs_ratio_eps_4_non_warm}.}
\label{tab:acc_vs_ratio_eps_4_non_warm_detailed}
\end{table}

\begin{table}[H]
\centering
\begin{tabular}{@{}ccccc@{}}
\toprule
$\varepsilon$ & Algorithm 1 & PDA-MD & DP-SGD & Throw-away \\ \midrule
0.10 & 3.1133 & 1956.6069 & 1830.1820 & 3.1133 \\
0.50 & 3.1133 & 1092.1338 & 213.6728 & 3.1133 \\
1.00 & 2.1843 & 306.5518 & 15.8089 & 3.1133 \\
2.00 & 1.6313 & 34.4665 & 2.7226 & 3.1133 \\
4.00 & 1.3359 & 5.3648 & 1.4746 & 3.1133 \\
8.00 & 1.1986 & 2.4319 & 1.2472 & 3.1133 \\ \bottomrule
\end{tabular}
\caption{Exact training results of curves reported in Figure~\ref{fig:acc_vs_eps_ratio_0_1_non_warm}.}
\label{tab:acc_vs_eps_ratio_0_1_non_warm}
\end{table}

\begin{table}[H]
\centering
\begin{tabular}{cccccccccc}
\toprule
$\npu/n$ & 0.01 & 0.03 & 0.04 & 0.1 & 0.25 & 0.5 & 0.75 & 0.9 & 0.95 \\ \midrule
$\sigma$ & 2.490 & 2.529 & 2.568 & 2.744 & 3.252 & 4.805 & 9.531 & 23.672 & 47.344 \\ \bottomrule
\end{tabular}
\caption{Standard Deviation $\sigma$ of the private noise $v_t$ used in experiment shown in Figure~\ref{fig:acc_vs_ratio_eps_2}.}
\label{table: std_dev_eps_2}
\end{table}

\begin{table}[H]
\centering
\begin{tabular}{cccccccccc}
\toprule
$\npu/n$ & 0.01 & 0.03 & 0.04 & 0.1 & 0.25 & 0.5 & 0.75 & 0.9 & 0.95 \\ \midrule
$\sigma$ & 1.470 & 1.489 & 1.509 & 1.597 & 1.860 & 2.671 & 5.176 & 12.812 & 25.586 \\ \bottomrule
\end{tabular}
\caption{Standard Deviation $\sigma$ of the private noise $v_t$ used in experiment shown in Figure~\ref{fig:acc_vs_ratio_eps_4}~and~\ref{fig:acc_vs_ratio_eps_4_non_warm}.}
\label{table: std_dev_eps_4}
\end{table}

\section{Limitations}
\paragraph{Limitations of Theoretical Results:} Our theoretical results rely on certain assumptions (e.g convex, Lipschitz loss, i.i.d. data for SCO), that may be violated in certain applications. We leave it as future work to investigate the questions considered in this work under different assumptions (e.g. non-convexity, semi-DP SCO with \textit{out-of-distribution} public data). Also, we reiterate that our theoretical results describe the optimal \textit{worst-case} error. It might be the case that the worst-case distributions we construct in our lower bound proofs are unlikely to appear in practice. Thus, another interesting direction for future work would be to analyze ``instance-optimal''~\cite{asi2020near,asi20instanceoptimal} semi-DP error rates. 

\paragraph{Limitations of Experiments:} It is important to note that pre-processing and hyperparameter tuning were not done in a DP manner, since we did not want to detract focus from evaluation of the (fully tuned) semi-DP algorithms.\footnote{See, e.g. \cite{liu2019private, steinkehyper} and the references therein for discussion of DP hyperparameter tuning.} As a consequence, the overall privacy loss for the entire experimental process is higher than the $\eps$ indicated in the plots: the $\eps$ indicated in the plots solely reflects the privacy loss from running the algorithms with fixed hyperparameters and (pre-processed) data.

\end{document}